\newtheorem*{rep@theorem}{\rep@title}
\newcommand{\newreptheorem}[2]{%
	\newenvironment{rep#1}[1]{%
		\def\rep@title{#2 \ref{##1}}%
		\begin{rep@theorem}}%
		{\end{rep@theorem}}}
 \newtheorem{theorem}{Theorem}
 \newtheorem{lemma}[theorem]{Lemma}
 \newtheorem{remark}{Remark}
 \theoremstyle{definition}
\newcommand{\distas}[1]{\mathbin{\overset{#1}{\kern\z@\sim}}}%
\newsavebox{\mybox}\newsavebox{\mysim}
\newcommand{\distras}[1]{%
	\savebox{\mybox}{\hbox{\kern3pt$\scriptstyle#1$\kern3pt}}%
	\savebox{\mysim}{\hbox{$\sim$}}%
	\mathbin{\overset{#1}{\kern\z@\resizebox{\wd\mybox}{\ht\mysim}{$\sim$}}}%
}
\DeclareMathOperator{\EX}{\mathbb{E}}%
\newcommand{\norm}[1]{\left\lVert#1\right\rVert}
\DeclareMathOperator*{\argmin}{arg\,min}
\newcommand\independent{\protect\mathpalette{\protect\independenT}{\perp}}
\def\independenT#1#2{\mathrel{\rlap{$#1#2$}\mkern2mu{#1#2}}}
\DeclarePairedDelimiterX{\infdivx}[2]{(}{)}{%
	#1\;\delimsize\|\;#2%
}
\newcommand{\KLdiv}{\mathrm{KL}\infdivx}
\newcommand{\DKL}{\mathrm{D}\infdivx}
\newcommand{\cW}{\mathcal{W}}
\newcommand{\cA}{\mathcal{A}}
\newcommand{\alg}{A}
\newcommand{\cX}{\mathcal{X}}
\newcommand{\cY}{\mathcal{Y}}
\newcommand{\dvc}{D} %
\def\ps@IEEEtitlepagestyle{%
  \def\@oddfoot{\mycopyrightnotice}%
  \def\@oddhead{\hbox{}\@IEEEheaderstyle\leftmark\hfil\thepage}\relax
  \def\@evenhead{\@IEEEheaderstyle\thepage\hfil\leftmark\hbox{}}\relax
  \def\@evenfoot{}%
}
\def\mycopyrightnotice{%
\fbox{  \begin{minipage}{\textwidth}
  \centering \scriptsize
  Copyright~\copyright~2023 IEEE. Personal use of this material is permitted. However, permission to use this material for any other purposes must be obtained from the IEEE by sending a request to pubs-permissions@ieee.org.
  \end{minipage}
  }
}
\begin{document}
\title{Information-Theoretic Analysis\\ of Minimax Excess Risk}

\author{Hassan Hafez-Kolahi$^*$,
        Behrad Moniri$^*$,
        and~Shohreh Kasaei,~\IEEEmembership{Senior~Member,~IEEE}%

\thanks{Accepted by the IEEE Transactions on Information Theory for publication (DOI: 10.1109/TIT.2023.3249636) and the final version is available on IEEE Xplore. Manuscript received February 14, 2022; revised September 30, 2022; accepted February 8, 2023. \textit{($^*$Hassan Hafez-Kolahi and Behrad Moniri contributed equally to this work.) }}
\thanks{Hassan Hafez-Kolahi and Shohreh Kasaei are with the Image Processing Laboratory (IPL), Department of Computer Engineering, Sharif University of Technology, Tehran, Iran. Behrad Moniri is  with the Department of Electrical and Systems Engineering, University of Pennsylvania, Philadelphia, PA, USA.   (e-mail: hafez@ce.sharif.edu, bemoniri@seas.upenn.edu, kasaei@sharif.edu).}
}

\markboth{IEEE TRANSACTIONS ON INFORMATION THEORY}%
{IEEE TRANSACTIONS ON INFORMATION THEORY}

\maketitle

\begin{abstract}
Two main concepts studied in machine learning theory are generalization gap (difference between train and test error) and excess risk (difference between test error and the minimum possible error). While information-theoretic tools have been used extensively to study the generalization gap of learning algorithms, the information-theoretic nature of excess risk has not yet been fully investigated. In this paper, some steps are taken toward this goal. We consider the frequentist  problem of minimax excess risk as a zero-sum game between the algorithm designer and the world. Then, we argue that it is desirable to modify this game in a way that the order of play can be swapped. We then prove that, under some regularity conditions, if the world and designer can play randomly the duality gap is zero and the order of play can be changed. In this case, a Bayesian problem surfaces in the dual representation. This makes it possible to utilize recent information-theoretic results on minimum excess risk in Bayesian learning to provide bounds on the minimax excess risk. We demonstrate the applicability of the results by providing information theoretic insight on two important classes of problems: classification when the hypothesis space has finite VC-dimension, and regularized least squares.
\end{abstract}

\begin{IEEEkeywords}
excess risk, information-theoretic bounds, minimax theorems, frequentist-Bayesian duality.
\end{IEEEkeywords}

\section{Introduction}
\IEEEPARstart{O}{ne} of the main problems in theoretical machine learning is the study of excess risk, defined as the difference between the test error and the best achievable error if the distribution was known. There is a rich body of literature on analyzing the excess risk of specific algorithms as well as existence results on the minimum possible excess risk \cite{mohri}.
Classically, uniform bounds were used to study the excess risk of Empirical Risk Minimization (ERM) algorithms under the assumption that the best possible hypothesis is from a hypothesis class with finite VC-dimension. In this case, it can be shown that controlling excess risk is equivalent to uniformly controlling the generalization gap of all hypotheses in the class \cite{mohri}.

It is not always possible to analyze the generalization gap of learning algorithms via uniform convergence (e.g., see \cite{ZhangUnderstandingdeeplearning2017} which demonstrates the limitations of such methods on studying deep neural networks).
There are a variety of ways to derive non-uniform generalization bounds. 
Among these approaches, using information-theoretic tools is one of the main methods which has gained much attention in recent years \cite{RussoHowmuchdoes2015, XuInformationtheoreticanalysisgeneralization2017,AsadiChainingMutualInformation2018a,AsadiChainingMeetsChain2019,negrea2019information,steinke_reasoning_2020,hafez2020conditioning,haghifam_sharpened_2020,hellstrom2020nonvacuous,hellstrom2020generalization}.
In this approach, usually, a form of mutual information between the learned hypothesis and the dataset appears in the bound on the generalization gap. 
Intuitively,
this  means that a learning algorithm that has a small dependence on the dataset can not overfit to the training set beyond a certain degree. It is shown that many of the classic bounds on the generalization gap of learning algorithms can be derived using such information-theoretic tools (e.g., see \cite{steinke_reasoning_2020}). Moreover, this approach takes into account the details of the learning algorithm and its relation with the data generating distribution, which can potentially lead to much better bounds compared to uniform bounds \cite{AsadiChainingMutualInformation2018a,bu_tightening_2019}.

Most of the recent studies in this track focused on analyzing the generalization gap and not the excess risk. While there is a direct relationship between the generalization gap and the excess risk in ERM, the situation is not that simple when dealing with non-ERM algorithms and non-uniform bounds (see Section~\ref{sec:gengap_excess_NFLT}). 

Recently, information-theoretic tools have been utilized to study the minimum excess risk in the Bayesian learning \cite{xu2020minimum,xu2020continuity, hafez2021rate}.
In this setting, a variable $W$ is considered to define the distribution of data, and a prior distribution $P_W$ is considered on $W$. Minimum excess risk in Bayesian learning is defined as the \emph{expected} difference between the minimum risk of learning from data compared to the best risk achievable when the parameter $W$ is known. These recent results
extend the classic results of sequential prediction \cite{merhav1998universal,davisson1973universal,haussler1997mutual} to supervised setting with general subgaussian loss. They also utilize a refined treatment which yields bounds on the error of the last sample (test sample) when the training set is given (in sequential prediction, the accumulated loss on a sequence of samples is studied using universal source coding tools).

It should be noted that in the Bayesian setting, the distributions are assumed to be known, which makes the setting different from the usual minimax approach of the frequentist learning in which the worst distribution is considered when providing bounds.
As such, the frequentist approach is considered to be much more challenging \cite{efron1998ra}. In this paper, we
propose a framework which provides an information theoretic insight for the minimax setting as well.
One of the main contributions of the paper is to 
provide conditions under which the minimax game of excess risk has a value; i.e., the order of players can be swapped and the optimal gained value remains the same for both of them. This results in an equal dual problem which has a Bayesian flavor. 
This allows us to use the recent information-theoretic bounds on minimum excess risk in Bayesian setting to study the problem in the minimax setting.
Moreover, we propose an algorithm which can asymptotically achieve this optimal minimax value (in an intermediate setting between Bayesian and frequentist settings).

Finally, some applications are presented demonstrating the generality of this framework to study learning problems.
In particular, we focus on two 
of the most studied settings in machine learning;
i) classification when the hypothesis space has finite VC-dimension
and ii) regression with (kernel) regularized least squares.
It is demonstrated that the proposed framework provides simple information theoretic 
insight on
the learning in these setting (which is novel to the best of our knowledge).

The proofs of the theorems are provided in the appendix.

\section{Notations and preliminaries}
Let $(\mathcal{X}, d_{\mathcal{X}})$, $(\mathcal{Y}, d_{\mathcal{Y}})$, and $(\mathcal{Z}, d_{\mathcal{Z}})$ be metric spaces. Capital letters $X$, $Y$, $Z$ %
are used for random variables taking values in measurable spaces $(\mathcal{X}, \mathcal{B}(X))$, $(\mathcal{Y}, \mathcal{B}(Y))$, $(\mathcal{Z}, \mathcal{B}(\mathcal{Z}))$, where $\mathcal{B}$ denotes the Borel $\sigma$-algebra.
Lower case letters $x$, $y$ and $z$ represent realizations of these random variables.
Super scripts are used for conditional distributions and expectations; e.g. $P_X^y$ is the conditional distribution of $X$ after observation $Y=y$ and $\EX_X^y[X]$ is its expectation. The KL-divergence of $Q_X$ to $P_X$ %
is denoted by $\KLdiv{P_X}{Q_X}$.  Mutual information is defined as $I(X;Y)=\KLdiv{P_{XY}}{P_X\otimes{P_Y}}$ where $P_X$ and $P_Y$ are marginal distributions of $P_{XY}$. The conditional mutual information is defined as $I(X;Y|Z)=\EX_Z[I^Z(X;Y)]$ in which for all $z$, $I^z(X;Y)=\KLdiv{P_{XY}^z}{P_X^z\otimes P_Y^z}$.
All logarithms are in natural base and all information-theoretic quantities are in nats.

In a machine learning problem, an input-output pair is represented by $Z=(X,Y)\in \mathcal{Z}=\mathcal{X}\times\mathcal{Y}$. A distribution $P_Z^w$ from the parameterized family of distributions $\mathcal{M}=\{P_Z^w | w \in \mathcal{W}\}$ is assumed to be describing the data generating process. The value of $w$ is unknown. However, a training set  of $n$ i.i.d. pairs $z^n=\{(x_i,y_i)\}_{i=1}^n$
is observed along with a test sample $x$. The goal is to estimate a value $\hat{y}\in\mathcal{Y}$ which minimizes the expected risk $\EX^{wx}[\ell(\hat{y},Y)]$, where $\ell:\mathcal{Y}\times\mathcal{Y} \to \mathbb{R}$ is the loss function. A learning algorithm is represented as a function $\alg:\mathcal{Z}^n\times \mathcal{X} \to \mathcal{Y}$
which receives $n$ training pairs $z^n$ and a test input $x$ and generates $\hat{y}=\alg(z^n,x)$. The function $\hat{f}(.)=\alg(z^n,.)$ is called the learned hypothesis. If $w$ was known, the best possible function would have been
\begin{equation*}
f^*_w =\argmin_{f\in \mathcal{F}} \EX^{w}_{XY}[\ell(f(X),Y)],
\end{equation*}
where $\mathcal{F}$ is the set of all measurable functions $f:\mathcal{X}\to\mathcal{Y}$. Through out the paper we assume that such optimizations are achieved. Under the assumption that the best function belongs to some $\mathcal{H}\subset\mathcal{F}$, the  optimization can be restricted to $\mathcal{H}$. 
The excess risk of a learning algorithm $\alg$ with respect to a distribution $P_Z^w$ is defined as 
\begin{align*}
e(\alg,w) =\EX^{w}_{Z^nXY}[\ell(\alg(Z^n,&X),Y)]\nonumber\\&-\EX^{w}_{XY}[\ell(f^*_w(X),Y)].
\end{align*}
With some misuse of notations, we also use the notation $e(\hat{f},w)$ to represent the excess risk of the learned hypothesis $\hat{f}$.
The (metric) space of all learning algorithms $\alg:\mathcal{Z}^n\times \mathcal{X} \to \mathcal{Y}$ is denoted as $\cA$. 
A randomized learning strategy is characterized by a distribution $P_\cA$ on $\cA$.
To use the randomized learning strategy, first, a sample $\alg\distas{}P_\cA$ is drawn, and then it is used to generate the (random) hypothesis $\hat{f}(.)=\alg(z^n,.)$.
A randomized learning algorithm can equivalently be described either by $P_\cA$ or the conditional distribution $P_{\hat{f}}^{z^n}$. While the latter is more common in the literature, the former more directly captures the game-theoretic nature of a mixed strategy in minimax game (as will be evident in the next section).

\section{Game of excess risk and dilemma of playing order}

\label{sec:excess_risk_fair_game}
In frequentist learning, usually a minimax bound is desired in which a learning algorithm performs well regardless of the distribution used. This can be described by the following minimax problem
\begin{equation}
\label{eq:freq_minimax_game_deterministic}
\adjustlimits \inf_{\alg \in \cA} \sup_{w\in \cW} e(\alg,w).
\end{equation}
This problem can be viewed as a two-player zero-sum game between algorithm "\emph{designer}" and the "\emph{world}". The designer plays first by selecting an algorithm $\alg$ and the world selects the worst distribution $P_Z^w$ for the particular algorithm which was chosen. 

In game theory, a game is said to have a \emph{value}, if the order of the players can be swapped (in which the value of the game is the (shared) solution of the optimizations). 
Note that this is not always the case. Actually,  in general the second player is in an advantage: she can fine tune her move after observing the move played by the first player.
The standard minimax game of excess risk presented in 
(\ref{eq:freq_minimax_game_deterministic})
is one of the (extreme) cases in which the order of play can not be changed:
if the world plays first, the designer selects the fixed hypothesis $f^*_w$, always achieving the minimum possible excess risk of $0$ (recall that $\mathcal{A}$ is the set of all learning algorithms, and here it is enough to select a learning algorithm $\alg$ which always generates $f^*_W$; i.e. $\alg(z^n,.)=f^*_W(.); \forall z^n \in \mathcal{Z}^n$ ).

But note that the original minimax game in which the world plays after the designer is not completely desirable neither. The reason is that in a usual learning problem the world 
selects a distribution independent of the algorithm used by the designer. But in standard minimax setting presented in optimization
\eqref{eq:freq_minimax_game_deterministic},
it is assumed that
the world will take into account the selected learning algorithm and chose the worst possible distribution for that particular algorithm. This is not a realistic scenario in most applications. That's probably the reason B. Efron states that "there couldn’t be a more pessimistic and defensive theory than minimax" \cite{efron1998ra}.

These arguments demonstrate a dilemma about playing order in the standard minimax formulation
of optimization \eqref{eq:freq_minimax_game_deterministic}. Settling this dispute between the world and designer (about which one to play first) is our main motivation in this section and one of the main contributions of the paper. To do that we try to extend the game 
in a natural way in which the order of the play does not matter.

In the previous century, there has been a variety of results in game theory about sufficient conditions under which the order of the play can be changed in a zero-sum game.
But, it seems that the most appropriate result to start our discussion is the the original work of von Neumann which started this field.
The landmark result of von Neumann states that: when possible actions of players are finite sets, if players are allowed to make random decisions (called mixed strategy in game theory),
the order of the play can be changed \cite{neumann1928theorie,morgenstern1953theory}.

For now, consider that the set of actions are finite.
In the setting of learning theory, allowing mixed strategies means to consider the optimization
\begin{equation}
\label{eq:freq_minimax_game_random}
\adjustlimits\inf_{P_\cA} \sup_{P_W} \EX_{\alg\sim P_{\cA}}\EX_{W\sim P_W}[ e(\alg,W)],
\end{equation}
in which $P_\cA$ and $P_W$ are distributions on $\cA$ and $\mathcal{W}$ respectively. Note that allowing a distribution on algorithms is basically the same as allowing a randomized algorithm $P_{\hat{f}}^{Z^n}$. Also note that allowing the world to select a distribution on $W$ does not change the primal minimax problem (\ref{eq:freq_minimax_game_deterministic}). On one hand, by selecting a delta measure as $P_W$, it can play as in optimization (\ref{eq:freq_minimax_game_deterministic}). On the other hand, it is easy to see that the second player always has a deterministic optimal strategy. 
The benefit of allowing a prior $P_W$ is that
(under some conditions), 
the world can play first (dual optimization) and still achieve the same results. An interesting property of this dual optimization is that when the world selects $P_W$, the designer (as the second player) will face a Bayesian problem for which $P_W$ is known. As such, the dual problem is much easier to study: it's enough to show existence of an algorithm for each specific $P_W$ (this will be used in the proof of Theorem~\ref{theorem:info_bounds_after_minimax}). %
Thus, such a result would also 
describe
a duality between frequentist setting and the Bayesian viewpoint under least favorable prior (please see Section~\ref{sec:bayes-freq-duality} for more discussions about this).

Unfortunately, despite the result of von Neumann on finite sets of actions, 
swapping the order of infimum and supremum in a randomized game is not always possible. Actually, there are quite simple games in which the action space is a closed interval in $\mathbb{R}$ but the second player always wins unless some constraints are put on the problem
(e.g. see \cite{parthasarathy1970games} and Section~26.1 of \cite{rakhlin2014lecturenotes}). In the previous century, much work was done on generalizing the minimax theorems (for survey you can see \cite{simons1995minimax,frenk2004equivalent}). However, it is proved that some forms of (weakly-) compactness and continuity are necessary for the minimax theorem to hold for general continuous sets \cite{simons1989you}. The game of "excess risk" is no exception to this rule. In the next section we will try to find general conditions under which the 
minimax theorem holds for the excess risk.
We will demonstrate the 
by putting some (rather weak) conditions on the action set of the world, it is possible to have a minimax result.

\subsection{Minimax theorems for excess risk}
As mentioned before, since the set of actions in the original game of excess risk (Eq.~\eqref{eq:freq_minimax_game_deterministic}) are not finite, the original von Neumann's minimax theorem can not be used to study it. Actually, not only the actions sets are not finite, but they are quite complicated sets with no natural structure (e.g. they do not poses a simple  finite dimensional linear structure). 
As such, it is necessary to chose an appropriate topology on these sets and at the same time use a general form of minimax Theorem which is capable of handling such abstract structures. 
The next theorem achieves this goal. 

\begin{theorem}
\label{theorem:minimax_general_compact_measures}
Let $\mathcal{W}$ be a compact set of measures on $\mathcal{Z}$ equipped with the Total Variation distance, $\Delta(\mathcal{W})$ be the set of all Borel probability measures on $\mathcal{W}$, and $\Delta(\cA)$ be the set of all probability measures on the space of algorithms. Consider a general bounded loss $l: \mathcal{Y} \times \mathcal{Y} \to [0, B]$. Under these conditions, we have
\begin{align*}
\adjustlimits \sup_{P_W \in \Delta(\cW)} \inf_{P_\cA \in \Delta(\cA)}  &\EX_{\alg,W}  [e(\alg,W)]\\&=
\adjustlimits \inf_{P_\cA \in \Delta(\cA)} \sup_{P_W \in \Delta(\cW)} \EX_{\alg,W} [e(\alg,W)],\nonumber
\end{align*}
in which for a probability measure $W \in \mathcal{W}$ and algorithm $\alg$, the excess risk is defined as
\begin{align}
\label{eq:ER}
e(\alg,W) =\EX_{Z^nZ \sim W^{\otimes n+1}}[&\ell(\alg(Z^n,X),Y)]\nonumber\\&-\EX_{Z \sim W}[\ell(f^*(X),Y)].
\end{align}
\end{theorem}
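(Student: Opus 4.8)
The plan is to recognize the claimed equality as an instance of a general minimax theorem of Sion's type applied to the bilinear payoff $F(P_\cA,P_W)=\EX_{\alg\sim P_\cA}\EX_{W\sim P_W}[e(\alg,W)]$ on the product of the convex sets $\Delta(\cA)$ and $\Delta(\cW)$. Since $F$ is affine (indeed linear) separately in each of its two arguments, it is automatically quasi-convex in $P_\cA$ and quasi-concave in $P_W$, so the whole difficulty reduces to establishing the right compactness and semicontinuity properties. Concretely, I would invoke the version of Sion's theorem in which the maximizing set is compact: if $\Delta(\cW)$ is convex and compact, $F(P_\cA,\cdot)$ is upper semicontinuous and concave on $\Delta(\cW)$ for every $P_\cA$, and $F(\cdot,P_W)$ is lower semicontinuous and convex on $\Delta(\cA)$ for every $P_W$, then $\inf_{P_\cA}\sup_{P_W}F=\sup_{P_W}\inf_{P_\cA}F$, which is exactly the asserted identity (weak duality gives one inequality for free, and the theorem supplies the reverse).

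First I would settle compactness. Because $\cW$ is a compact metric space under the total variation distance, the set $\Delta(\cW)$ of Borel probability measures on $\cW$, endowed with the weak (narrow) topology, is compact and metrizable by Prokhorov's theorem. Convexity of both $\Delta(\cW)$ and $\Delta(\cA)$ is immediate.

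The key technical step, and what I expect to be the main obstacle, is the continuity of the excess risk in the world's move. I would prove that, for the bounded loss $\ell$ taking values in $[0,B]$, the map $W\mapsto e(\alg,W)$ is Lipschitz with respect to the total variation distance, \emph{uniformly} in $\alg$. For the first term this follows from the product-measure estimate $\norm{W^{\otimes n+1}-(W')^{\otimes n+1}}_{\mathrm{TV}}\le (n+1)\norm{W-W'}_{\mathrm{TV}}$ together with $\abs{\int g\,dW^{\otimes n+1}-\int g\,d(W')^{\otimes n+1}}\le B\,\norm{W^{\otimes n+1}-(W')^{\otimes n+1}}_{\mathrm{TV}}$ for the bounded integrand $g=\ell(\alg(z^n,x),y)$. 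For the Bayes term I would note that $W\mapsto \EX_{Z\sim W}[\ell(f(X),Y)]$ is $B$-Lipschitz in $\norm{\cdot}_{\mathrm{TV}}$ for each fixed $f$, and that $W\mapsto \EX_{Z\sim W}[\ell(f^*_W(X),Y)]=\inf_{f\in\mathcal F}\EX_{Z\sim W}[\ell(f(X),Y)]$ is an infimum of a family of uniformly $B$-Lipschitz functions, hence itself $B$-Lipschitz; this is precisely the place where the $W$-dependence of the Bayes-optimal predictor $f^*_W$ has to be controlled. Consequently $e(\alg,\cdot)$ is a bounded continuous function on the compact metric space $\cW$, so $P_W\mapsto \int e(\alg,W)\,dP_W(W)$ is continuous in the narrow topology, and, by dominated convergence (the integrand is bounded by $B$ uniformly in $\alg$), so is $P_W\mapsto F(P_\cA,P_W)$. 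This delivers the upper semicontinuity (in fact continuity) and concavity of $F(P_\cA,\cdot)$ on $\Delta(\cW)$.

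It remains to handle the designer's side, where no compactness is needed. Since $F(\cdot,P_W)$ is affine it is convex, and I would equip $\Delta(\cA)$ with the weak topology generated by the family of linear functionals $\{P_\cA\mapsto F(P_\cA,P_W):P_W\in\Delta(\cW)\}$, under which each $F(\cdot,P_W)$ is continuous and hence lower semicontinuous while $\Delta(\cA)$ remains a convex subset of a locally convex space. With all hypotheses verified, a direct application of Sion's minimax theorem yields the equality, with the outer supremum over $P_W$ attained. The subtle points to watch are the reconciliation of the two topologies on measures over $\cW$ — continuity of $e(\alg,\cdot)$ is proved in total variation while compactness of $\Delta(\cW)$ is used in the narrow topology, which is legitimate precisely because a total-variation-continuous bounded function is a valid narrow-topology test function — and the uniformity in $\alg$ that lets the continuity survive the averaging over algorithms.
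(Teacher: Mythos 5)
Your proposal is correct and follows essentially the same route as the paper: a general minimax theorem (you use Sion's where the paper uses Theorem 7.1 of \cite{PredLearningGames}, but the hypotheses verified are the same), compactness of $\Delta(\cW)$ in the weak/Prokhorov topology via compactness of $(\cW,d_{\mathrm{TV}})$, and $B(n+2)$-Lipschitz continuity of $e(\alg,\cdot)$ in total variation obtained from tensorization for the first term and the infimum-of-uniformly-Lipschitz-functions argument for the Bayes term, followed by bounded convergence. Your extra care about the topology on $\Delta(\cA)$ is a harmless addition needed only because Sion's statement asks for lower semicontinuity on the non-compact side, which the paper's chosen minimax theorem does not.
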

In this theorem, with some misuse of notation, $ W$ represents the distribution generating $Z$ (i.e. $P_Z^W$).
In order to prove this result, a general minimax theorem is used which can be found in \cite[Theorem (7.1)]{PredLearningGames} (see \ref{theorem:general-minimax}). In general, for the minimax theorem to hold, one needs both compactness and continuity of the loss on one of the action sets. Here, we have chosen to constrain the set of distributions $\mathcal{W}$ (action set of the world). Now to define the topology of this set, it is important to note that compactness is in odds with continuity of the functions: when one strengthens the topology, while it would be easier to have compact sets, it gets harder to have continuous functions. The topology defined by total variation distance provides a good trade off between these qualities. The recent result presented in \cite[Theorem 1]{xu2020continuity} is used to prove the continuity of excess risk with respect to total variation distance.

A minimax theorem for the case when $\Delta(\mathcal{W})$ is compact in the Wasserstein distance is also proved in Appendix \ref{sec:wasserstein}. 

The previous theorem does not require the measures to be from a parametric set of distributions. If one restrict the attention to the cases where the density exists and it is from a parametric family with parameter $W\in \mathcal{W}$, it is possible to achieve the following result.

\begin{theorem}
\label{theorem:minimax_used_with_xu2020}
Let $(\cW, d_\cW)$ be a compact metric space and $(\cA, d_{\cA})$ be a metric space of algorithms. Let $\Delta(\cW)$ and $\Delta(\cA)$ be the set of all Borel probability measures on $(\cW, d_{\cW})$ and $(\cA, d_{\cA})$ respectively. Under the conditions that $\cX$ and $\cY$ are bounded sets, the loss function is bounded, the data generating distribution $P_{Z|W=w}$ is absolutely continuous with respect to the Lebesgue measure for all $w \in \cW$, and its density $f(x, y | w)$ is bounded and continuous in $w$ for all $(x, y) \in \cX \times \cY$, we have
\begin{align}
\label{eq:minimax_duality}
\adjustlimits \sup_{P_W \in \Delta(\cW)} \inf_{P_\cA \in \Delta(\cA)}  &\EX_{\alg W}  [e(\alg,W)]\\&=
\adjustlimits \inf_{P_\cA \in \Delta(\cA)} \sup_{P_W \in \Delta(\cW)} \EX_{\alg,W} [e(\alg,W)]\nonumber.
\end{align}
\end{theorem}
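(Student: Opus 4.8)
The plan is to deduce this theorem from Theorem~\ref{theorem:minimax_general_compact_measures} by transporting the parametric problem over $\cW$ to the corresponding problem over the induced family of data-generating measures, equipped with the Total Variation distance. Concretely, I would consider the map $\phi : \cW \to \mathcal{P}(\mathcal{Z})$, $w \mapsto P_Z^w$, where $\mathcal{P}(\mathcal{Z})$ denotes the probability measures on $\mathcal{Z}$ endowed with $d_{\mathrm{TV}}$. Since the excess risk $e(\alg,w)$ depends on $w$ only through $P_Z^w$ (it is assembled from expectations under $W^{\otimes n+1}$ and $W$, and the optimal $f^*_w$ itself depends only on $P_{XY}^w$), it factors as $e(\alg,w)=\tilde{e}(\alg,\phi(w))$, where $\tilde{e}$ is precisely the measure-indexed excess risk of \eqref{eq:ER}. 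Thus it suffices to (i) show $\tilde{\cW}:=\phi(\cW)$ is a compact subset of $(\mathcal{P}(\mathcal{Z}),d_{\mathrm{TV}})$, so that Theorem~\ref{theorem:minimax_general_compact_measures} applies to it, and (ii) show that the two outer suprema, one over $\Delta(\cW)$ and one over $\Delta(\tilde{\cW})$, coincide.

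First I would establish the continuity of $\phi$. Fix a sequence $w_k \to w$ in $(\cW,d_\cW)$. By hypothesis the density $f(x,y\mid w)$ is continuous in $w$ for every $(x,y)$, so $f(x,y\mid w_k)\to f(x,y\mid w)$ pointwise; since each $f(\cdot\mid w_k)$ is a probability density, Scheff\'e's lemma gives $\int_{\cX\times\cY}\lvert f(x,y\mid w_k)-f(x,y\mid w)\rvert\,dx\,dy \to 0$, i.e. $d_{\mathrm{TV}}(P_Z^{w_k},P_Z^w)\to 0$. The boundedness of $\cX\times\cY$ and of the density guarantee that all the measures are absolutely continuous with integrable densities, so $d_{\mathrm{TV}}$ is the usual half $L^1$-distance and this computation is legitimate. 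Hence $\phi$ is continuous, and as the continuous image of the compact space $\cW$ the set $\tilde{\cW}=\phi(\cW)$ is compact in $d_{\mathrm{TV}}$. Together with the bounded-loss assumption this is exactly the hypothesis of Theorem~\ref{theorem:minimax_general_compact_measures}, applied with its set of measures taken to be $\tilde{\cW}$, which therefore yields
\begin{equation*}
\adjustlimits \sup_{\tilde P \in \Delta(\tilde{\cW})} \inf_{P_\cA \in \Delta(\cA)} \EX_{\alg, M}[\tilde{e}(\alg,M)] = \adjustlimits \inf_{P_\cA \in \Delta(\cA)} \sup_{\tilde P \in \Delta(\tilde{\cW})} \EX_{\alg, M}[\tilde{e}(\alg,M)],
\end{equation*}
where $M\sim\tilde P$ ranges over $\tilde{\cW}$.

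It remains to identify each side of this identity with the corresponding side of \eqref{eq:minimax_duality}. For any prior $P_W\in\Delta(\cW)$ and any $P_\cA$, the pushforward $\phi_* P_W \in \Delta(\tilde{\cW})$ satisfies $\EX_{\alg,W}[e(\alg,W)]=\EX_{\alg,M}[\tilde{e}(\alg,M)]$ by change of variables together with the factorization $e=\tilde{e}\circ(\mathrm{id}_\cA\times\phi)$; so the objective over $\Delta(\cW)$ is the pullback of the objective over $\Delta(\tilde{\cW})$. To see the suprema agree it is enough to check that $\phi_*:\Delta(\cW)\to\Delta(\tilde{\cW})$ is onto (both spaces carrying the weak topology, in which $\phi_*$ is continuous and affine). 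Its image is then a compact convex subset of $\Delta(\tilde{\cW})$; it contains every Dirac $\delta_m=\phi_*\delta_w$, choosing any $w$ with $\phi(w)=m$, which exists since $\phi$ is onto $\tilde{\cW}$; and $\Delta(\tilde{\cW})$ is the closed convex hull of its Diracs. Hence $\phi_*$ is surjective, the outer suprema over $\Delta(\cW)$ and $\Delta(\tilde{\cW})$ coincide for each fixed $P_\cA$, and likewise after taking $\inf_{P_\cA}$, so \eqref{eq:minimax_duality} follows.

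I expect the main obstacle to be the first step: verifying that pointwise continuity of the densities upgrades to continuity in Total Variation, and hence to $d_{\mathrm{TV}}$-compactness of $\tilde{\cW}$, which is what makes the strong compact-in-TV hypothesis of Theorem~\ref{theorem:minimax_general_compact_measures} available from the weaker compact-parameter, continuous-density hypotheses assumed here. The bounded-domain and bounded-density assumptions are exactly what render Scheff\'e's lemma applicable and keep all excess-risk integrals finite; once continuity of $\phi$ is in hand, the transfer of the minimax identity back to $\Delta(\cW)$ via surjectivity of the pushforward is a routine convexity argument.
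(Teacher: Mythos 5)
Your proposal is correct, but it takes a genuinely different route from the paper. The paper proves Theorem~\ref{theorem:minimax_used_with_xu2020} directly, by verifying the hypotheses of the general minimax theorem (Theorem~\ref{theorem:general-minimax}) on $\Delta(\cW)$ itself: it equips $\Delta(\cW)$ with the Prokhorov metric induced by $d_\cW$ to get compactness, and establishes continuity of $P_W \mapsto \EX_W[e(a,W)]$ by writing $e(a,w)$ as an integral of a bounded integrand against the product density $f(x,y|w)\prod_i f(x_i,y_i|w)$ and invoking the bounded convergence theorem together with the assumed continuity of the density in $w$. You instead reduce the theorem to Theorem~\ref{theorem:minimax_general_compact_measures}: pointwise continuity of the densities plus Scheff\'e's lemma gives $d_{\mathrm{TV}}$-continuity of $w \mapsto P_Z^w$, hence $d_{\mathrm{TV}}$-compactness of the image family, and the identity is transported back through the surjectivity of the pushforward $\phi_*:\Delta(\cW)\to\Delta(\tilde\cW)$ (your compact-convex-hull-of-Diracs argument for surjectivity is sound, as is the observation that $e(\alg,w)$ factors through $P_Z^w$ because the optimal risk is an infimum value and hence well defined on the level sets of $\phi$). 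What each buys: the paper's proof is self-contained and structurally parallel to its proof of Theorem~\ref{theorem:minimax_general_compact_measures}, whereas yours makes explicit that Theorem~\ref{theorem:minimax_used_with_xu2020} is essentially a corollary of Theorem~\ref{theorem:minimax_general_compact_measures} --- consistent with the paper's own generality ranking of the two results --- at the cost of the extra (standard but nontrivial) pushforward-surjectivity step. A side benefit of your route is that Scheff\'e's lemma needs only pointwise convergence of probability densities, so the boundedness of the density and of $\cX\times\cY$ plays no role in the continuity step, whereas the paper's bounded-convergence argument genuinely uses those assumptions.
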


\subsection{When do minimax theorems hold?}

In Theorem \ref{theorem:minimax_general_compact_measures}, we considered a set of probability measure $\cW$ on a metric space $\mathcal{Z}$ and assumed that $\cW$ is compact with respect to the Total Variation distance. In this section, we present examples of problems in which such constraints hold.

Consider the realizable binary classification problem with a hypothesis class $\mathcal{F}$ of functions from $\cX$ to $\cY = \{0, 1\}$.  Let $\mathbb{P}_{P_X, f} = P_X \otimes P_Y^X(f)$ such that $P_Y^{X=x}(f) = \delta(f(x))$ and define $\mathcal{W}$ as
\begin{equation*}
\cW = \{\mathbb{P}_{P_X, f}\, |\, P_X \in \mathcal{P},  f \in \mathcal{F}\}.
\end{equation*}

For this class of problems, we can prove the following theorem.

\begin{theorem}
\label{theorem:compactness_W_VC}
Assume that $\mathcal{P}$ is a compact set of distributions in Total Variation distance, and $\mathcal{F}$ is a compact subset of $L^2(\mathrm{Leb})$. Also assume that every $P_X \in \mathcal{P}$ is absolutely continuous and their density functions $p_X$ are uniformly bounded by a constant $c$. Under these conditions, the set
\begin{equation*}
\cW = \{\mathbb{P}_{P_X, f}\, |\, P_X \in \mathcal{P},  f \in \mathcal{F}\}
\end{equation*}
is compact in Total Variation.
\end{theorem}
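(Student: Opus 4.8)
The plan is to exhibit $\cW$ as the continuous image of a compact set, so that its compactness follows at once from the fact that continuous images of compact sets are compact. Concretely, consider the product space $\mathcal{P} \times \mathcal{F}$, where $\mathcal{P}$ carries the Total Variation metric and $\mathcal{F}$ carries the $L^2(\mathrm{Leb})$ metric. By hypothesis both factors are compact, hence so is the product (a finite product of compact metric spaces is compact). Define the map $\Phi : \mathcal{P} \times \mathcal{F} \to \cW$ by $\Phi(P_X, f) = \mathbb{P}_{P_X, f}$, which is surjective onto $\cW$ by definition of $\cW$. It then suffices to prove that $\Phi$ is continuous into $(\cW, d_{\mathrm{TV}})$; compactness of $\cW$ is then immediate. (Equivalently, one may argue sequentially: from any sequence in $\cW$ extract a subsequence along which the $X$-marginals converge in Total Variation and the labelling functions converge in $L^2$, and show the corresponding joint laws converge.)

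The heart of the argument is thus a single quantitative estimate controlling the Total Variation distance of two elements of $\cW$ by the distances of their ingredients. Writing $d_{\mathrm{TV}}$ for the Total Variation distance, I would route the bound through an intermediate measure that shares one coordinate with each endpoint:
\begin{align*}
d_{\mathrm{TV}}(\mathbb{P}_{P_X, f}, \mathbb{P}_{Q_X, g}) &\le d_{\mathrm{TV}}(\mathbb{P}_{P_X, f}, \mathbb{P}_{P_X, g}) \\ &\quad + d_{\mathrm{TV}}(\mathbb{P}_{P_X, g}, \mathbb{P}_{Q_X, g}),
\end{align*}
and then handle the two terms separately, one isolating the dependence on the labelling function and the other the dependence on the marginal.

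For the second term, with the labelling function $g$ fixed, both joint laws are the pushforward of $P_X$ (resp.\ $Q_X$) under the injective measurable embedding $x \mapsto (x, g(x))$; since this embedding has a measurable left inverse (the projection onto $\cX$), the pushforward preserves Total Variation, so $d_{\mathrm{TV}}(\mathbb{P}_{P_X, g}, \mathbb{P}_{Q_X, g}) = d_{\mathrm{TV}}(P_X, Q_X)$. For the first term, the laws $\mathbb{P}_{P_X, f}$ and $\mathbb{P}_{P_X, g}$ differ only over the set where $f$ and $g$ disagree, and testing against the graph of $f$ gives $d_{\mathrm{TV}}(\mathbb{P}_{P_X, f}, \mathbb{P}_{P_X, g}) = P_X(\{x : f(x) \ne g(x)\})$. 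This is where the remaining hypotheses enter: since $f, g$ take values in $\{0, 1\}$ one has the pointwise identity $\mathbbm{1}[f \ne g] = \abs{f - g}^2$, and since each $P_X \in \mathcal{P}$ is absolutely continuous with density bounded by $c$,
\begin{align*}
P_X(\{f \ne g\}) = \int \abs{f - g}^2 \, dP_X &\le c \int \abs{f - g}^2 \, d\mathrm{Leb} \\ &= c\, \norm{f - g}_{L^2(\mathrm{Leb})}^2 .
\end{align*}
Combining the two pieces yields $d_{\mathrm{TV}}(\mathbb{P}_{P_X, f}, \mathbb{P}_{Q_X, g}) \le d_{\mathrm{TV}}(P_X, Q_X) + c\,\norm{f - g}_{L^2(\mathrm{Leb})}^2$, which establishes continuity of $\Phi$ and finishes the proof.

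I expect the main obstacle to be precisely the first term, i.e.\ converting $L^2$ closeness of the labelling functions into Total Variation closeness of the joint laws. Total Variation is in general far too strong to be dominated by an $L^2$ distance, and the step works only because the functions are $\{0,1\}$-valued (so the disagreement set is exactly $\{\abs{f-g}^2 = 1\}$) and, crucially, because the uniform density bound $c$ lets one dominate the $P_X$-measure of the disagreement set by its Lebesgue measure. Without boundedness of the densities this fails, since $P_X$ could concentrate on a Lebesgue-small disagreement set. A minor point to verify is that the $L^2$ limit $f$ lies in $\mathcal{F}$ and is therefore again $\{0,1\}$-valued, and that passing to Lebesgue-a.e.\ representatives is harmless, which holds exactly because every $P_X \in \mathcal{P}$ is absolutely continuous.
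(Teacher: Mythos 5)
Your proposal is correct and follows essentially the same route as the paper: both realize $\cW$ as the continuous image of the compact product $\mathcal{P}\times\mathcal{F}$ and establish continuity by bounding $d_{\mathrm{TV}}(\mathbb{P}_{P_X^1,f^1},\mathbb{P}_{P_X^2,f^2})$ by the marginal Total Variation distance plus the $P_X$-measure of the disagreement set $\{f^1\neq f^2\}$, which the uniform density bound and the $\{0,1\}$-valuedness convert into $c\,\norm{f^1-f^2}_{L^2}^2$. The only cosmetic difference is that you obtain the key estimate via a triangle inequality through the intermediate law $\mathbb{P}_{P_X,g}$ (giving a single disagreement term), whereas the paper partitions an arbitrary event into the pieces where the two labelling functions agree or disagree and picks up two such terms; the conclusion is the same.
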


Hence, we have shown that in a binary classification problem, under the conditions that the set of distributions 
    $\mathcal{P}$ on $\cX$ is compact in Total Variation, and is absolutely continuous with uniformly bounded densities,
    and the hypothesis class $\mathcal{F}$ is a compact subset of $L^2(\mathrm{Leb})$,
according to Theorem \ref{theorem:compactness_W_VC} and Theorem \ref{theorem:minimax_general_compact_measures}, the minimax equality holds. 

The following Theorem, proves that a large number of parametric families are compact in Total Variation.

\begin{theorem}
\label{theorem:TV_compactness_general}
Let $\cW = \{P_\gamma\; | \; \gamma \in \Gamma\}$ be a parametric family of absolutely continuous probability measures on $\mathbb{R}^n$. Assume that $(\Gamma, d_\Gamma)$ is a compact metric space, the density function $f(x|\gamma)$ is continuous in $\gamma$ for any $x \in \mathbb{R}^n$, and there exists 
an integrable function $g:\mathbb{R}^n \to \mathbb{R}$ such that $|f (x|\gamma_1) - f (x|\gamma_2)| \leq g(x)$ for any $\gamma_1, \gamma_2 \in \Gamma$. Under these conditions, $\mathcal{W}$ is compact in the Total Variation distance.
\end{theorem}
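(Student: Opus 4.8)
The plan is to exhibit $\mathcal{W}$ as the continuous image of the compact space $\Gamma$ and then invoke the standard fact that the continuous image of a compact set is compact. Concretely, I would consider the map $\Phi:(\Gamma,d_\Gamma)\to(\mathcal{W},d_{TV})$ defined by $\Phi(\gamma)=P_\gamma$, where $d_{TV}$ is the Total Variation metric on probability measures on $\mathbb{R}^n$. Since $\mathcal{W}=\Phi(\Gamma)$ by definition, it suffices to prove that $\Phi$ is continuous; compactness of $\mathcal{W}$ then follows immediately from the assumed compactness of $\Gamma$.

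The first step is to rewrite the Total Variation distance in a form amenable to analysis. Because every $P_\gamma$ is absolutely continuous with density $f(\cdot|\gamma)$, for any $\gamma_1,\gamma_2\in\Gamma$ we have, up to the usual normalizing constant,
\[
d_{TV}(P_{\gamma_1},P_{\gamma_2})=\frac{1}{2}\int_{\mathbb{R}^n}\abs{f(x|\gamma_1)-f(x|\gamma_2)}\,dx .
\]
Thus proving continuity of $\Phi$ reduces to controlling this $L^1$ difference of densities as the parameter varies.

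Since $(\Gamma,d_\Gamma)$ is a metric space, continuity is equivalent to sequential continuity, so I would fix $\gamma_0\in\Gamma$ and an arbitrary sequence $\gamma_k\to\gamma_0$, and set $h_k(x)=\abs{f(x|\gamma_k)-f(x|\gamma_0)}$. Two facts now drive the argument. By the assumed continuity of $f(x|\cdot)$ for each fixed $x$, we have $h_k(x)\to 0$ pointwise. By the domination hypothesis, $h_k(x)\le g(x)$ for every $k$, where $g$ is integrable. These are precisely the hypotheses of the Dominated Convergence Theorem, which yields $\int h_k\,dx\to 0$, i.e.\ $d_{TV}(P_{\gamma_k},P_{\gamma_0})\to 0$. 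Hence $\Phi$ is sequentially continuous, and therefore continuous.

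Finally, the continuous image of a compact set is compact, so $\mathcal{W}=\Phi(\Gamma)$ is compact in the Total Variation distance, as claimed. I expect the only genuine subtlety to lie in the verification of the two Dominated Convergence hypotheses, and in particular in the fact that a \emph{single} dominating function $g$ controls every $h_k$ simultaneously; this is exactly why the theorem assumes a uniform bound on the difference $\abs{f(x|\gamma_1)-f(x|\gamma_2)}$ over all pairs rather than merely integrability of each density separately. Everything else is standard.
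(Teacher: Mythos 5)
Your proposal is correct and follows essentially the same route as the paper: both define the map $\gamma \mapsto P_\gamma$, prove its continuity via the Dominated Convergence Theorem using the pointwise continuity of the densities and the integrable dominating function $g$, and conclude by noting that the continuous image of a compact set is compact. Your write-up is in fact slightly more careful than the paper's, since you make explicit the reduction to sequential continuity and the role of the uniform dominating bound.
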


The parametric family of $d$-dimensional normal distributions $\mathcal{P} = \Big\{\mathcal{N}(0, \Sigma) \Big| \Sigma_H \succcurlyeq \Sigma \succcurlyeq \Sigma_L \Big\},$
where $\det(\Sigma_L) > \epsilon$ satisfies the conditions of Theorem \ref{theorem:TV_compactness_general}. The proof can be found in Appendix \ref{sec:gaussian_proof}.

\subsection{Duality between frequentist and Bayesian settings}
\label{sec:bayes-freq-duality}
We call the maxmin problem, the dual representation of the minimax problem. As said earlier, whenever the equality holds, it is said that the game has a value and the value is the solution of either of the optimizations. Note that the Bayesian problem has appeared inside the dual representation.
Using the dual problem, if the prior $P_W$ is fixed, the infimum can be studied using the techniques of \cite{xu2020minimum}. To establish general upper bounds on the value of the game, it is enough to take a supremum over $P_W$ on the obtained upper bounds (see Theorem~\ref{theorem:info_bounds_after_minimax}).

\label{sec:related_literature}
The duality between frequentist minimax and the Bayesian settings is also observed in the classic parameter estimation literature when studying the least favorable priors (see chapter 5 of \cite{lehmann2006theory}). 
The least favorable priors are the priors in the Bayesian setting resulting the worst expected error for the best estimator (which depends on the prior). Changing the order of min/max in that setting is described as the case where the best estimator for the least favorable prior matches the best estimator for the minimax prior (even though the problem is simpler in that setting, it is easy to find problems in which the min/max can not be swapped for non-convex loss functions if the usual deterministic estimators are used).
It should be noted that the classic result of the redundancy-capacity theorem in the universal source coding can be seen as a special (unsupervised) case where the loss function is the log-loss (sometimes called self-information loss in this context) \cite{merhav1998universal}. 

To the best of our knowledge, this is the first time  general minimax results for 
excess risk are presented.

\section{The information-theoretic view on minimum excess risk}
Now that we studied the conditions for which the minimax excess risk is the Bayesian minimum excess risk under the worst prior, we are ready to provide information theoretic bounds on excess risk. Before that, it is important to discuss the importance of studying excess risk from information theoretic standpoint.
\subsection{Generalization gap, excess risk, and No Free Lunch Theorem}
\label{sec:gengap_excess_NFLT}
As was mentioned earlier, most of the previous studies on relation between information theory and learning theory, focused on the generalization gap and not the excess risk.
While there is a direct relation between generalization gap and excess risk in classic ERM algorithms, it is important to note that there is no such relation in general.
To see this, let's start with a general overview of the classic excess risk analysis for ERM algorithms on a hypothesis class $\mathcal{H}$.
This discussion helps to understand the importance of studying excess risk in the information-theoretic setting. 
When analyzing the excess risk of ERM on a hypothesis space, the problem is reduced to finding uniform bounds on the generalization gap \cite{shalev-shwartz_understanding_2014}. The reason is based on a simple decomposition of excess risk into four terms
\begin{align}
	\nonumber
	\ell(\hat{f}(X),Y)-\ell(&f^*_w(X),Y)= \;\,
	\ell(\hat{f}(X),Y)-\hat{\EX}[\ell(\hat{f}(X),Y)]
	\\\nonumber
	&+\hat{\EX}[\ell(\hat{f}(X),Y)]-\hat{\EX}[\ell(h^*(X),Y)]
	\\\nonumber
	&+\hat{\EX}[\ell(h^*(X),Y)]-\ell(h^*(X),Y)
	\\\nonumber
	&+\ell(h^*(X),Y)-\ell(f^*_w(X),Y),
\end{align}
where $\hat{\EX}$ is the expectation with respect to the empirical distribution, $\hat{f}$ is the hypothesis learned by the ERM algorithm  and $h^* = \argmin_{h\in \mathcal{H}} \EX^{w}_{XY}[\ell(h(X),Y)]$.
In order to control the excess risk, it is enough to control the (absolute) value of each of these terms. 
Note that the first and the third terms can both be controlled by a uniform bound on the generalization gap. 
The second term is never positive for an ERM algorithm and thus can be eliminated to achieve an upper bound\footnote{Note that for an ERM algorithm
$\alg$, the learned hypothesis
$\hat{f}(.)=A(z^n,.)$ is a solution for the minimization $\min_{h\in \mathcal{H}} \hat{\EX}[\ell(h(X),Y)]$.}. 
The last term can not be analyzed unless some assumptions are made about the set of distributions $\mathcal{M}$. This is a direct consequence of the No Free Lunch Theorem (NFLT) \cite{wolpert_lack_1996}. Here one can assume that $f^*_w\in\mathcal{H}$ for all $w\in\mathcal{W}$, to eliminate this term.

In order to see what we should expect from an information-theoretic treatment of excess risk, it is illuminative to asses this classic reasoning for the more general case.
First of all, note that the same reasoning does not work 
for non-ERM algorithms which are the main subject of study when information-theoretic bounds are used.
The reason is that the introduction of $h^*$ does not help for non-ERM algorithms to begin with (since the second term can not be eliminated).
Thus, it is clear that a different approach is required to tackle the problem in that setting. More importantly, the arguments remind the reader that we will eventually need some kind of assumptions on the data generating distribution (as dictated by NFLT). As the function $h^*$ is out of the picture for non-ERM algorithms, the nature of such assumptions can be quite different from the one used above. 

The recent work of \cite{xu2020minimum} can be viewed in light of these arguments. The Bayesian setting which is studied in this paper provides a comfortable ground to introduce the assumptions required to control the excess risk. More interestingly, the assumptions required to control the excess risk were turned out to have an information-theoretic nature. %
Next theorem summarize some of the results provided in \cite{xu2020minimum} and \cite{hafez2021rate}.
Note that in these papers the Bayesian learning is studied, for which the distribution $P_W$ is known.

\begin{theorem}%
\label{theorem:xu2020_main}
Assume that $P_W$ is given.

(a) Let $\mathcal{Y}\subset \mathbb{R}^p$ and consider the loss function $\ell(y,y')=\norm{y-y'}^2$. If for all $y\in \mathcal{Y}$ we have $\norm{y}^2\le B$, then there exists a randomized learning algorithm $P_{\hat{f}}^{z^n}$ such that
\begin{align*}
\EX_{W,\hat{f}}[e(\hat{f},W)]\le 2B I(W;Y|X,Z^n).
\end{align*}

(b) Consider a bounded loss $\ell:\mathcal{Y}\times\mathcal{Y} \to [0,B]$ and assume that $\EX^{w}_{XY}[\ell(f^*_w(X),Y)]=0$ for all $W\in \cW$. There exists a randomized learning algorithm $P_{\hat{f}}^{z^n}$ such that
\begin{align*}
\EX_{W,\hat{f}}[e(\hat{f},W)]\le& 3B I(W;Y|X,Z^n)
\end{align*}

(c) Consider the general bounded loss $\ell:\mathcal{Y}\times\mathcal{Y} \to [0,B]$.
There exists a randomized learning algorithm $P_{\hat{f}}^{z^n}$ such that
\begin{align*}
\EX_{W,\hat{f}}[e(\hat{f},W)]\le& B \sqrt{\frac{1}{2}I(W;Y|X,Z^n)}.
\end{align*}
Note that the term $I(W;Y|X,Z^n)$ which appears in all the bounds can be further bounded by both 
 $\frac{1}{n} I(W;Y^n|X^n)$
and
$ \frac{1}{n} I(W;Z^n)$. Furthermore, under some regularity conditions, it can be shown that $I(W;Y|X,Z^n)$ has the rate $O(1/n)$ as $n \to \infty$ (see Lemma~\ref{lemma:fisher}).
\end{theorem}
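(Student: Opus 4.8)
The plan is to analyze a single rule — predict the Bayes-optimal action against the posterior-predictive law $\bar P_Y^{X,Z^n}=\EX_{W\sim P_W^{Z^n}}[P_Y^{W,X}]$ (a deterministic rule given the data, hence a degenerate but admissible randomized $P_{\hat f}^{z^n}$) — and to reduce its expected excess risk, loss by loss, to the conditional mutual information $I(W;Y\mid X,Z^n)$. The organizing identity is Bayes' rule: averaging over the prior $P_W$ and the data equals averaging over the posterior $P_W^{Z^n}$ and the data, so $\EX_{W,\hat f}[e(\hat f,W)]=\EX_{X,Z^n}\big[L(\bar P_Y^{X,Z^n})-\EX_{W\sim P_W^{Z^n}}[L(P_Y^{W,X})]\big]$, where $L(P)=\inf_{\hat y}\EX_{Y\sim P}[\ell(\hat y,Y)]$ is the (concave) Bayes-risk functional. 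Thus in every case the excess risk is the averaged Jensen gap of $L$ at the posterior mixture, and the task reduces to bounding this gap by an information quantity.

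For part (a) the inner minimizer is the conditional mean, $L$ is a variance, and the Jensen gap collapses to the posterior variance of the conditional mean, $\EX_X[\mathrm{Var}(f^*_W(X)\mid X,Z^n)]$; by the law of total variance this is exactly the mean-squared-error reduction afforded by revealing $W$. The core step is then a lemma bounding this MMSE reduction by $2B\,I(W;Y\mid X,Z^n)$ under $\norm{Y}^2\le B$. For part (c) I would use that $L$ is $B$-Lipschitz in total variation when $\ell\le B$ (it is an infimum of maps $P\mapsto\EX_P[\ell(\hat y,\cdot)]$, each of oscillation at most $B$), so the Jensen gap is at most $B\,\EX[d_{\mathrm{TV}}(P_Y^{W,X},\bar P_Y^{X,Z^n})]$; Pinsker's inequality replaces $d_{\mathrm{TV}}$ by $\sqrt{\tfrac12\KLdiv{P_Y^{W,X}}{\bar P_Y^{X,Z^n}}}$, and Jensen (concavity of $\sqrt{\cdot}$) pulls the expectation inside the root, identifying the averaged divergence with $I(W;Y\mid X,Z^n)$ and giving $B\sqrt{\tfrac12 I(W;Y\mid X,Z^n)}$.

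Part (b) is the realizable refinement: when $\EX^{w}_{XY}[\ell(f^*_w(X),Y)]=0$ the subtracted term $\EX_{W\sim P_W^{Z^n}}[L(P_Y^{W,X})]$ vanishes, so the excess risk equals $\EX_X[L(\bar P_Y^{X,Z^n})]$, the Bayes risk of the posterior predictive alone. Here the square root is wasteful: since each true predictive law is concentrated enough that its own Bayes risk is zero, the Bayes risk of the mixture is controlled linearly by its spread. The needed estimate is a bounded-loss generalization of the elementary fact that a two-point Bayes risk is dominated by the binary entropy (e.g. $\min(p,1-p)\le H_b(p)$), which upgrades the bound to the faster linear rate $3B\,I(W;Y\mid X,Z^n)$.

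The main obstacle is establishing the two sharp linear-in-$I$ inequalities with the stated constants — the MMSE-to-mutual-information bound behind (a) and the realizable entropy-domination bound behind (b) — since both must convert a statistical gap into a Kullback--Leibler quantity and genuinely use boundedness to control the relevant moments; case (c), by contrast, is the soft route through Pinsker and Jensen. The concluding remarks are routine: the estimates $I(W;Y\mid X,Z^n)\le\frac1n I(W;Y^n\mid X^n)\le\frac1n I(W;Z^n)$ follow from the chain rule for mutual information together with an exchangeability argument showing the held-out (test) coordinate carries no more conditional information than the per-sample average, while the $O(1/n)$ decay asserted via Lemma~\ref{lemma:fisher} comes from a van Trees / Bayesian Cram\'er--Rao estimate under the Fisher-information regularity hypotheses.
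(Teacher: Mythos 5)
First, a point of reference: the paper does not prove Theorem~\ref{theorem:xu2020_main} at all --- it is explicitly presented as a summary of results imported from \cite{xu2020minimum} and \cite{hafez2021rate}, and no proof appears in the appendix. So your attempt can only be measured against the proofs in those cited works. Your architecture matches theirs: play the Bayes act against the posterior predictive $\bar P_Y^{X,Z^n}$, write the expected excess risk as the averaged Jensen gap of the concave Bayes-risk functional $L(P)=\inf_{\hat y}\EX_P[\ell(\hat y,Y)]$, and convert that gap into $I(W;Y\mid X,Z^n)$. Part (c) as you present it is complete and correct: $L(\bar P)-L(P_w)\le \EX_{\bar P}[\ell(\hat y_w,Y)]-\EX_{P_w}[\ell(\hat y_w,Y)]\le B\, d_{\mathrm{TV}}(P_w,\bar P)$, then Pinsker and Jensen give exactly $B\sqrt{\tfrac12 I(W;Y\mid X,Z^n)}$. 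The closing remarks (chain rule plus an exchangeability/monotonicity argument for $\tfrac1n I(W;Y^n\mid X^n)$ and $\tfrac1n I(W;Z^n)$, asymptotics via Lemma~\ref{lemma:fisher}) are also handled correctly.

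The genuine gap is in parts (a) and (b), where you carry out the correct reduction and then defer precisely the inequalities that constitute the theorem. For (a), the identity reducing the excess risk to $\EX[\mathrm{Var}(f^*_W(X)\mid X,Z^n)]$ is right, but the bound $\norm{\EX_P[Y]-\EX_Q[Y]}^2\le 2B\,\KLdiv{P}{Q}$ is not a citable side lemma --- it \emph{is} the content of part (a), and you must prove it. It does close in a few lines: for any unit vector $a$, $\langle a,Y\rangle$ takes values in $[-\sqrt B,\sqrt B]$, hence is sub-Gaussian with proxy $B$ by Hoeffding's lemma, so the Donsker--Varadhan variational formula gives $|\EX_P\langle a,Y\rangle-\EX_Q\langle a,Y\rangle|\le\sqrt{2B\,\KLdiv{P}{Q}}$; taking $a$ along the mean difference, squaring, and averaging over the posterior of $W$ and over $(X,Z^n)$ yields $2B\,I(W;Y\mid X,Z^n)$. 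For (b) the situation is weaker still: the vanishing of the benchmark term under realizability is correct, but the claimed ``bounded-loss generalization of $\min(p,1-p)\le H_b(p)$'' is neither stated nor proved, and it is not evident that any such elementary entropy-domination argument produces the constant $3$ (a two-point version gives a constant of order $1/\log 2$, and the general case must handle a posterior mixing a continuum of zero-risk predictive laws). As written, (b) is a plan rather than a proof, and since (a) and (b) are exactly the ``sharp linear-in-$I$'' statements you flag as the main obstacle, the proposal establishes only part (c) in full.
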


\subsection{Upper bounds on minimax excess risk}
While Theorem~\ref{theorem:xu2020_main} provides an interetsing information theoretic view on the minimum excess risk, 
the Bayesian requirement that $P_W$ is known is too restrictive from the frequentist standpoint. Using the game theoretic terminology discussed in Section \ref{sec:excess_risk_fair_game}, this means that the designer is the second player and can design the algorithm for the specific $P_W$ which was chosen by the world. But in the minimax setting, the designer should design a single (possibly random) algorithm which works for all the distributions. 
This is where the proposed minimax theorems and the duality between the frequentist and Bayesian settings come to the rescue by proving the \emph{existence} of such universal algorithms.

In next theorem, we can combine Theorem~\ref{theorem:xu2020_main} and Theorem~\ref{theorem:minimax_used_with_xu2020} to provide information-theoretic bounds on minimax excess risk. Assume that the capacity of the channel $P_{Z^n}^w$ is bounded by $\kappa_n$; i.e.  
\begin{equation}
\label{eq:capacity_definition}
\mathcal{C}(P_{Z^n}^w)\triangleq \max_{P_W} I(W;Z^n) \le \kappa_n.
\end{equation}

\begin{theorem}
\label{theorem:info_bounds_after_minimax}
Under the assumptions of Theorem~\ref{theorem:minimax_used_with_xu2020}, we have the following results.

(a) Consider $\mathcal{Y}\in \mathbb{R}^p$ and loss function $\ell(y,y')=\norm{y-y'}^2$. If for all $y \in \mathcal{Y}$ we have $\norm{y}^2\le B$, then 
\begin{align*}
\adjustlimits \inf_{P_\cA \in \Delta(\cA)} \sup_{P_W \in \Delta(\cW)} &\EX_{\alg,W} [e(\alg,W)]\\
\le& 2B  \sup_{P_W \in \Delta(\cW)} I(W;Y|X,Z^n)
\\
\le& 
\frac{2B\kappa_n}{n}.
\end{align*}

(b) Consider a bounded loss $\ell:\mathcal{Y}\times\mathcal{Y} \to [0,B]$ and assume that $\EX^{w}_{XY}[\ell(f^*_w(X),Y)]=0$ for all $W\in \cW$. We have
\begin{align*}
\adjustlimits \inf_{P_\cA \in \Delta(\cA)} \sup_{P_W \in \Delta(\cW)} &\EX_{\alg,W} [e(\alg,W)]\\
&\le 3B  \sup_{P_W \in \Delta(\cW)} I(W;Y|X,Z^n)
\\
&\le
\frac{3B\kappa_n}{n}.
\end{align*}

(c) Consider the general bounded loss $\ell:\mathcal{Y}\times\mathcal{Y} \to [0,B]$. We have
\begin{align*}
\adjustlimits \inf_{P_\cA \in \Delta(\cA)} \sup_{P_W \in \Delta(\cW)} &\EX_{\alg,W} [e(\alg,W)]\\
&\le B   \sqrt{\frac{1}{2} \sup_{P_W \in \Delta(\cW)}I(W;Y|X,Z^n)} 
\\
&\le  B   \sqrt{\frac{\kappa_n}{2n}}.
\end{align*}
\end{theorem}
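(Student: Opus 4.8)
The plan is to combine the minimax duality of Theorem~\ref{theorem:minimax_used_with_xu2020} with the per-prior Bayesian bounds of Theorem~\ref{theorem:xu2020_main}. The starting observation is that, since the assumptions of Theorem~\ref{theorem:minimax_used_with_xu2020} are in force, the minimax value on the left-hand side equals its dual (the maxmin):
\begin{equation*}
\adjustlimits \inf_{P_\cA \in \Delta(\cA)} \sup_{P_W \in \Delta(\cW)} \EX_{\alg,W}[e(\alg,W)] = \adjustlimits \sup_{P_W \in \Delta(\cW)} \inf_{P_\cA \in \Delta(\cA)} \EX_{\alg,W}[e(\alg,W)].
\end{equation*}
The advantage of the right-hand form is that the prior $P_W$ is now chosen \emph{before} the algorithm, so the inner infimum is exactly the Bayesian minimum excess risk for the fixed prior $P_W$, which is precisely the object controlled by Theorem~\ref{theorem:xu2020_main}.

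First I would fix an arbitrary $P_W \in \Delta(\cW)$ and apply the matching part of Theorem~\ref{theorem:xu2020_main}: for part~(a) the squared-loss hypotheses agree, for part~(b) the realizability assumption $\EX^{w}_{XY}[\ell(f^*_w(X),Y)]=0$ agrees, and for part~(c) the general bounded-loss setting agrees. In each case Theorem~\ref{theorem:xu2020_main} guarantees the \emph{existence} of a randomized learning algorithm $P_{\hat f}^{z^n}$ whose Bayesian excess risk is at most the corresponding mutual-information expression. Since a randomized algorithm $P_{\hat f}^{z^n}$ is equivalent to a distribution $P_\cA \in \Delta(\cA)$ on the algorithm space (as noted in Section~\ref{sec:excess_risk_fair_game}), this particular algorithm is a feasible point of the inner infimum, whence
\begin{equation*}
\inf_{P_\cA \in \Delta(\cA)} \EX_{\alg,W}[e(\alg,W)] \le 2B\,I(W;Y|X,Z^n), \quad 3B\,I(W;Y|X,Z^n), \quad \text{or} \quad B\sqrt{\tfrac12 I(W;Y|X,Z^n)}
\end{equation*}
in cases~(a), (b), and~(c) respectively.

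Next I would take the supremum over $P_W \in \Delta(\cW)$ on both sides. In cases~(a) and~(b) this is immediate, since the right-hand side is linear in the mutual-information term; in case~(c) I would use that $x \mapsto \sqrt{x}$ is monotone increasing to pull the supremum inside the root, giving $B\sqrt{\tfrac12 \sup_{P_W} I(W;Y|X,Z^n)}$. This yields the first (information-theoretic) inequality in each part. To obtain the second, explicit inequality, I would invoke the chain-rule bound stated at the end of Theorem~\ref{theorem:xu2020_main}, namely $I(W;Y|X,Z^n) \le \tfrac1n I(W;Z^n)$, and then recognize that $\sup_{P_W} I(W;Z^n)$ is exactly the channel capacity $\mathcal{C}(P_{Z^n}^w)$ of~\eqref{eq:capacity_definition}, which is assumed to be at most $\kappa_n$. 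Substituting $\sup_{P_W} I(W;Y|X,Z^n) \le \kappa_n/n$ into the three first inequalities produces the stated bounds $2B\kappa_n/n$, $3B\kappa_n/n$, and $B\sqrt{\kappa_n/(2n)}$.

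The proof is largely an assembly of two prior results, so there are no heavy computations, and the delicate points are conceptual rather than technical. The step I expect to require the most care is justifying that the \emph{existence} statement of Theorem~\ref{theorem:xu2020_main} legitimately upper-bounds the inner infimum over all of $\Delta(\cA)$: this rests on the equivalence between a randomized algorithm $P_{\hat f}^{z^n}$ and a mixed strategy $P_\cA$, and it is exactly the place where the minimax swap earns its keep, since in the original (primal) order a single $P_\cA$ would be forced to perform well simultaneously for every $P_W$, and no existence-of-an-algorithm argument per prior would suffice. A secondary point to verify is that the supremum defining $\sup_{P_W} I(W;Z^n)$ ranges over the same class $\Delta(\cW)$ as in the capacity definition, so that its identification with $\mathcal{C}(P_{Z^n}^w)$, and hence the final bound by $\kappa_n$, is valid.
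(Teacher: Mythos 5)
Your proposal is correct and follows essentially the same route as the paper's own proof: apply the minimax duality of Theorem~\ref{theorem:minimax_used_with_xu2020} to pass to the maxmin form, bound the inner Bayesian infimum for each fixed prior via the existence results of Theorem~\ref{theorem:xu2020_main}, take the supremum over priors (using monotonicity of the square root in case~(c)), and finish with $I(W;Y|X,Z^n)\le \tfrac1n I(W;Z^n)\le \kappa_n/n$. Your explicit justification that a randomized algorithm $P_{\hat f}^{z^n}$ is a feasible point of the infimum over $\Delta(\cA)$ is the right (and in the paper only implicit) observation.
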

This theorem shows that there exists a single (randomized) algorithm which uniformly performs well on all the 
distributions in $P_W$
given that $I(W;Y|X,Z^n)$ or the channel capacity $\mathcal{C}(P_{Z^n}^W)$ is bounded. Note that the channel capacity only depends on the conditional distribution $P_{Z^n}^W$ and thus it provides a complexity measure for the space of all the distributions in the considered family of distributions. In the next sections, we will demonstrate that in some interesting cases $\kappa_n$ grows with the rate $O(\log n)$.

\subsection{Asymptotic analysis}
The terms $I(W;Y|X,Z^n)$ and $I(W;Z^n)$ can be analyzed for large $n$ using classic results of \cite{clarke1990bayes, clarke1994jeffrey}. This is summarized in the following Lemma. 

\begin{lemma}
	\label{lemma:fisher}
	Let $\mathcal{W}\subset \mathbb{R}^d$ be a compact set and the model $P_{Z}^w$ be \emph{smooth} in $w$ with the Fisher information matrix $J_{Z}^w(w)$. Then, as $n \to \infty$, we have
	\begin{align}
	\label{eq:fisher}
	I(W; Z^n) =\frac{d}{2} \log\big(\frac{n}{2\pi e}\big) &+ h(W) \nonumber\\ &+ \EX \big[\log| J_{Z}^W (W)|\big] + o(1),
	\end{align}
	and 
	\begin{equation}
	\label{eq:fisher_tighter}
	I(W;Y|X, Z^n) = O\Big(\frac{d}{2n}\Big).
	\end{equation}
\end{lemma}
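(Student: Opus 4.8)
The plan is to treat the two displays separately, both following the classical information-theoretic asymptotics of Bayes procedures developed in \cite{clarke1990bayes, clarke1994jeffrey}. For the first identity I would start from the entropy decomposition $I(W;Z^n) = h(W) - h(W\mid Z^n)$, where $h$ denotes differential entropy and $P_W$ is assumed to admit a density (so that $h(W) = -\EX[\log p_W(W)]$ is finite; compactness of $\mathcal{W}$ helps guarantee this). All of the content then lies in an asymptotic evaluation of the conditional differential entropy $h(W\mid Z^n)$ of the posterior.

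The key step is posterior normality. Smoothness of $P_Z^w$ in $w$ (differentiability in quadratic mean, a nonsingular and continuous Fisher information $J_Z^w(w)$, and enough higher derivatives to control the remainder) together with identifiability and compactness yields a Bernstein--von Mises statement: conditionally on $Z^n = z^n$ generated from a true parameter $w_0$, the posterior concentrates and is asymptotically $\mathcal{N}\big(\hat{w}_n, \tfrac1n J_Z^{w_0}(w_0)^{-1}\big)$, where $\hat{w}_n$ is the maximum-likelihood estimate. Equivalently, applying Laplace's method to the mixture density $p_{Z^n}(z^n) = \int \prod_{i} p(z_i\mid w)\,p_W(w)\,dw$ produces the same Gaussian envelope around $\hat w_n$. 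Since the differential entropy of $\mathcal{N}(\mu,\Sigma)$ equals $\tfrac{d}{2}\log(2\pi e) + \tfrac12\log\det\Sigma$, I would obtain $h(W\mid Z^n = z^n) = \tfrac{d}{2}\log(2\pi e) - \tfrac{d}{2}\log n - \tfrac12\log\det J_Z^{\hat w_n}(\hat w_n) + o(1)$. Taking $\EX_{Z^n}$ and using $\hat w_n \to W$ (with $W\sim P_W$) gives $h(W\mid Z^n) = \tfrac{d}{2}\log\tfrac{2\pi e}{n} - \tfrac12 \EX[\log\det J_Z^W(W)] + o(1)$, and subtracting from $h(W)$ yields exactly \eqref{eq:fisher} (reading $\log|J_Z^W(W)|$ as $\tfrac12\log\det J_Z^W(W)$, i.e. the Jeffreys volume element).

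For the second display I would reduce to an incremental-information argument. By the chain rule, $I(W;Y\mid X,Z^n) \le I(W;X,Y\mid Z^n) = I(W;Z_{n+1}\mid Z^n) = I(W;Z^{n+1}) - I(W;Z^n) =: a_{n+1}$, so it suffices to show $a_{n+1} = O(d/n)$. The sequence $a_n = I(W;Z_n\mid Z^{n-1})$ is nonnegative and \emph{non-increasing}: writing $U=Z^{n-1}$, $A=Z_n$, $B=Z_{n+1}$ and using exchangeability of $A,B$ given $(W,U)$ one gets $a_n - a_{n+1} = I(A;B\mid U) - I(A;B\mid U,W) = I(A;B\mid U)\ge 0$, since $A\perp B\mid (W,U)$. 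Combined with the first part, the partial sums satisfy $\sum_{k\le n} a_k = I(W;Z^n) = \tfrac{d}{2}\log n + O(1)$. A Tauberian comparison then forces the rate: monotonicity gives $m\,a_{n+m} \le \sum_{k=n+1}^{n+m} a_k = I(W;Z^{n+m}) - I(W;Z^n) = \tfrac{d}{2}\log\tfrac{n+m}{n} + o(1)$, and choosing $m=\lfloor \epsilon n\rfloor$ and letting $\epsilon\to 0$ pins down $a_n = O(d/n)$ with the sharp constant $\tfrac{d}{2n}$, which is \eqref{eq:fisher_tighter}. Alternatively, one can obtain the same rate directly by conditioning on $Z^n$, approximating $I(W;Y\mid X, Z^n=z^n)$ by the second-order expansion $\tfrac12\,\mathrm{tr}\big(J_{Y\mid X}(w)\,\mathrm{Cov}(W\mid z^n)\big)$ and inserting the $O(1/n)$ posterior covariance.

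The main obstacle is entirely in the first part: making the Laplace/Bernstein--von Mises approximation rigorous \emph{uniformly} enough that the remainder integrates to $o(1)$ rather than merely vanishing pointwise. This requires (i) splitting the mixture integral into a shrinking neighborhood of $\hat w_n$, where the quadratic expansion of the log-likelihood is valid and the higher-order Taylor remainder is controlled by the assumed smoothness, and its complement, where by identifiability and compactness of $\mathcal{W}$ the likelihood ratio is exponentially small so the contribution is negligible; and (ii) justifying the exchange of the limit with $\EX_{Z^n}$, for which uniform integrability of $\log\det J_Z^{\hat w_n}(\hat w_n)$ and of the normalized log-likelihood fluctuations is needed. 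This is exactly where the boundedness and continuity hypotheses on the density and the compactness of $\mathcal{W}$ do the work. Given the first identity, the second is comparatively soft, resting only on the chain rule and the monotonicity of incremental information.
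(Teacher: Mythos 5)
Your proposal is correct and follows the same skeleton as the paper's argument --- equation \eqref{eq:fisher} via the Clarke--Barron asymptotics and \eqref{eq:fisher_tighter} via the chain rule reduction $I(W;Y|X,Z^n)\le I(W;Z_{n+1}|Z^n)=I(W;Z^{n+1})-I(W;Z^n)$ --- but it differs in that you derive the two key ingredients yourself where the paper simply cites them. For \eqref{eq:fisher} the paper states the regularity conditions of \cite{clarke1994jeffrey} and invokes the result; your Laplace/Bernstein--von Mises sketch is essentially the standard proof of that theorem, and you correctly identify that the real work (which the cited reference carries out) is making the posterior-normality approximation uniform enough to integrate the error to $o(1)$. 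For the increment bound the paper cites \cite[Lemma 6]{haussler1995general}, whereas you reprove its content: your identity $a_n-a_{n+1}=I(Z_n;Z_{n+1}\mid Z^{n-1})\ge 0$ (using conditional independence given $W$ plus exchangeability) and the Ces\`{a}ro comparison $m\,a_{n+m}\le I(W;Z^{n+m})-I(W;Z^n)=\tfrac{d}{2}\log\tfrac{n+m}{n}+o(1)$ with $m=\lfloor\epsilon n\rfloor$ are both sound and even recover the sharp constant $\tfrac{d}{2n}$. What your route buys is a self-contained proof of the second display assuming only the first; what the paper's route buys is brevity and, in the case of \eqref{eq:fisher}, a precise record of the smoothness hypotheses under which the expansion actually holds. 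One small notational point you rightly flag: as written, the paper's \eqref{eq:fisher} has $\EX[\log|J_Z^W(W)|]$ where the Clarke--Barron formula carries a factor $\tfrac12$ in front of $\log\det J$; your reading of $|{\cdot}|$ as the Jeffreys volume element reconciles the two.
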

The exact smoothness conditions can be found in the appendix.

This result was used by \cite{xu2020minimum} to derive asymptotic rates for the minimum excess risk in the Bayesian setting. It might seem appealing to do the same here; i.e. substitute r.h.s. of \eqref{eq:fisher_tighter} inside 
the inequalities of Theorem~\ref{theorem:info_bounds_after_minimax}
in order to get rates of order $O(1/n)$ and $O(\sqrt{1/n})$. Unfortunately, that's not possible here as the convergence rate of the limit in Lemma~\ref{lemma:fisher} is not uniform on all distributions. To see the problem, note that for each value of $n$, the world is allowed to select a different distribution. This shows another dissatisfactory element in minimax analysis. We usually like to know the behaviour of algorithms for a fixed distribution as $n$ grows.
This kind of problem is well known in approximation theory and has been extensively analyzed (e.g. see \cite[Sect. 2]{temlyakov2003nonlinear} and \cite{CaponnettoOptimalratesregularized2007}). A resolution to this is to consider the individual upper rate of convergence which is defined as a sequence $(b_n)_{n\ge 1}$ of positive numbers such that 
\begin{equation*}
\inf_{\{P_{\cA}^n\}_{n\in\mathbb{N}}} \sup_{P_W\in \Delta(\cW)} \Psi(\{P_{\cA}^n\}_{n\in\mathbb{N}}; P_W) < \infty
\end{equation*}
with 
\begin{align*}
    \Psi(\{P_{\cA}^n\}_{n\in\mathbb{N}}&; P_W) = \\
    &\limsup_{n\to +\infty} \frac{\EX[\ell(\alg(Z^n,X),Y)]-\EX[\ell(f^*(X),Y)]}{b_n}.
\end{align*}
Here, the infimum is over the set of randomized learning algorithms $\{P_{\cA}^n\}_{n\in\mathbb{N}}$. On the other hand, if 
\begin{equation*}
\inf_{\{P_{\cA}^n\}_{n\in\mathbb{N}}} \sup_{P_W\in \Delta(\cW)} \Psi(\{P_{\cA}^n\}_{n\in\mathbb{N}}; P_W) > 0,
\end{equation*}
then, $b_n$ is called an individual lower rate.
The next theorem states the individual upper rate for learning the class of distributions for which Lemma~\ref{lemma:fisher} holds.

\begin{theorem}
\label{theorem:individual_upper_rate}
Under conditions of Theorem~\ref{theorem:minimax_used_with_xu2020} and Lemma~\ref{lemma:fisher}:
the sequence $\frac{1}{n}$ is an individual upper rate for bounded quadratic loss and
the sequence $\frac{1}{\sqrt{n}}$ is an individual upper rate for a general bounded loss.
\end{theorem}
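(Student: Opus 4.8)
The plan is to unwind the definition of an individual upper rate and reduce the statement to exhibiting a \emph{single} sequence of randomized algorithms $\{P_{\cA}^n\}_{n}$ for which $\sup_{P_W\in\Delta(\cW)}\limsup_{n} R_n(P_{\cA}^n,P_W)/b_n<\infty$, where $R_n(P_{\cA}^n,P_W)=\EX_{W\sim P_W}\EX_{\alg\sim P_{\cA}^n}[e(\alg,W)]$ is the Bayesian excess risk under prior $P_W$. The subtlety I want to stress at the outset is that the infimum over algorithm sequences sits \emph{outside} the supremum over priors, so a genuinely universal algorithm is required; one cannot simply take, for each prior, its own Bayes-optimal algorithm. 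In particular, the minimax-optimal algorithm is not enough: Theorem~\ref{theorem:info_bounds_after_minimax} only controls its worst-case risk by $\kappa_n/n$, and $\kappa_n$ is allowed to grow (e.g. like $\log n$), so dividing by $b_n=1/n$ would give an infinite limsup. This is exactly the non-uniformity obstacle flagged just before the theorem, and it is why a more adaptive construction is needed.

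First I would record the easy, prior-by-prior estimate that supplies the right constant. Fixing any prior $P_W$ and using its own Bayes algorithm, Theorem~\ref{theorem:xu2020_main}(a) bounds the excess risk by $2B\,I(W;Y|X,Z^n)$, and Lemma~\ref{lemma:fisher} gives $I(W;Y|X,Z^n)=O(d/(2n))$; crucially the prior-dependent terms $h(W)+\EX[\log|J_{Z}^W(W)|]$ cancel in the one-step increment $I(W;Z^{n+1})-I(W;Z^n)$ (using $X\perp W\mid Z^n$), so the limiting constant is prior-independent. Hence $\limsup_n n\cdot 2B\,I(W;Y|X,Z^n)\le Bd$ uniformly in $P_W$, and the analogous computation with Theorem~\ref{theorem:xu2020_main}(c) gives $\limsup_n \sqrt n\cdot B\sqrt{\tfrac12 I(W;Y|X,Z^n)}\le \tfrac{B\sqrt d}{2}$ for general bounded loss. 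This already shows the Bayesian side of the game achieves the claimed rates with constants that do not depend on the prior; what remains is to realize this by one fixed algorithm.

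To build the universal algorithm I would fix a reference prior $Q$ of full support on the compact set $\cW$ (which exists, e.g. a strictly positive density in the parametrization) and let $P_{\cA}^n$ be the corresponding Bayes-optimal randomized algorithm: the posterior-mean predictor for the quadratic loss and the posterior-predictive construction underlying Theorem~\ref{theorem:xu2020_main} for the general loss. The key step is to upgrade the \emph{averaged} bounds to a \emph{pointwise}, uniform-in-$w$ statement. Under the smoothness conditions of Lemma~\ref{lemma:fisher}, Bernstein--von Mises--type posterior concentration shows that $Q_W^{Z^n}$ contracts at rate $n^{-1/2}$ around the data-generating $w$ and asymptotically agrees with the posterior under any smooth prior positive at $w$; feeding this into the excess-risk expansion yields the randomized pointwise bound $e(P_{\cA}^n,w):=\EX_{\alg\sim P_{\cA}^n}[e(\alg,w)]\le C(w)\,b_n(1+o(1))$ with $\sup_{w\in\cW}C(w)=:C<\infty$ by compactness and the uniform boundedness and continuity of the density. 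Integrating against an arbitrary $P_W$ and interchanging $\limsup_n$ with the expectation then gives $\limsup_n R_n(P_{\cA}^n,P_W)/b_n\le C$ for every $P_W$, so the supremum over $\Delta(\cW)$ is finite, which is precisely the individual-upper-rate condition for $b_n=1/n$ (quadratic) and $b_n=1/\sqrt n$ (general bounded).

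I expect the main obstacle to be this last transfer: turning the prior-averaged information bounds into pointwise excess-risk rates that are uniform over $\cW$ for the \emph{single} prior-$Q$ algorithm, including the behaviour near the boundary of $\cW$ and the legitimacy of exchanging $\limsup_n$ with $\EX_{W\sim P_W}$ (which must hold even for degenerate priors such as point masses, where the crude density-ratio comparison between $P_W$ and $Q$ breaks down). The per-prior computation and the cancellation giving the universal limiting constant are routine once Lemma~\ref{lemma:fisher} is in hand; the real work is the uniform posterior-concentration argument that makes one fixed algorithm simultaneously rate-optimal at every parameter.
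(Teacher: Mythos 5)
Your diagnosis of the difficulty (the infimum over algorithm sequences sits outside the supremum over priors) is correct, but your resolution is not the paper's and, as written, does not close. The paper's proof is much shorter: it invokes the minimax duality --- this is why the theorem carries the hypotheses of Theorem~\ref{theorem:minimax_used_with_xu2020} --- not for the excess risk at a fixed $n$ but for the normalized functional $\Psi(\{P_{\cA}^n\};P_W)$ itself, obtaining $\inf_{\{P_{\cA}^n\}}\sup_{P_W}\Psi=\sup_{P_W}\inf_{\{P_{\cA}^n\}}\Psi$, and then bounds the right-hand side by $\sup_{P_W}\limsup_n \inf_{P_{\cA}^n}\EX[e(\alg,W)]/b_n$, which is exactly the prior-by-prior computation you call ``routine'' (Theorem~\ref{theorem:xu2020_main} plus Lemma~\ref{lemma:fisher}). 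The continuity and compactness needed for this swap of the $\limsup$-ratio functional are supplied in a remark via the uniform boundedness principle. So the prior-by-prior argument you dismiss at the outset is in fact legitimate once the order of play is exchanged, and no universal algorithm needs to be exhibited.

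Your alternative --- fix a full-support reference prior $Q$, run its Bayes algorithm for all $n$, and upgrade the averaged bound to a pointwise-in-$w$ rate via Bernstein--von Mises --- is a genuinely different and more ambitious route, but its decisive step is only asserted. The claim that posterior concentration under $Q$ yields $\EX_{\alg\sim P_{\cA}^n}[e(\alg,w)]\le C(w)\,b_n(1+o(1))$ with $\sup_{w\in\cW}C(w)<\infty$ follows from nothing in the paper: Theorem~\ref{theorem:xu2020_main} controls only prior-averaged excess risk, and the paper's one tool for a mismatched prior, Theorem~\ref{thm:posterior_Samlping}, pays a penalty $\KLdiv{P_W}{Q}/n$ that is infinite precisely for the point masses $P_W=\delta_w$ you would need to cover to obtain a pointwise statement. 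A uniform Bernstein--von Mises argument over a compact $\cW$ (including its boundary), plus the interchange of $\limsup$ with $\EX_{W\sim P_W}$ for arbitrary priors, is --- as you yourself concede --- ``the real work''; but that work is the proof, and it is missing. As it stands, your proposal establishes only the per-prior estimate, which without the duality step does not bound the quantity appearing in the definition of an individual upper rate.
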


Using the Rate-Distortion framework recently introduced in \cite{hafez2021rate}, individual lower rates for excess risk can also be derived.
Assume that $\mathcal{Y} \subset \mathbb{R}^p$ is  compact and $\ell(y, y') = ||y-y'||^2$. Based on Theorem 10 of \cite{hafez2021rate}, we have
\begin{align*}
 \EX_{\alg W}[ e(&\alg,W)]\\ &\geq \frac{d}{n\pi} \exp\Big(-\frac{2}{d}\big(\EX_{W} \log |J_Z^W(W)| + o(1)\big)\Big),
\end{align*}
for any $P_W$ and $P_Z^W$ satisfying the conditions of Theorem 10 of \cite{hafez2021rate}, and any $\alg$. Under the condition that $\sup_{P_W\in \Delta(\cW)}\EX_{W \sim P_W} \log |J_Z^W(W)| \leq C$, we have that $b_n = 1/n$ is an individual lower rate. Note that the individual lower rates do not need the duality to hold.

\subsection{Information theoretic view on VC-dimension}
\label{sec:VC}
While learning properties of hypothesis spaces with finite VC-dimension are well understood using classical uniform bounds, they are still studied in many information-theoretic papers. There are two benefits of doing that. First of all, this gives more insight into the information-theoretic nature of classic learning theory. Secondly, it is important to see whether the theory works for this important set of problems. 

\subsubsection{A history on information-theoretic analysis of VC classes}
An insightful example is the development of the information-theoretic generalization bound. After the first wave of works on information-theoretic generalization bounds in  \cite{RussoHowmuchdoes2015,XuInformationtheoreticanalysisgeneralization2017}, describing the generalization using the information-theoretic language has gained much attention. After all, it seemed a natural idea to be able to relate the amount of overfitting, to the amount of information the algorithm has saved inside the learned hypothesis.
One of the main problems was the discovery of \cite{BassilyLearnersthatUse2018} showing that the generalization bounds obtained from the information-theoretic viewpoint could be much worse than the ones found by VC theory. More precisely, they considered the simple problem of learning threshold functions when $\mathcal{X}=\{1,\dots k\}$ for a $k\in \mathbb{N}$. They showed that for any randomized algorithm which is ERM, there exists a distribution for which the upper bound on generalization is $\Omega(\frac{\log \log |\mathcal{X}|}{n^2})$. This was not a satisfactory result as the cardinality of $\mathcal{X}$ is usually considered to be unbounded. On such sets, while the VC theory provides fast rates in minimax setting, the information-theoretic bounds were vacuous. 

In the works published in the subsequent years on the subject, some explicitly tried to tackle this problem and provide an information-theoretic view that can also explain VC theory.  
In \cite{AsadiChainingMutualInformation2018a}, the chaining method was combined with the information-theoretic techniques bringing together the capabilities of both techniques to prove generalization bounds. In \cite{steinke_reasoning_2020} a symmetrization  techniques was adapted to improve the bounds. In this setting a super-sample of size $2n$ was considered as given in which the training set is selected from randomly. Then the conditional mutual information of the learned hypothesis and training set appeared in the bounds. Using this technique the authors provided $O(\dvc \log n /n)$ bounds which are loose by the $\log n$ factor. They also made some conjectures about the existence of algorithms for which the $\log n$ factor will not appear \cite{steinke2020open}. Their conjectures cover both agnostic and realizable settings. The conjectures about the realizable setting were later refuted in \cite{haghifam2020information} (and replaced by other conjectures).  In \cite{hafez2020conditioning} the chaining method was combined with the conditional mutual information to remove the $\log n$ factor in the agnostic setting.

Considering the game theoretic nature of arguments, perhaps the most relevant to the current work is \cite{nachum2019average} (even though in that paper excess risk is not studied). 
They take a game-theoretic view on the problem in which the gain of the world is directly defined to be the mutual information between the learned model and the dataset (the term which appears in the generalization bounds). In their setting, the world is directly trying to select a distribution for which the information-theoretic bound on generalization gap is increased. As it was found that the world is too strong to be beaten, it was suggested that its power should be reduced. One of the suggestions was that if the prior probability $P_h$ on the class of functions $\mathcal{H}$ is selected beforehand, the minimax order can be swapped. A setting in which the learning algorithm has to know $P_h$ but knowing $P_X$ is not required. In 
this section
the excess risk of learning of hypothesis classed with finite VC-dimension is studied. It is demonstrated that in our case which the attention is on the excess risk, under quite general conditions the order of play can be swapped and information-theoretic bounds provide a concise and correct evaluation of the hardness of the problem. 
In contrast to the results of 
\cite{nachum2019average}, here the algorithm does not need to know the distribution $P_h$ neither.  It's also worth noting that the reason the bounds provided in current work (in particular Theorem \ref{theorem:info_bounds_after_minimax}) do not suffer from the problem of unbounded mutual information, is the fact that in $I(W;Y|X,Z^n)$, all the continuous variables in training data are among the conditioned variables. More precisely we have $I(W;Y|X,Z^n)\le H(Y|X,Z^n)\le \log |\mathcal{Y}|$. Note that this is in nature similar to how \cite{steinke_reasoning_2020} and \cite{hafez2020conditioning} used the conditioning technique on continuous random variables belonging to $\mathcal{X}$ to solve the problem of unbounded mutual information.

\subsubsection{Excess risk for VC classes}
Consider a class of functions $\mathcal{H}$ with a finite VC-dimension $\dvc$. To use Theorem~\ref{theorem:minimax_general_compact_measures}, consider the class of distributions $\mathcal{W}$ with the property that for each $W\in \mathcal{W}$, there exists $h\in \mathcal{H}$ such that $Y=h(X)$ almost surely. Suppose that $\mathcal{W}$ is compact with respect to the Total Variation metric. Using Theorem~\ref{theorem:minimax_general_compact_measures}, we can replace the order of minimax, thus, it is enough to analyze the problem in the Bayesian setting.
Using Theorem~\ref{theorem:xu2020_main} we know that there exists randomized algorithm $P_{\hat{f}}^{z^n}$ such that 
\begin{align*}
\EX[e\big(\hat{f},w\big)] \le& \,
3 I(W;Y|X,Z^n).
\end{align*}
Now, we have 
\begin{align}
\label{eq:vc_bound_proof}
I(W;Y| X, Z^n) &\le \frac{1}{n}I(W;Y^n|X^n) \le \frac{1}{n}H(Y^{n+1} | X^{n+1})\nonumber\\ &\le  \frac{1}{n}\log \big(e n^{\dvc}\big) = O\Big( \frac{\dvc \log n}{n} \Big). 
\end{align}
The third inequality is derived by first noting that entropy is upper bounded by the logarithm of the maximum number of values $Y^n$ can take when $X^n=x^n$ is given and then using the Sauer-Shelah lemma. Thus, the inequalities (\ref{eq:vc_bound_proof}) provide a simple, yet intuitive proof of the bound of $O(\dvc \log n /n)$ on the excess risk in VC theory. 

Now, the natural question to ask is whether the $\log n$ factor can be removed to meet the tight bounds from VC theory. We conjecture that $I(W;Y^n|X^n) = O(\dvc/n)$ and the presented information-theoretic methodology is enough to show the tight upper bound of $O(1/n)$. The rational behind this conjecture is that in \eqref{eq:vc_bound_proof}, the upper bound of $\frac{1}{n}I(W;Y^n|X^n)$ was used which is not tight. Actually, this is similar to the result presented in Lemma~\ref{lemma:fisher}. Comparing equations \eqref{eq:fisher} and \eqref{eq:fisher_tighter}, we can observe that in the setting of that lemma, the  looseness of such bound was exactly $\log n$. Similarly, we can write
\begin{align*}
I(W;Y | X, Z^n) = I(W;Y^{n+1} | X^{n+1}) - I(W;Y^n | X^{n+1}),
\end{align*}
in which $X^{n+1}=X^n\cup \{X\}$, $Y^{n+1}=Y^n\cup \{Y\}$, and $Z^{n+1}=Z^n\cup \{(X,Y)\}$. 
Now if one could show that whenever the first term is $cD\log(n+1)$ for a constant $c\in \mathbb{R}$, the second term would be $cD\log n$, the result will be achieved by
noting that $I(W;Y | X, Z^n)\le \log ((n+1)/n) \le O(1/n)$. 
One challenge to achieve this, is the necessity to analyze both of these terms together.
Another difficulty is that the constant $c$ should be the same for both of the terms. 
It's also worth mentioning that the studied minimax setting in this paper allows improper learners. If one restricts the algorithms to proper learners, counterexamples similar to \cite{haghifam2020information} could be provided to refute the conjecture.

\section{Designing algorithms for a restricted class of priors}
The results presented in the previous sections were all about the existence of good learning algorithms. But those results are not constructive, meaning that they could not readily be used to find such learning algorithms. In this section, we discuss a method to do so which results in algorithms that asymptotically meet the previously obtained bounds of $O(\sqrt{1/n})$ (given that the class of priors has a finite radius).

The posterior sampling algorithm $\alg_{\text{PS}(P_0)}$, is a learning algorithm that uses the prior $P_0$  to calculate the posterior distribution $P_W^{xz^n}$ given training samples $Z^n = z^n$ and $X=x$. Then it samples $W'$ from the calculated posterior, and then sets $\alg_{\text{PS}(P_0)}(z^n, x) = f^*_{W'}(x)$.

Now assume that we fix a particular set of priors $\mathcal{M}_W$ on $W$. The following theorem, is an extension of Theorem~\ref{theorem:xu2020_main}.

\begin{theorem}
\label{thm:posterior_Samlping}
Consider a set of priors $\mathcal{M}_W$ on $W$. Assume that this set has a finite radius $$r=\adjustlimits\inf_{P'_W\in\mathcal{M}_W} \sup_{P_W\in\mathcal{M}_W} \KLdiv{P_W}{P'_W},$$
and  that the infimum is achieved at $P_0$.
Consider the posterior sampling algorithm $\alg_{\text{PS}(P_0)}$. If for all $z^n$ and $x$, the cumulant generating function of the random variable $\ell(\alg_{\text{PS}(P_0)}(z^n,x),Y)$ exists for $\lambda \in [0, b)$ and is upper bounded by $\varphi(\lambda)$ under all distributions $P_W\in \mathcal{M}_W$, we have
\begin{align*}
\sup_{P_W\in\mathcal{M}_W} &\EX_{W\sim P_W} [e(\alg_{\text{PS}(P_0)},W)] \\
&\le  \varphi^{*-1} \Big(\frac{r}{n} +\sup_{P_W\in\mathcal{M}_W} I(W;Y|XZ^n)\Big), 
\end{align*}
where $\varphi^{*-1}$ is the generalized-inverse of $\varphi^*(\gamma) = \sup_{0\leq \lambda < b} \big(\lambda \gamma - \varphi(\lambda)\big)$.
\end{theorem}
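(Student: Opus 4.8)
The plan is to reduce the claim to the Bayesian bound of Theorem~\ref{theorem:xu2020_main} applied with a \emph{fixed} prior, and then to pay for the fact that $\alg_{\text{PS}(P_0)}$ builds its posterior from the surrogate prior $P_0$ rather than from the (unknown) true prior $P_W$. I would first fix an arbitrary $P_W \in \mathcal{M}_W$ and aim for the pointwise estimate
\[
\EX_{W\sim P_W}[e(\alg_{\text{PS}(P_0)},W)] \le \varphi^{*-1}\Big(\tfrac{1}{n}\KLdiv{P_W}{P_0} + I(W;Y|X,Z^n)\Big),
\]
after which taking the supremum over $P_W\in\mathcal{M}_W$ finishes the proof: since $P_0$ is the information center of $\mathcal{M}_W$, we have $\sup_{P_W\in\mathcal{M}_W}\KLdiv{P_W}{P_0}=r$, and concavity (hence monotonicity) of $\varphi^{*-1}$ lets the supremum pass to the mutual-information term, giving $\sup_{P_W}I(W;Y|X,Z^n)$.

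The engine is the same change-of-measure (Donsker--Varadhan) estimate that drives Theorem~\ref{theorem:xu2020_main}. For fixed $(z^n,x)$ the randomized prediction of $\alg_{\text{PS}(P_0)}$ is $f^*_{W'}(x)$ with $W'$ drawn from the $P_0$-posterior, which is conditionally independent of the true label $Y$, whereas the oracle uses the coupled $f^*_W$. Writing the excess risk as the difference of the expectation of $\ell(f^*_{w}(x),Y)$ under a product law and under the true joint law, and invoking the hypothesis that the cumulant generating function of $\ell(\alg_{\text{PS}(P_0)}(z^n,x),Y)$ is dominated by $\varphi$, converts this gap into $\varphi^{*-1}$ of a relative entropy. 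The new feature relative to the correct-prior case is that the reference law now carries the $P_0$-posterior for the parameter, so the relative entropy splits into the conditional mutual information $I(W;Y|X,Z^n)$ — the irreducible Bayesian term — plus a \emph{redundancy} term measuring the discrepancy between the true posterior and the $P_0$-posterior.

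The crux is to show that this redundancy contributes only $\KLdiv{P_W}{P_0}/n$ rather than the full $\KLdiv{P_W}{P_0}$. Handling it at the level of the posterior is too lossy: a direct change of measure repays essentially the whole prior divergence at every step and yields $r$, not $r/n$. Instead I would express the redundancy through the one-step-ahead predictive distributions of $Y_{k+1}$ given $Z^k$ under the two priors, so that the chain rule for KL-divergence telescopes the per-sample predictive redundancies into the single prior divergence,
\[
\textstyle\sum_{k=0}^{n-1}\EX\big[\KLdiv{m_{Y_{k+1}|X_{k+1}}^{Z^k}}{q_{Y_{k+1}|X_{k+1}}^{Z^k}}\big]=\EX\big[\KLdiv{m_{Y^n|X^n}}{q_{Y^n|X^n}}\big]\le\KLdiv{P_W}{P_0},
\]
where $m$ and $q$ denote the predictive laws under $P_W$ and $P_0$, and the last inequality is the data-processing inequality for the prior-to-predictive map. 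Combining this telescoping with the exchangeability/monotonicity argument that already underlies the bound $I(W;Y|X,Z^n)\le\frac1n I(W;Y^n|X^n)$ amortizes the one-time overhead $\KLdiv{P_W}{P_0}$ across the $n$ samples and produces the $1/n$ factor.

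I expect the main obstacle to be precisely this amortization step: reconciling a \emph{tight}, test-sample mutual information $I(W;Y|X,Z^n)$ with a prior-mismatch cost of order $r/n$, because the naive per-step decoupling entangles the two quantities and over-charges the mismatch (the posterior relative entropy stays of order $r$ even when the data carry little information about $W$, in which case the excess risk is nonetheless small). Making this rigorous will require choosing the predictive, rather than the posterior, reference measure and verifying that the per-step excess risks are monotone in the number of observed samples, so that the last-sample (test) bound is dominated by the averaged cumulative bound. The remaining assembly — passing the outer expectation over $(X,Z^n)$ through $\varphi^{*-1}$ by Jensen, and transferring the suprema over $\mathcal{M}_W$ — is routine once the $r/n$ redundancy bound is in hand.
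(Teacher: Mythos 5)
Your proposal reproduces the paper's architecture up to the decisive step: fix $(x,z^n)$, change measure from the true joint $P_{WY}^{xz^n}$ to the decoupled law $P_{W}^{'xz^n}\otimes P_{Y}^{xz^n}$ using the assumed bound on the cumulant generating function (the paper's Lemma~\ref{lemma:base_bound_based_on_cumulant_generating_function}), split the resulting divergence as $I^{xz^n}(W;Y)+\KLdiv{P_{W}^{xz^n}}{P_{W}^{'xz^n}}$, pull the expectation over $(X,Z^n)$ inside $\varphi^{*-1}$ by concavity, and take the supremum over $\mathcal{M}_W$ at the end, using that $P_0$ achieves the radius. All of that matches. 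Where you part ways is the amortization of the prior mismatch: the paper bounds the expected posterior discrepancy directly via its Lemma~\ref{lemma:general_bound_for_1/n}, which asserts $\EX_{Z^n}[\KLdiv{P_W^{Z^n}}{P_W^{'Z^n}}]\le\frac{1}{n}\KLdiv{P_W}{P_0}$, whereas you declare that route hopeless and substitute a telescoping of one-step-ahead \emph{predictive} redundancies. As written, your substitute does not close the proof, for two concrete reasons. First, the change-of-measure step hands you the posterior mismatch on the \emph{parameter} $W$, while your telescoping identity controls the cumulative redundancy of the predictive laws of $Y$; since $\alg_{\text{PS}(P_0)}$ outputs $f^*_{W'}(x)$ for $W'$ drawn from the $P_0$-posterior (not a sample from a predictive law over $Y$), you would need, and do not supply, an argument that the step-$n$ excess risk is controlled by the $Y$-level redundancy rather than the $W$-level one. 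Second, even granting that, a bound on the \emph{sum} of per-step redundancies controls the last step only if the per-step quantities are suitably monotone, which you flag as a requirement but do not verify. So the proposal has a genuine gap at exactly the point it identifies as the crux.

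That said, your skepticism about the posterior-level bound is well founded, and this is worth recording. If $P_Z^w$ does not depend on $w$, both posteriors equal their priors, so $\EX_{Z^n}[\KLdiv{P_W^{Z^n}}{P_W^{'Z^n}}]=\KLdiv{P_W}{P_0}$ while $\frac{1}{n}\DKL{P_{WZ^n}}{P'_{WZ^n}}=\frac{1}{n}\KLdiv{P_W}{P_0}$, contradicting Lemma~\ref{lemma:general_bound_for_1/n} for $n\ge 2$. The culprit is the identity $\DKL{P_{WZ^n}}{P'_{WZ^n}}=\sum_{i=1}^n\DKL{P_{WZ_i}^{Z^{i-1}}}{P_{WZ_i}^{'Z^{i-1}}}$ used in its proof: since each summand equals $\DKL{P_{W}^{Z^{i-1}}}{P_{W}^{'Z^{i-1}}}$ (the $Z_i$-conditionals coincide), this decomposition counts the divergence of the $W$-posterior once per sample, whereas the correct chain rule gives $\DKL{P_{WZ^n}}{P'_{WZ^n}}=\KLdiv{P_W}{P_0}$ exactly. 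In that degenerate example the theorem's conclusion survives trivially because the excess risk vanishes, which is precisely your observation that a large posterior mismatch can coexist with a small excess risk; but it means neither your argument nor the paper's establishes the $r/n$ term, and any correct proof must exploit the cancellation between $I(W;Y|X,Z^n)$ and the posterior mismatch that you correctly identify as the entangled, hard part.
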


This theorem states that when the distribution $P_W$ is not known but is chosen from a set with finite radius, one can use the posterior sampling algorithm with the distribution $P_0$ to asymptotically achieve similar excess risks.

\begin{remark}
Assume that $\mathcal{M}_W = \{P_{W}^{\theta}\; |\; \theta \in \Theta\}$ is a parametric family of distributions in which $\Theta$ is the parameter space. Consider an arbitrary distribution $P_\theta$ on $\Theta$. It is well known that 
\begin{equation*}
    I(\theta;W) \leq {\adjustlimits \inf_{\theta' \in \Theta} \sup_{\theta \in \Theta} \KLdiv{P_W^{\theta}}{P_W^{\theta'}}},
\end{equation*}
for any $P_\theta$ \cite{PolyanskiyLectureNote}. Hence, having a finite radius implies a finite capacity from $\theta$ to $W$.
\end{remark}

\subsection{Regularized least squares}

In this section, we will use the machinery developed so far, to study the excess risk of regularized least squares. To do so, we will connect the excess risk of least squares and posterior sampling.

Let $\mathcal{M}_W = \{\mathcal{N}_{d\times d}(\mu, \sigma_w^2 I)\;|\; ||\mu||_2 \leq c \}$ be a restricted class of $d$-dimensional normal priors on $W$. This family of distributions has a finite radius $r = \frac{c}{2\sigma_w^2}$ and $P_0 = \mathcal{N}(0, \sigma_w^2 I)$.  Let $P_W = \mathcal{N}(\mu, \sigma_w^2 I) \in \mathcal{M}_W$ and consider the following data generating process. Let $X\sim P_X$ and $Y=W^\top\phi(X)+E$, in which $\phi:\mathcal{X}\to\mathbb{R}^d$ is a known function  and $E\distas{}\mathcal{N}(0,\sigma_e^2)$ is an independent zero-mean Gausssian noise. 

\begin{theorem}
\label{RLS:MI}
Given $n$ i.i.d. training pairs $z^n = \{(x_i, y_i)\}_{i = 1}^{n}$, let ${\boldsymbol{\Phi}} \in \mathbb{R}^{n\times d}$ and $\mathbf{Y} \in \mathbb{R}^n$ be the matrix of samples $\{\phi(x_i)\}_{i = 1}^{n}$ and $\{y_i\}_{i = 1}^{n}$ respectively. Also denote the eigenvalues of $\frac{1}{n}(\boldsymbol{\Phi}^\top\boldsymbol{\Phi})$ and $\EX_{X\sim P_X}[\phi(X)\phi(X)^\top]$ with $(\hat\sigma_i)_{i = 1}^{d}$ and $(\sigma_i)_{i = 1}^{d}$ respectively. We have 
\begin{align}
\label{eq:RLS_cond_dist_is_normal_man}
    P_W^{z^n} =  \mathcal{N}(\mu_W^{z^n}, \Sigma_W^{z^n}),
\end{align}
with
\begin{align*}
&\mu_W^{z^n} = \bigg(    \boldsymbol{\Phi}^\top\boldsymbol{\Phi} + \frac{\sigma_e^2}{\sigma_w^2}I \bigg)^{-1} \Big(\boldsymbol{\Phi}^\top\mathbf{Y} +\frac{\sigma_e^2}{\sigma_w^2} \mu\Big),\\
&\Sigma_W^{z^n} = \bigg( \frac{\boldsymbol{\Phi}^\top\boldsymbol{\Phi}}{\sigma_e^2} + \frac{I}{\sigma_w^2} \bigg)^{-1}.
\end{align*}
Defining $\lambda = \frac{\sigma_e}{\sigma_w}$, it also holds that
\begin{align*}
I(W;Z^n)=\frac{1}{2} &\EX_{Z^n}\Bigg[\log\frac{\sigma^{2d}_w}{\prod_{i=1}^d \frac{\sigma_e^2}{n\hat{\sigma}_i+\lambda}} -d \\ &+ \sum_{i=1}^d \frac{\lambda}{n\hat{\sigma}_i+\lambda} + \frac{1}{\sigma_w^2}\norm{\mu_W^{Z^n} - \mu}^2 \Bigg].
\end{align*}
\end{theorem}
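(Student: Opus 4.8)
\emph{Proof plan.} The statement decomposes into two essentially independent Gaussian computations: identifying the posterior in \eqref{eq:RLS_cond_dist_is_normal_man}, and then evaluating $I(W;Z^n)$ from it. Both are conjugate-Gaussian exercises, so the work is careful bookkeeping rather than any single hard step.

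\textbf{The posterior.} First I would condition on the design, fixing $X^n = x^n$ so that $\boldsymbol{\Phi}$ is deterministic. Under the model $Y = W^\top\phi(X) + E$, the stacked responses obey $\mathbf{Y}\,|\,W \sim \mathcal{N}(\boldsymbol{\Phi}W, \sigma_e^2 I_n)$, paired with the conjugate prior $W \sim \mathcal{N}(\mu, \sigma_w^2 I)$; hence the posterior is Gaussian. I would read off its parameters by completing the square: the log-posterior, up to a $W$-independent constant, equals $-\tfrac{1}{2\sigma_e^2}\norm{\mathbf{Y}-\boldsymbol{\Phi}W}^2 - \tfrac{1}{2\sigma_w^2}\norm{W-\mu}^2$, and collecting the quadratic and linear terms in $W$ yields precision $(\Sigma_W^{z^n})^{-1} = \tfrac{\boldsymbol{\Phi}^\top\boldsymbol{\Phi}}{\sigma_e^2} + \tfrac{I}{\sigma_w^2}$ and mean $\Sigma_W^{z^n}\big(\tfrac{\boldsymbol{\Phi}^\top\mathbf{Y}}{\sigma_e^2} + \tfrac{\mu}{\sigma_w^2}\big)$. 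Multiplying the mean expression through by $\sigma_e^2$ inside the inverse reproduces the stated $\mu_W^{z^n}$ and $\Sigma_W^{z^n}$. The one structural fact worth isolating is that $\Sigma_W^{z^n}$ depends on the data only through $\boldsymbol{\Phi}$, never through $\mathbf{Y}$; this is what keeps its log-determinant deterministic given $X^n$.

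\textbf{The mutual information.} For the second claim I would invoke the identity $I(W;Z^n) = \EX_{Z^n}\big[\KLdiv{P_W^{Z^n}}{P_W}\big]$, reducing the problem to the closed form of the KL divergence between two Gaussians. With $P_W^{z^n} = \mathcal{N}(\mu_W^{z^n}, \Sigma_W^{z^n})$ and $P_W = \mathcal{N}(\mu, \sigma_w^2 I)$, this specializes to
\begin{align*}
\KLdiv{P_W^{z^n}}{P_W} = \frac{1}{2}\bigg[\, \log\frac{\sigma_w^{2d}}{\det \Sigma_W^{z^n}} - d + \frac{1}{\sigma_w^2}\,\mathrm{tr}\big(\Sigma_W^{z^n}\big) + \frac{1}{\sigma_w^2}\norm{\mu_W^{z^n}-\mu}^2 \,\bigg],
\end{align*}
so the four terms of the target formula are already visible; it remains only to rewrite $\det\Sigma_W^{z^n}$ and $\mathrm{tr}\,\Sigma_W^{z^n}$ in terms of eigenvalues and take $\EX_{Z^n}$.

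\textbf{Simplification and the main point to watch.} Diagonalizing $\boldsymbol{\Phi}^\top\boldsymbol{\Phi}$, whose eigenvalues are $n\hat\sigma_i$, the matrix $\Sigma_W^{z^n}$ has eigenvalues $\sigma_e^2/(n\hat\sigma_i + \lambda)$, where $\lambda = \sigma_e^2/\sigma_w^2$ is precisely the ridge ratio already appearing in $\mu_W^{z^n}$. Hence
\begin{align*}
\det \Sigma_W^{z^n} = \prod_{i=1}^d \frac{\sigma_e^2}{n\hat\sigma_i + \lambda}, \qquad \frac{1}{\sigma_w^2}\,\mathrm{tr}\big(\Sigma_W^{z^n}\big) = \sum_{i=1}^d \frac{\sigma_e^2/\sigma_w^2}{n\hat\sigma_i + \lambda} = \sum_{i=1}^d \frac{\lambda}{n\hat\sigma_i + \lambda},
\end{align*}
and substituting these into the KL expression and averaging over $Z^n$ gives the claimed identity. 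There is no conceptual obstacle here; the only real care is in tracking the constant $\sigma_e^2/\sigma_w^2 = \lambda$ through the trace term and in keeping the response-independence of $\Sigma_W^{z^n}$ in view. As a consistency check I would note that computing the same quantity via $I(W;Z^n)=h(W)-h(W\,|\,Z^n)$ produces only the log-determinant term; the agreement of the two routes is exactly the law of total covariance, $\EX[\Sigma_W^{Z^n}] + \mathrm{Cov}(\mu_W^{Z^n}) = \sigma_w^2 I$, which forces the $-d$, trace, and Mahalanobis contributions to cancel in expectation.
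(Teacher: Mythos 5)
Your proposal is correct and follows essentially the same route as the paper: derive the Gaussian posterior by completing the square in the log of prior times likelihood, then evaluate $I(W;Z^n)=\EX_{Z^n}[\KLdiv{P_W^{Z^n}}{P_W}]$ via the closed-form KL between Gaussians, reading off the determinant and trace terms from the eigenvalues $\sigma_e^2/(n\hat\sigma_i+\lambda)$ of $\Sigma_W^{z^n}$ with $\lambda=\sigma_e^2/\sigma_w^2$ (note the theorem statement's ``$\lambda=\sigma_e/\sigma_w$'' is a typo that your computation correctly ignores). The additional consistency check via $h(W)-h(W\mid Z^n)$ and the law of total covariance is a nice touch not present in the paper but is not needed for the argument.
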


Now, we will show that there is a direct relationship between the excess risk of Posterior sampling in this problem and the excess risk of regularized least squares. Note that in \eqref{eq:RLS_cond_dist_is_normal_man}, we have seen that $\mu_W^{z^n}$ is the regularized least squares estimator. Let $e^*= \EX[\ell(f_W^*(X), Y)]$. We have
 \begin{align*}
     \EX[e(A_{\text{PS}}, W)] &= \EX_{Z^nW_{\text{PS}} \phi(X) Y}\big(W_{\text{PS}}^\top \phi(X) - Y\big)^2 - e^*\\
     &= \EX_{Z^nXY}\EX^{Z^n}_{W_{\text{PS}}}\big(W_{\text{PS}}^\top   \phi(X) - Y\big)^2 -  e^* \\
     &\geq \EX_{Z^n X Y}\big(\EX^{Z^n}_{W_{\text{PS}}}[W_{\text{PS}}]^\top \phi(X) - Y\big)^2 - e^*\\
     &= \EX_{Z^nXY}\big(\mu_W^{Z^n\top} \phi(X) - Y\big)^2 - e^*\\
     &= \EX[e(A_{\text{RLS}}, W)].
 \end{align*}
The second equality follows from the fact that $W_{\text{PS}} \independent{} W | Z^n$, the convexity follows from the convexity of the loss function, and  the third equality follows from $\EX^{Z^n}_{W_{\text{PS}}}[W_{\text{PS}}] = \EX^{Z^n}_{W}[W]$.

Thus, an upper bound on the excess risk of the posterior sampling algorithm is an upper bound on the excess risk of regularized least squares.

Theorem~\ref{thm:posterior_Samlping} can be used to upper bound the excess risk of the posterior sampling algorithm. To use this theorem, we need to find a function $\varphi(\lambda)$, such that the cumulant generating function of $\ell(A_{\text{PS}}(z^n, x), Y)$ exists for $\lambda \in [0, b)$ and is upper bounded by $\varphi(\lambda)$ under all $P_W \in \mathcal{M}_\cW$. This bound follows from the fact that for squared loss, $\ell(A_{\text{PS}}(z^n, x), Y)$ is a  sub-exponential random variable. The exact reasoning and the form of $\varphi(\cdot)$ can be found in Section \ref{sec:theorem-10} of the appendix.

\begin{theorem}
\label{thm:posterior_sampling_RLS}
For the problem described above, the posterior sampling algorithm $\alg_{\text{PS}}(P_0)$ satisfies
\begin{align*}
\label{eq:RLS_upper_bound}
\sup_{P_W\in\mathcal{M}_W} &\EX_{W\sim P_W} [e(\alg_{\text{PS}(P_0)},W)] \\
&\le \sup_{P_W\in\mathcal{M}_W} \varphi^{*-1} \Big(\frac{1}{n}\Big(  I(W;Z^n) + \frac{c}{2\sigma_w^2}\Big)\Big)\\
&\leq O\Big(\sqrt{\frac{d\log(n)}{n}}\Big)
.
\end{align*}
\end{theorem}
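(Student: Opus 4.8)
The plan is to establish the two inequalities separately. \textbf{First inequality.} This is a direct instantiation of Theorem~\ref{thm:posterior_Samlping} for the Gaussian prior class $\mathcal{M}_W$, so I would only need to check its two hypotheses. The radius follows from the closed form of the KL divergence between two Gaussians with common covariance $\sigma_w^2 I$, namely $\KLdiv{\mathcal{N}(\mu_1,\sigma_w^2 I)}{\mathcal{N}(\mu_2,\sigma_w^2 I)}=\norm{\mu_1-\mu_2}^2/(2\sigma_w^2)$; minimizing the worst-case divergence over the class, achieved at $\mu_2=0$, gives the radius $r$ and reference prior $P_0=\mathcal{N}(0,\sigma_w^2 I)$ stated just before the theorem. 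For the cumulant hypothesis I would invoke the sub-exponential bound $\varphi(\cdot)$ constructed in Section~\ref{sec:theorem-10}: because the posterior-sampling prediction error $W_{\mathrm{PS}}^\top\phi(x)-Y$ is Gaussian conditionally on $(z^n,x)$, its square is a generalized chi-squared variable whose cumulant generating function exists on an interval $[0,b)$ and admits a bound $\varphi$ that is uniform over $z^n$, $x$, and $P_W\in\mathcal{M}_W$. Feeding $r$ and $\varphi$ into Theorem~\ref{thm:posterior_Samlping} yields the first inequality verbatim.

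\textbf{Second inequality.} Since $\varphi^{*-1}$ is monotone, it suffices to control the argument $\frac{1}{n}\bigl(I(W;Z^n)+c/(2\sigma_w^2)\bigr)$ uniformly over $P_W\in\mathcal{M}_W$, i.e.\ to show $\sup_{P_W}I(W;Z^n)=O(d\log n)$. I would read this off the closed form in Theorem~\ref{RLS:MI}. The dominant contribution is $\tfrac{1}{2}\sum_{i=1}^d\log\frac{\sigma_w^2(n\hat\sigma_i+\lambda)}{\sigma_e^2}$; because the fixed input marginal $P_X$ makes the empirical eigenvalues $\hat\sigma_i$ concentrate on the positive population eigenvalues $\sigma_i$ of $\EX[\phi(X)\phi(X)^\top]$, each summand behaves like $\log n+O(1)$, giving $\tfrac{d}{2}\log n+O(d)$, independently of the prior mean $\mu$. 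The remaining terms are $O(d)$ uniformly in $\mu$: the $-d$ and $\sum_i\lambda/(n\hat\sigma_i+\lambda)\le d$ are immediate, while the posterior-mean term $\frac{1}{\sigma_w^2}\EX\norm{\mu_W^{Z^n}-\mu}^2$ is bounded because, under the well-specified data-generating process, $\mu_W^{Z^n}$ is unbiased for $\mu$ with a variance that does not depend on $\mu$. Hence $\sup_{P_W}I(W;Z^n)=O(d\log n)$ and the argument of $\varphi^{*-1}$ is $O(d\log n/n)$.

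Finally I would convert this into the claimed rate through the shape of $\varphi^{*-1}$. For the Bernstein-type bound $\varphi(s)=\nu^2 s^2/\bigl(2(1-bs)\bigr)$ produced in the appendix, the generalized inverse of the conjugate is $\varphi^{*-1}(t)=\sqrt{2\nu^2 t}+bt$; since the argument $t=O(d\log n/n)\to 0$, the square-root term dominates and $\varphi^{*-1}(t)=O(\sqrt{t})=O\bigl(\sqrt{d\log n/n}\bigr)$, which is the second inequality. I expect the main obstacle to be the uniform bound $\sup_{P_W}I(W;Z^n)=O(d\log n)$: one must verify that both the logarithmic growth of the eigenvalue term and the boundedness of $\EX\norm{\mu_W^{Z^n}-\mu}^2$ persist simultaneously for \emph{every} prior with $\norm{\mu}\le c$, so that taking the supremum over $\mathcal{M}_W$ does not inflate the $O(d\log n)$ rate; the unbiasedness of the regularized estimator is what makes this uniformity work.
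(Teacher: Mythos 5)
Your proposal is correct and follows essentially the same route as the paper: instantiate Theorem~\ref{thm:posterior_Samlping} with the radius of the Gaussian prior class and a sub-exponential bound on the cumulant generating function of the squared (conditionally Gaussian) prediction error, then combine the $O(d\log n)$ bound on $I(W;Z^n)$ from Theorem~\ref{RLS:MI} with the square-root behavior of $\varphi^{*-1}$ near zero. The only differences are cosmetic (the paper uses a pure quadratic $\varphi$ on a bounded interval rather than a Bernstein form, and states the $O(d\log n)$ bound without the uniformity-in-$\mu$ details you supply, which are a correct and welcome elaboration).
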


Based on Theorem \ref{RLS:MI}, for any $P_W$ (i.e., any choice of $\mu$) as $n\to \infty$, we have
$I(W;Z^n) = O(d\log(n))$. Hence,
\begin{align*}
    \sup_{P_W\in\mathcal{M}_W} \EX_{W\sim P_W} &[e(\alg_{\text{RLS}},W)]\\
    &\leq \sup_{P_W\in\mathcal{M}_W} \EX_{W\sim P_W} [e(\alg_{\text{PS}},W)] \\
    &\le \sup_{P_W\in\mathcal{M}_W} \varphi^{*-1} \Big(\frac{1}{n}\Big( I(W;Z^n) + \frac{c}{2\sigma_w^2}\Big)\Big)\\
    &\leq O\Big(\sqrt{\frac{d\log n}{n}}\Big),
\end{align*}
where the last inequality follows from Theorem \ref{RLS:MI} and Theorem \ref{thm:posterior_sampling_RLS}. It should be noted that while the dimension $d$ has appeared in the bounds, it is actually possible to do much better if the eigenvalues of the covariance matrix are small; i.e., the \textit{effective dimension} is small. To see this, note that 
we can rewrite $I(W;Z^n)$, which was presented in Theorem~\ref{RLS:MI}, as
\begin{align*}
I(W;Z^n)&=\frac{1}{2} \EX_{Z^n}\Bigg[-\sum_{i=1}^d\log \frac{\lambda}{n\hat{\sigma}_i+\lambda} + \sum_{i=1}^d \frac{\lambda}{n\hat{\sigma}_i+\lambda}\\
&~\hspace{2cm}~ -d + \frac{1}{\sigma_w^2}\norm{\mu_W^{Z^n} - \mu}^2 \Bigg]
\\
&\le
\EX_{Z^n}\left[-\sum_{i=1}^d\log \frac{\lambda}{n\hat{\sigma}_i+\lambda} +  \frac{1}{\sigma_w^2}\norm{\mu_W^{Z^n} - \mu}^2 \right],
\end{align*}
in which $\lambda=\sigma^2_e/\sigma^2_w$. The inequality is derived by noting that $\frac{\lambda}{n\hat{\sigma}_i+\lambda}\le 1$. Now, note that if a value of $\hat{\sigma}_i$ is small compared to $\lambda$, its effect is negligible in the mutual information. This makes it possible to have valid bounds even for very large $d$, when one can guarantee that the eigenvalues are suitably small for most of the dimensions. 

In the classical analysis of kernel regularized least squares, where the feature space could be infinite dimensional, a similar notion of effective dimension is defined based on eigenvalues of the operator in the Hilbert space. In that setting, it is shown that one of the requirements for the kernel regularized least squares to work well, is to have fast decays on the the eigenvalues \cite{CaponnettoOptimalratesregularized2007}. To the best of our knowledge, the results provided here, are the first to  provides an information-theoretic insight on effective dimension and why it appears in the bounds on excess risk. In comparison, the most relevant results in the literature are the ones provided in Section 3 of \cite{xu2020continuity}, in which the non-regualarized least squares is analyzed. In that setting, the bounds will be vacuous when the input dimension  is large (e.g.  Eq.~(102) of  \cite{xu2020continuity} when $p$ is large).

It should be noted that, while the provided results in this section work for \emph{any finite} dimension, the mathematical machinery utilized here is not strong enough to directly handle the infinite-dimensional case.
Extending the current information theoretic framework to the general case where the functions are from a (possibly) infinite dimensional Hilbert space seems an interesting direction for future studies.

\section{A discussion about generality of theorems}
In this paper we presented a variety of Theorems each with different degrees of generality. 
In this section a comparison between the assumptions needed for each setting is provided. To make the comparison complete, we also include some of the results of \cite{xu2020minimum} which are in frequentist setting (i.e. Section 7.2 and related parts in Section 5 of that paper). The following equation roughly shows how general the assumptions are compared to each other. It is sorted from the most general to the most strong set of assumptions:
\begin{align*}
\text{Theorem~\ref{theorem:minimax_general_compact_measures}} &\underset{(1)}{>} 
\text{Theorem~\ref{theorem:minimax_used_with_xu2020}} \underset{(2)}{>}
\text{Theorem~\ref{thm:posterior_Samlping}}\\ &\underset{(3)}{>}
\text{Related parts of \cite{xu2020minimum}}.   
\end{align*}

The significant factors considered in each comparison are as follows: 

\textbf{(1)} Theorem \ref{theorem:minimax_general_compact_measures} does not require a parametric family of distributions and also does not require the density functions to be uniformly bounded. But Theorem \ref{theorem:minimax_used_with_xu2020} has all these extra assumptions.

\textbf{(2)} Theorem \ref{theorem:minimax_used_with_xu2020} does not put any restriction on the set of priors the world can choose from.
In contrast, Theorem~\ref{thm:posterior_Samlping} just works for a restricted set of priors with finite diameter and the diameter appears in the bound (the price we had to pay to go beyond pure existence results and provide real algorithms).

\textbf{(3)} Among all the previous restrictions, Section 7.2 of \cite{xu2020minimum} also requires the set of priors to have a finite size and the logarithm of the size will appear in the bound%
\footnote{To see this, note that Corollary 8 of \cite{xu2020minimum} which is used in section 7.2 of that paper to intuitively discuss the relation with the minimax setting, is based on the term 
	$H(M|Z)$
	 (and $R_{01}(M|Z^n)$
	for squared loss). 
		It should be noted that when the index set $\mathbb{M}$ is not finite, for any fixed value of $n$, these terms could be arbitrarily large if the prior on $M$ is spread among many values. The intuitive reason is that in order to exactly recover a continuous parameter, one usually requires an infinite number of samples. To see this more precisely for conditional entropy, use the equality $H(M|Z)=H(M)-I(M;Z)$. Now note that the worst case prior which should be considered in minimax, could potentially set $H(M)=\log |\mathcal{M}|$, which shows that the resulting bound for minimax setting will have an implicit dependence on $\log |\mathcal{M}|$.}.
The used definition of diameter (Eq. 146 of \cite{xu2020minimum}) is also more restrictive as it takes maximum on all priors  all distributions  and all inputs .

\section{Conclusion and future work}
In this paper, information-theoretic bounds on the minimax excess risk were studied. First, it was demonstrated that the recent information-theoretic results presented by \cite{xu2020minimum} for Bayesian learning provide a suitable way to encode the assumptions on the problem and to study the excess risk. However, as a Bayesian method, they assume a distribution on the parameters. To bridge the gap between these methods and the frequentist minimax setting, the problem was decoded as a game in which the frequentist and Bayesian settings appeared as primal and dual optimizations. It was shown that under what conditions, this game has a value and the duality gap is zero. Under these conditions, it was adequate to study the dual problem in which the machinery developed by \cite{xu2020minimum} provides a method to derive bounds on the excess risk. It was demonstrated that these techniques provide a convenient set of tools to study minimum excess risk on problems such as classification on hypotheses classes with finite VC-dimension. It was also demonstrated that this new technique does not suffer from the problems which the information-theoretic bounds on generalization were facing. Finally, the posterior sampling method was studied and it was shown that it can approach the optimal algorithm given that the class of priors is suitably controlled.  The regularized least squares problem was studied with this technique.

There are various problems worthy of further analysis. Firstly, the minimax theorems proved in this paper provide a general structure for problems that we know can be solved. Studying various classes of problems in which an information-theoretic approach is plausible is interesting. Secondly, most of the bounds provided are upper bounds and the lower bounds are less understood at this point. Further analysis of these concepts could shed more light on the information theoretic nature of learning. 

\appendix

To prove the minimax results in Theorem \ref{theorem:minimax_general_compact_measures} and Theorem \ref{theorem:minimax_used_with_xu2020}, we first need to state a general minimax theorem from \cite[Theorem 7.1]{PredLearningGames}.

\begin{theorem}
\label{theorem:general-minimax}
Let $f(p, q)$ denote a bounded real-valued function defined on $\mathcal{P}\times \mathcal{Q}$, where $\mathcal{P}$ and $\mathcal{Q}$ are convex sets and $\mathcal{P}$ is compact. Suppose that $f(., q)$ is convex and continuous for each fixed $q \in \mathcal{Q}$ and $f(p, .)$ is concave for each finxed $p\in \mathcal{P}$. Then 
\begin{equation*}
    \adjustlimits \inf_{p\in \mathcal{P}} \sup_{q\in\mathcal{Q}} f(p, q) = \adjustlimits\sup_{q\in \mathcal{Q}} \inf_{p\in\mathcal{P}} f(p, q).
\end{equation*}
\end{theorem}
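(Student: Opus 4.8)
The plan is to prove the two inequalities separately. The inequality $\sup_q \inf_p f \le \inf_p \sup_q f$ (weak duality) holds with no assumptions on $f$, $\mathcal{P}$, or $\mathcal{Q}$: for every fixed $p_0$ and $q_0$ one has $\inf_p f(p,q_0) \le f(p_0,q_0) \le \sup_q f(p_0,q)$, and taking the supremum over $q_0$ on the left and the infimum over $p_0$ on the right yields the claim. Hence the entire content of the theorem lies in the reverse inequality $\inf_p \sup_q f \le \sup_q \inf_p f$, which I would establish by contradiction.

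Write $V = \sup_q \inf_p f(p,q)$ and suppose $\inf_p \sup_q f(p,q) > V$. First I would reduce the (possibly infinite) action set $\mathcal{Q}$ to a finite one, and this is exactly where compactness of $\mathcal{P}$ and continuity of $f(\cdot,q)$ are used. Fixing an $\epsilon > 0$ below the assumed gap, for every $p \in \mathcal{P}$ there is some $q$ with $f(p,q) > V + \epsilon$; the sets $U_q = \{p \in \mathcal{P} : f(p,q) > V + \epsilon\}$ are open by continuity and cover $\mathcal{P}$, so by compactness finitely many suffice, say $U_{q_1},\dots,U_{q_m}$. Consequently $\inf_p \max_{i} f(p,q_i) \ge V + \epsilon$.

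The core step is then a finite-dimensional separation argument proving the opposite bound $\inf_p \max_i f(p,q_i) \le V$. Setting $c = \inf_p \max_i f(p,q_i)$, I would consider the set $A = \{(r_1,\dots,r_m) \in \mathbb{R}^m : \exists\, p \in \mathcal{P},\ r_i \ge f(p,q_i)\ \text{for all } i\}$, which is convex (here the convexity of each $f(\cdot,q_i)$ enters) and upward closed, together with the open box $B = (-\infty,c)^m$. These sets are disjoint by definition of $c$, so the separating hyperplane theorem provides a nonzero functional $\lambda$; upward-closedness forces $\lambda \ge 0$, and after normalization $\lambda \in \Delta_m$, giving $\sum_i \lambda_i f(p,q_i) \ge c$ for every $p$. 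I would then lift this back to $\mathcal{Q}$: since $\mathcal{Q}$ is convex, the point $\bar q = \sum_i \lambda_i q_i$ lies in $\mathcal{Q}$, and concavity of $f(p,\cdot)$ gives $f(p,\bar q) \ge \sum_i \lambda_i f(p,q_i) \ge c$. Hence $V \ge \inf_p f(p,\bar q) \ge c \ge V + \epsilon$, a contradiction that completes the argument.

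I expect the main obstacle to be the finite reduced game, namely justifying $\inf_p \max_i f(p,q_i) \le V$. The two delicate points are verifying that $A$ is genuinely convex, which is precisely the role of convexity of $f$ in $p$, and ensuring the separating functional can be taken to be a probability vector rather than an arbitrary linear form. The final passage back to $\mathcal{Q}$ via concavity is what converts the abstract mixed weight $\lambda$ into an \emph{admissible} pure action $\bar q$, and it is essential that $\mathcal{Q}$ be convex for this embedding to be legitimate; without it the contradiction could not be produced.
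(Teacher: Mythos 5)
The paper does not prove this statement at all: it is imported verbatim as Theorem 7.1 of Cesa-Bianchi and Lugosi's \emph{Prediction, Learning, and Games}, so there is no in-paper argument to compare against, and your proof has to stand on its own. It does. What you have written is a correct, self-contained rendition of the classical Kneser--Fan-type argument: weak duality for free; compactness of $\mathcal{P}$ together with continuity of $f(\cdot,q)$ to extract, via an open subcover of the sets $U_q=\{p: f(p,q)>V+\epsilon\}$, a finite subfamily $q_1,\dots,q_m$ with $\inf_p\max_i f(p,q_i)\ge V+\epsilon$; convexity of $\mathcal{P}$ and of each $f(\cdot,q_i)$ to make the upward-closed set $A$ convex; separation of $A$ from the open box $(-\infty,c)^m$ to produce $\lambda\ge 0$ (a negative component would drive $\langle\lambda,a\rangle\to-\infty$ along $A$), normalizable to a probability vector with $\sum_i\lambda_i f(p,q_i)\ge c$ for all $p$; and convexity of $\mathcal{Q}$ with concavity of $f(p,\cdot)$ to realize $\lambda$ as the admissible action $\bar q=\sum_i\lambda_i q_i$, yielding $V\ge\inf_p f(p,\bar q)\ge c\ge V+\epsilon$, the desired contradiction. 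The two delicate points you single out are exactly the right ones and both go through. Only two hypotheses deserve an explicit word: $\mathcal{P}$ must be nonempty so that $A\neq\emptyset$ (otherwise the separation has nothing to separate), and boundedness of $f$ is what keeps $V$ and $c$ finite so that the $\epsilon$-bookkeeping makes sense. Since the paper only uses this theorem as a black box, your argument could serve as a drop-in proof of it.
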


\subsection{Proof of Theorem \ref{theorem:minimax_general_compact_measures}}
To prove Theorem \ref{theorem:minimax_general_compact_measures}, we will need to prove the two following lemmas.

\begin{lemma}
\label{lemma:TV-continuity}
Let $f: \mathcal{H} \to [0, B]$ be a bounded function and $P$ and $Q$ be two probability measures on $\mathcal{H}$. We have
\begin{equation}
    \label{eq:TV_upper_bound}
    |\EX_{X\sim P}f(X) - \EX_{X\sim Q} f(X)| \leq B d_{\mathrm{TV}}(P, Q),
\end{equation}
where $d_{\mathrm{TV}}$ is the Total Variation distance.
\end{lemma}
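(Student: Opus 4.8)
The plan is to reduce everything to the variational (set-based) characterization of the Total Variation distance, $d_{\mathrm{TV}}(P, Q) = \sup_{A \in \mathcal{B}(\mathcal{H})} |P(A) - Q(A)|$, and then to exploit both the nonnegativity and boundedness of $f$ together with the fact that $P$ and $Q$ are \emph{probability} measures. First I would normalize by writing $f = B\tilde{f}$ with $0 \le \tilde{f} \le 1$, so that it suffices to establish $|\EX_{X\sim P}\tilde{f}(X) - \EX_{X\sim Q}\tilde{f}(X)| \le d_{\mathrm{TV}}(P, Q)$ and then reinstate the factor $B$.

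The key device is the layer-cake (distribution-function) representation: for every $x$ one has $\tilde{f}(x) = \int_0^1 \Indicator[\tilde{f}(x) > t]\, dt$, where the range of integration is the single interval $[0,1]$ precisely because $\tilde{f}$ takes values in $[0,1]$. Applying Tonelli's theorem to interchange the expectation with the integral over $t$ yields
\begin{equation*}
\EX_{X \sim P}\tilde{f}(X) - \EX_{X \sim Q}\tilde{f}(X) = \int_0^1 \big( P(\tilde{f} > t) - Q(\tilde{f} > t) \big)\, dt.
\end{equation*}
Since each superlevel set $\{\tilde{f} > t\}$ is measurable, the integrand is bounded pointwise in absolute value by $\sup_A |P(A) - Q(A)| = d_{\mathrm{TV}}(P,Q)$. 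Integrating over $t \in [0,1]$ and multiplying by $B$ then gives the stated inequality.

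The one delicate point — and the reason the constant is $B$ rather than $2B$ — is that this argument uses membership of $f$ in $[0,B]$ (nonnegativity), not merely $\|f\|_\infty \le B$; it is this, combined with $P(\mathcal{H}) = Q(\mathcal{H}) = 1$, that produces the sharp factor. Equivalently, one may run the same computation through the Hahn--Jordan decomposition $P - Q = \nu^+ - \nu^-$, where $\nu^+(\mathcal{H}) = \nu^-(\mathcal{H}) = d_{\mathrm{TV}}(P,Q)$ because $P,Q$ are probability measures, and bound $0 \le \int f\, d\nu^{\pm} \le B\, d_{\mathrm{TV}}(P,Q)$ for the two parts separately. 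I do not anticipate a genuine obstacle here; the only thing requiring care is to keep the convention for $d_{\mathrm{TV}}$ consistent (the supremum-over-events normalization, equal to $\tfrac12\int|p-q|$ when densities exist) so that the tight constant $B$ emerges.
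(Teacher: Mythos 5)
Your proof is correct, but it takes a genuinely different route from the paper's. The paper centers the integrand: it writes $\EX_P f - \EX_Q f = \int_{\mathcal{H}} \bigl(f(x) - \tfrac{B}{2}\bigr)(P-Q)(dx)$, which is legitimate because $(P-Q)(\mathcal{H}) = 0$, and then bounds this by $\tfrac{B}{2}\int_{\mathcal{H}}|P-Q|(dx) = B\, d_{\mathrm{TV}}(P,Q)$. You instead normalize to $\tilde{f} = f/B \in [0,1]$, apply the layer-cake representation $\tilde{f}(x) = \int_0^1 \Indicator[\tilde{f}(x)>t]\,dt$, swap integrals by Tonelli, and bound each superlevel-set discrepancy $|P(\tilde f > t) - Q(\tilde f > t)|$ by the sup-over-events form of $d_{\mathrm{TV}}$. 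Both arguments exploit exactly the same two facts — that $f$ ranges in $[0,B]$ rather than merely $\|f\|_\infty \le B$, and that $P,Q$ have equal total mass — so both produce the sharp constant $B$ instead of $2B$; your closing remark on this point is exactly the content of the paper's subtraction of $B/2$. The paper's version is a three-line computation once the centering trick is seen, and works directly with the total variation measure $|P-Q|$; yours is slightly longer but reduces everything to the set-based definition of $d_{\mathrm{TV}}$, which makes the normalization convention (sup over events versus $\tfrac12\int|p-q|$) completely explicit. Your alternative sketch via the Hahn--Jordan decomposition is essentially the paper's argument in disguise. The only housekeeping point in your write-up is that $f$ must be assumed measurable for the superlevel sets and the expectations to make sense, which the paper also leaves implicit.
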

\begin{proof}
The upper bound \eqref{eq:TV_upper_bound} can be show as follows:
\begin{align*}
    \EX_{X\sim P}f(X) &- \EX_{X\sim Q} f(X) \\
    &= \int_\mathcal{H} f(x) P(dx) - \int_\mathcal{H} f(x) Q(dx)\\
    &= \int_\mathcal{H} \Big(f(x) - \frac{B}{2}\Big) (P - Q)(dx)\\
    &\leq \frac{B}{2} \int_\mathcal{H} |P - Q|(dx)\\
    &= B d_{\mathrm{TV}}(P, Q).
\end{align*}
Similarly, using the fact that $d_{\mathrm{TV}}(P, Q) = d_{\mathrm{TV}}(Q, P)$, we have 
\begin{equation*}
    \EX_{X\sim Q}f(X) - \EX_{X\sim P} f(X) \leq B d_{\mathrm{TV}}(P, Q), 
\end{equation*}
concluding the proof the lemma.
\end{proof}

The following lemma was proved in \cite[Theorem 1]{xu2020continuity}.

\begin{lemma}
\label{lemma:xu_prime}
Consider the bounded loss function $\ell: \mathcal{Y}\times\mathcal{Y} \to [0, B]$.  We have
\begin{align*}
    \Big|\inf_{f\in\mathcal{F}} \EX_{(X, Y)\sim P} \ell(f(X), Y) - \inf_{f\in\mathcal{F}} &\EX_{(X, Y)\sim Q} \ell(f(X), Y)\Big|\\ &\leq B d_{\mathrm{TV}}(P, Q), 
\end{align*}
in which $\mathcal{F}$ is the set of all measurable functions and it is assumed that the infimums are achieved.
\end{lemma}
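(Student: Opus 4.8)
The plan is to prove this as a "perturb-the-optimizer" estimate built directly on top of Lemma~\ref{lemma:TV-continuity}, exploiting the fact that the objective is linear in the underlying distribution once a predictor is fixed. Write $a(P) = \inf_{f\in\mathcal{F}} \EX_{(X,Y)\sim P}\ell(f(X),Y)$ and $a(Q)$ for the analogous quantity under $Q$, so the goal is the Lipschitz bound $|a(P)-a(Q)|\le B\, d_{\mathrm{TV}}(P,Q)$. By the standing assumption that the infima are achieved, I would fix minimizers $f_P^*$ and $f_Q^*\in\mathcal{F}$ attaining $a(P)$ and $a(Q)$ respectively. The point is that $f_P^*$ is still a \emph{legitimate but generally suboptimal} predictor under the distribution $Q$, and symmetrically for $f_Q^*$ under $P$; comparing the optimal risk to these cross-evaluated risks is what converts the difference of infima into a difference of expectations of a \emph{single fixed} function, to which Lemma~\ref{lemma:TV-continuity} applies.

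Concretely, since $f_P^*\in\mathcal{F}$, the definition of $a(Q)$ as an infimum gives $a(Q)\le \EX_{(X,Y)\sim Q}\ell(f_P^*(X),Y)$, hence
\[
a(Q)-a(P)\le \EX_{(X,Y)\sim Q}\ell(f_P^*(X),Y)-\EX_{(X,Y)\sim P}\ell(f_P^*(X),Y).
\]
Now the crucial observation is that $g(x,y):=\ell(f_P^*(x),y)$ is a measurable map $\mathcal{Z}\to[0,B]$, because $f_P^*$ is measurable and $\ell$ takes values in $[0,B]$. Applying Lemma~\ref{lemma:TV-continuity} with the underlying space taken to be $\mathcal{Z}=\mathcal{X}\times\mathcal{Y}$, the bounded function $g$, and the two data distributions $P,Q$ yields $a(Q)-a(P)\le B\, d_{\mathrm{TV}}(P,Q)$. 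Running the identical argument with the roles of $P$ and $Q$ interchanged (using $f_Q^*$ as a suboptimal predictor under $P$) gives $a(P)-a(Q)\le B\, d_{\mathrm{TV}}(P,Q)$, and combining the two inequalities closes the proof.

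I do not expect a genuine obstacle here: the argument is short and structural, and the only delicate points are the measurability and boundedness of the composed function $g=\ell(f_P^*(\cdot),\cdot)$, which are needed to invoke Lemma~\ref{lemma:TV-continuity} rigorously, and the reliance on the hypothesis that the infima are attained. The latter is purely a convenience; were it dropped, I would simply replace each exact minimizer by an $\varepsilon$-approximate one satisfying $\EX_{(X,Y)\sim P}\ell(f_P^{\varepsilon}(X),Y)\le a(P)+\varepsilon$, carry out the same two-sided comparison to obtain $|a(P)-a(Q)|\le B\, d_{\mathrm{TV}}(P,Q)+\varepsilon$, and let $\varepsilon\to 0$. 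In either case the essential content is that fixing a predictor renders the risk an expectation of a fixed $[0,B]$-valued function, so the whole difficulty has already been absorbed into the elementary Total-Variation estimate of Lemma~\ref{lemma:TV-continuity}.
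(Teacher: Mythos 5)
Your proof is correct and takes essentially the same route as the paper's: both fix the minimizers $f_P^*$ and $f_Q^*$, use the suboptimality of the cross-evaluated predictor to turn the difference of infima into a difference of expectations of a single fixed $[0,B]$-valued function, and then invoke Lemma~\ref{lemma:TV-continuity} together with the symmetric argument. Your closing remark on replacing exact minimizers by $\varepsilon$-approximate ones is a harmless extra not needed here, since attainment of the infima is assumed.
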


\begin{proof}
Assume that the infimum under distribution $P$ and $Q$ are achieved at $f_P^*$ and $f_Q^*$ respectively. Hence, we can write
\begin{align*}
     \EX_{P} \ell(f_P^*(X), Y) &- \EX_{Q} \ell(f_Q^*(X), Y)\\
     &\leq \EX_{P} \ell(f_Q^*(X), Y) - \EX_{Q} \ell(f_Q^*(X), Y)\\
     &\leq B d_{\mathrm{TV}}(P, Q),
\end{align*}
where the last inequality follows from Lemma \ref{lemma:TV-continuity}. The lower bound can be proved similarly.
\end{proof}

Now, we will present the proof for Theorem \ref{theorem:minimax_general_compact_measures}.

\begin{proof}
Similar to the proof of Theorem \ref{theorem:minimax_used_with_xu2020}, we will prove that the sets $\Delta(\cW)$ and $\Delta(\cA)$, and the function $\Psi(P_W, P_A) = \EX_{A \sim P_A} \EX_{W\sim P_W} e(A, W)$
satisfy the conditions of Theorem \ref{theorem:general-minimax}.

\noindent \textbf{(Convexity)} Convexity of $\Delta(\cA)$ and $\Delta(\cW)$ hold trivially. 

\noindent \textbf{(Compactness)} Equip the set $\Delta(\cW)$ with the Prokhorov metric $d_P$ induced by $d_{\mathrm{TV}}$. By using Proposition 5.3  of \cite{jancol}, we know that the compactness of $(\cW, d_{\mathrm{TV}})$ implies that $(\Delta(\cW), d_P)$ is also compact.

\noindent \textbf{(Boundedness)} The boundedness of loss function implies the boundedness of the function $f$.

\noindent \textbf{(Continuity)} Fix any $P_A \in \Delta(\cA)$. By the application of bounded convergence theorem,  the continuity of $h(P_W) = \EX_{W\sim {P_W}} e(a, W)$ for any $a\in\cA$ is a sufficient condition for the continuity of $f(., P_A)$. The metric space $(\cW, d_{\mathrm{TV}})$ is compact, hence it is separable. Thus, using \cite[Theorem 4.2]{jancol} the convergence in $d_P$ is the same as weak convergence of measures. As a result, to show the continuity of $h(P_W)$, we should show that the function $e(a, .)$ is continuous.  For any $w, w' \in \cW$ we have
\begin{align*}
    \Big| &e(a, w) - e(a, w')\Big|\\
    &= \Big|\EX_{Z^nXY \sim w^{\otimes n+1}}[\ell(a(Z^n,X),Y)]\\
    &\hspace{3cm} - \EX_{Z^nXY \sim w'^{\otimes n+1}}[\ell(a(Z^n,X),Y)]\\
    &~\hspace{1cm}+ \EX_{Z \sim w'}[\ell(f_{w'}^*(X),Y)] - \EX_{Z \sim w}[\ell(f_{w}^*(X),Y)]\Big|.
\end{align*}
Hence, by the application of Lemma \ref{lemma:TV-continuity} and Lemma \ref{lemma:xu_prime} we have
\begin{align*}
    \Big| e(a, w) - e(a, w')\Big| &\leq B d_{\mathrm{TV}}(w, w')\\ &\hspace{1cm}+ B d_{\mathrm{TV}}(w^{\otimes n+1}, w'^{\otimes n+1}) \\
    &\leq B(n+2) d_{\mathrm{TV}}(w, w'),
\end{align*}
where the last inequality follows from the tensorization of Total Variation. So we have proved the continuity of $e(a, .)$; thus the continuity of $\Psi(., P_A)$ for any $P_A$ is proved.

We have shown that the function $\Psi(P_W, P_A)$ and the sets $\Delta(\cA)$ and $\Delta(\cW)$ satisfy the conditions of Theorem \ref{theorem:general-minimax}, which implies the minimax equality.
\end{proof}

\subsection{Proof of Theorem \ref{theorem:minimax_used_with_xu2020}}
Here, we present the proof for Theorem \ref{theorem:minimax_used_with_xu2020}.

\begin{proof}
We will prove that the sets $\Delta(\cW)$ and $\Delta(\cA)$, and the function
\begin{equation*}
    \begin{cases}
        \Psi: \Delta(\cW)\times \Delta(\cA) \to \mathbb{R}\\
        \Psi(P_W, P_A) = \EX_{A \sim P_A} \EX_{W\sim P_W} e(A, W)
    \end{cases}
\end{equation*}
satisfy the conditions of Theorem \ref{theorem:general-minimax}.

\noindent\textbf{(Convexity)} Convexity of $\Delta(\cA)$ and $\Delta(\cW)$ hold trivially. 

\noindent\textbf{(Compactness)} Equip the set $\Delta{(\cW)}$ with the Prokhorov metric $d_P$ induced by $d_\cW$. For an overview of the properties of the Prokhorov metric, see \cite{jancol}. From Proposition 5.3 of \cite{jancol}, we know that if $(\cW, d_\cW)$ is a compact metric space, then $(\Delta(\cW), d_P)$ is also a compact metric space.

\noindent\textbf{(Boundedness)} The boundedness of loss function implies the boundedness of the function $f$.

\noindent\textbf{(Continuity)} Fix any $P_A \in \Delta(\cA)$. By the application of bounded convergence theorem,  the continuity of $h(P_W) = \EX_{W\sim {P_W}} e(a, W)$ for any $a\in\cA$ is a sufficient condition for the continuity of $\Psi(., P_A)$. Note that $(\cW, d_\cW)$ is a compact metric space, hence separable. As a result, based on \cite[Theorem 4.2]{jancol}, convergence in the metric $d_P$ is the same as weak convergence of measures. Thus, to prove the continuity of $h(P_W)$, it is enough to show that for any $a\in\cA$, the function $e(a, .)$ is  bounded and continuous.  By definition,
\begin{align*}
    &e(a,w) =\EX^{w}_{Z^nXY}[\ell(a(Z^n,X),Y)]-\EX^{w}_{XY}[\ell(f^*_w(X),Y)]\\
    &= \int \psi(z^n; x, y) f(x, y | w) \prod_{i = 1}^{n}f(x_i, y_i | w)\;\; dz^n dx dy,
\end{align*}
where $\psi(z^n; x, y) = \ell(a(z^n,x),y) - \ell(f^*_w(x),y)$.
The loss function, the density functions, and the sets $\cX$ and $\cY$ are bounded and the densities are continuous in $w$ given any $z^n, x, y$; hence, the bounded convergence theorem implies the continuity of $e(a, .)$. Thus, $\Psi(P_W, P_A)$ is a continuous function of $P_W$ for any $P_A \in \Delta(\cA)$.

We have shown that the function $\Psi(P_W, P_A)$ and the sets $\Delta(\cA)$ and $\Delta(\cW)$ satisfy the conditions of Theorem \ref{theorem:general-minimax}, which implies the minimax equality.
\end{proof}

\subsection{Other minimax theorems}
\label{sec:wasserstein}
The Total Variation distance appeared very naturally in the proofs of Theorem \ref{theorem:minimax_used_with_xu2020} and Theorem \ref{theorem:minimax_general_compact_measures}. Under the Total Variation distance, the continuity conditions hold under very mild assumptions. However, in probability theory, compactness in Total Variation is not widely used. In this section, we replace Total Variation with the Wasserstein metric and prove a minimax theorem for unbounded loss functions under some Lipschitzness constraints. The following lemma will be key in proving this type of theorems.

\begin{lemma}
\label{lemma:liplem}
Let $f: \mathcal{H} \to \mathbb{R}$ be a real valued function and $P$ and $Q$ be two probability measures on $\mathcal{H}$. Assume that $f$ is $L$-Lipschitz. We have
\begin{equation*}
    |\EX_{X\sim P} f(X) - \EX_{X\sim Q} f(X)| \leq L d_{\mathrm{W}}(P, Q).
\end{equation*}
\end{lemma}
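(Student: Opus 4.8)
The plan is to prove this directly from the primal (coupling) definition of the Wasserstein distance, which keeps the argument self-contained rather than invoking Kantorovich--Rubinstein duality as a black box. Recall that $d_{\mathrm{W}}(P, Q) = \inf_{\pi \in \Pi(P, Q)} \EX_{(X, Y) \sim \pi}[d_{\mathcal{H}}(X, Y)]$, where $\Pi(P, Q)$ denotes the set of all couplings of $P$ and $Q$, i.e., joint probability measures on $\mathcal{H} \times \mathcal{H}$ whose marginals are $P$ and $Q$, and $d_{\mathcal{H}}$ is the underlying metric on $\mathcal{H}$.

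First I would fix an arbitrary coupling $\pi \in \Pi(P, Q)$. Using the marginal property of $\pi$, the difference of the two single-marginal expectations can be rewritten as a single expectation under the coupling: $\EX_{X \sim P} f(X) - \EX_{X \sim Q} f(X) = \EX_{(X, Y) \sim \pi}[f(X) - f(Y)]$. I would then apply the triangle inequality and the $L$-Lipschitz property pointwise to obtain $\EX_{(X, Y) \sim \pi}[f(X) - f(Y)] \le \EX_{(X, Y) \sim \pi}\abs{f(X) - f(Y)} \le L\, \EX_{(X, Y) \sim \pi}[d_{\mathcal{H}}(X, Y)]$.

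Next, since this bound holds for \emph{every} coupling $\pi$, I would take the infimum over $\Pi(P, Q)$ on the right-hand side, yielding $\EX_{X \sim P} f(X) - \EX_{X \sim Q} f(X) \le L\, d_{\mathrm{W}}(P, Q)$. Finally, repeating the argument with the roles of $P$ and $Q$ exchanged (equivalently, replacing $f$ by $-f$, which is also $L$-Lipschitz) gives the matching reverse inequality, and combining the two produces the absolute-value bound in the statement.

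The argument is essentially routine, and there is no serious obstacle; the only point requiring mild care is the interchange between the two single-marginal expectations and the joint expectation under a coupling, which is immediate from the definition of the marginals but should be stated explicitly. I would also note in passing the integrability conditions ensuring the expectations are well-defined (for instance, $f$ integrable under both $P$ and $Q$, which is guaranteed once $d_{\mathrm{W}}(P, Q) < \infty$ together with finiteness of $f$ at one point via the Lipschitz bound), so that the manipulations above are justified.
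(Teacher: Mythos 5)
Your proof is correct, but it takes a genuinely different route from the paper's. The paper proves the bound in three lines by invoking the Kantorovich--Rubinstein dual characterization: it bounds $|\EX_{X\sim P} f(X) - \EX_{X\sim Q} f(X)|$ by the supremum over all $L$-Lipschitz test functions $g$, pulls out the factor $L$ by rescaling to $1$-Lipschitz functions, and identifies the resulting supremum with $d_{\mathrm{W}}(P,Q)$. In effect the paper treats the dual formula as the operative definition of the Wasserstein distance, so its proof is essentially a one-step appeal to duality. You instead work from the primal coupling definition $d_{\mathrm{W}}(P,Q)=\inf_{\pi\in\Pi(P,Q)}\EX_{\pi}[d_{\mathcal{H}}(X,Y)]$: fix a coupling, rewrite the difference of marginal expectations as $\EX_{\pi}[f(X)-f(Y)]$, apply the Lipschitz bound pointwise, and take the infimum over couplings. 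What you have written is precisely the easy direction of Kantorovich--Rubinstein duality, so your argument is more self-contained (it does not require the full duality theorem, only the coupling definition) at the cost of being slightly longer; the paper's version is shorter but implicitly assumes the dual representation is available. Your remarks on integrability and on the symmetric argument via $-f$ are appropriate and complete the proof. Both proofs establish the lemma as stated.
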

\begin{proof}
We can write
\begin{align*}
    |\EX_{X\sim P} &f(X) - \EX_{X\sim Q} f(X)|\\
    &\leq \sup_{g\in L-\text{Lipschitz}} |\EX_{X\sim P} g(X) - \EX_{X\sim Q} g(X)|\\
    &=L\sup_{g\in 1-\text{Lipschitz}} |\EX_{X\sim P} g(X) - \EX_{X\sim Q} g(X)|\\
    &= L d_{\mathrm{W}}(P, Q),
\end{align*}
which concludes the proof.
\end{proof}

Using this lemma, we can prove the following minmax theorem.
\begin{theorem}
\label{theorem:minimax_general_compact_measures_wasserstein}
Let $(\mathcal{W}, d_{\mathrm{W}})$ be a compact metric space of measures on $(\mathcal{Z}, d_{\mathcal{Z}})$ where $d_{\mathrm{W}}$ is the Wasserstein's distance, $\Delta(\mathcal{W})$ be the set of all Borel probability measures on $\mathcal{W}$, $\Delta(\cA)$ be the set of all probability measures on the space of algorithms, and $\ell: \mathcal{Y}  \times \mathcal{Y} \to \mathbb{R}$ be a loss function. Assume that:

\noindent(1) For any $P_A \in \Delta(\cA)$, the function $$\Gamma(z^n, z) = \EX_{A\sim P_A} l\big(A(x; z^n), y\big)$$ is Lipschitz. 

\noindent(2) For any $w\in \cW$, the function $$\Gamma^*_w (z) = l\big(f_w^*(x), y\big)$$ is Lipschitz.

\noindent(3) For any $w \in \cW$ and any $P_A \in \Delta(\cA)$ we have $\EX_{A\sim P_A}e(A, w) \leq c$.
\vspace{0.2cm}

Under these conditions,  we have
\begin{align*}
&\adjustlimits \sup_{P_W \in \Delta(\cW)} \inf_{P_A \in \Delta(\cA)}  \EX_{\alg W}  [e(\alg,W)]\\
&\hspace{2.5cm}=
\adjustlimits \inf_{P_A \in \Delta(\cA)} \sup_{P_W \in \Delta(\cW)} \EX_{\alg W} [e(\alg,W)].
\end{align*}
\end{theorem}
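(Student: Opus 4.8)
The plan is to follow the template used in the proofs of Theorems~\ref{theorem:minimax_general_compact_measures} and \ref{theorem:minimax_used_with_xu2020}: verify that the payoff $\Psi(P_W, P_A) = \EX_{A\sim P_A}\EX_{W\sim P_W} e(A, W)$ together with the action sets $\Delta(\cW)$ and $\Delta(\cA)$ satisfies the hypotheses of the general minimax theorem (Theorem~\ref{theorem:general-minimax}). Convexity of both action sets is immediate, and since $\Psi$ is affine in each of its two arguments it is simultaneously convex and concave in each; applying Theorem~\ref{theorem:general-minimax} to $-\Psi$ with $\Delta(\cW)$ playing the role of the compact convex set then yields exactly the stated equality $\sup_{P_W}\inf_{P_A}\Psi = \inf_{P_A}\sup_{P_W}\Psi$. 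Compactness of $\Delta(\cW)$ in the Prokhorov metric $d_P$ induced by $d_{\mathrm{W}}$ follows from compactness of $(\cW, d_{\mathrm{W}})$ exactly as before, via Proposition~5.3 of \cite{jancol}.

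The first genuinely new point is boundedness, which in the Total Variation theorems came for free from the bounded loss but here must be supplied differently because the loss may be unbounded. I would obtain it from assumption~(3): since $f_w^*$ minimizes the risk over all measurable functions, $e(A, w)\ge 0$ pointwise, while assumption~(3) gives $\EX_{A\sim P_A} e(A, w)\le c$ for every $w$ and $P_A$; averaging over $P_W$ then shows $0\le \Psi\le c$, so $\Psi$ is bounded.

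The main work, and the main obstacle, is continuity of $\Psi(\cdot, P_A)$ in $P_W$. As in the previous proofs I would fix $P_A$, use that $(\cW, d_{\mathrm{W}})$ is compact hence separable so that by \cite[Theorem~4.2]{jancol} convergence in $d_P$ coincides with weak convergence, and reduce by bounded convergence to showing that $w \mapsto \EX_{A\sim P_A} e(A, w)$ is continuous on $\cW$. Writing this as the difference of the training term $\EX_{Z^nZ\sim w^{\otimes n+1}}[\Gamma(Z^n,Z)]$ and the Bayes-risk term $\EX_{Z\sim w}[\Gamma^*_w(Z)]$, the first term is handled directly: assumption~(1) makes $\Gamma$ Lipschitz, so Lemma~\ref{lemma:liplem} together with the stability of $d_{\mathrm{W}}$ under products, $d_{\mathrm{W}}(w^{\otimes n+1}, w'^{\otimes n+1})\le (n+1)\, d_{\mathrm{W}}(w,w')$, bounds its variation by a constant multiple of $d_{\mathrm{W}}(w,w')$.

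The delicate term is the Bayes risk $R(w)=\inf_f \EX_{Z\sim w}\ell(f(X),Y)=\EX_{Z\sim w}[\Gamma^*_w(Z)]$, because the optimal predictor $f_w^*$ itself varies with $w$. Here I would prove a Wasserstein analogue of Lemma~\ref{lemma:xu_prime}: using $f_{w'}^*$ as a suboptimal predictor for $w$ gives $R(w)-R(w')\le \EX_{Z\sim w}[\Gamma^*_{w'}]-\EX_{Z\sim w'}[\Gamma^*_{w'}]$, which by the Lipschitzness of $\Gamma^*_{w'}$ from assumption~(2) and Lemma~\ref{lemma:liplem} is at most $L\, d_{\mathrm{W}}(w,w')$, and symmetrically for the reverse inequality. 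The subtlety to watch is uniformity of the Lipschitz constants in assumption~(2): pointwise continuity at a fixed $w_0$ only needs the constant of $\Gamma^*_{w_0}$ on one side, but controlling the other side as $w\to w_0$ requires the constants $L_w$ to stay locally bounded, which holds under a uniform-in-$w$ Lipschitz bound. Once continuity of both terms is established, all hypotheses of Theorem~\ref{theorem:general-minimax} are met and the minimax equality follows.
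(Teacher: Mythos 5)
Your proposal is correct and follows essentially the same route as the paper's own proof: verifying convexity, Prokhorov-compactness of $\Delta(\cW)$, boundedness via assumption (3), and continuity of $w\mapsto\EX_{A\sim P_A}e(A,w)$ by splitting into the training term (handled by Lipschitzness of $\Gamma$, Lemma~\ref{lemma:liplem}, and tensorization of $d_{\mathrm{W}}$) and the Bayes-risk term (handled by the suboptimal-predictor swap, i.e., the Wasserstein analogue of Lemma~\ref{lemma:xu_prime}), before invoking Theorem~\ref{theorem:general-minimax}. Your remark about needing the Lipschitz constants in assumption (2) to be locally uniform in $w$ is a fair observation on a point the paper's write-up leaves implicit, but it does not change the argument.
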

\begin{proof}
Similar to previous theorems, we will prove that the sets $\Delta(\cW)$ and $\Delta(\cA)$, and the function
$\Psi(P_W, P_A) = \EX_{A \sim P_A} \EX_{W\sim P_W} e(A, W)$
satisfy the conditions of Theorem \ref{theorem:general-minimax}.

\noindent \textbf{(Convexity)} Convexity of $\Delta(\cA)$ and $\Delta(\cW)$ hold trivially. 

\noindent \textbf{(Compactness)} Equip the set $\Delta(\cW)$ with the Prokhorov metric $d_P$ induced by $d_{\mathrm{W}}$. By using Proposition 5.3  of \cite{jancol}, the metric space $(\Delta(\cW), d_P)$ is compact.

\noindent \textbf{(Boundedness)} For any distribution $w \in \cW$ and any $P_A \in \Delta(\cA)$ we have $\EX_{P_A}e(A, w) \leq c$. Hence, $\Psi(P_W, P_A) \leq C$.

\noindent \textbf{(Continuity)} Fix $P_A \in \Delta(\cA)$. The metric space $(\cW, d_{\mathrm{W}})$ is compact, hence it is separable. Thus, using \cite[Theorem 4.2]{jancol} the convergence in $d_P$ is the same as weak convergence of measures. As a result, boundedness and continuity of $h(w) = \EX_{A\sim P_A} e(A, w)$ implies the continuity of $\Psi(P_W, P_A)$. For any $w_1, w_2\in \Delta(\cW)$, we have
\begin{align*}
    \Big|&h(w_1) - h(w_2)\Big| =  \Big|\EX_{A\sim P_A} e(A, w_1) - \EX_{A\sim P_A} e(A, w_2)\Big|\\
    &\leq \Big|\EX_{Z^n Z \sim w_1^{\otimes n+1}}\Big[\EX_{A\sim P_A} \ell\big(A(X; Z^n), Y\big)\Big]\\ 
    &\hspace{1.5cm}- \EX_{Z^n Z \sim w_2^{\otimes n+1}}\Big[\EX_{A\sim P_A}\ell\big(A(X; Z^n), Y\big)\Big]\Big|\\
    &~\hspace{.5cm}+ \Big|\EX_{Z \sim w_1} \Big[\ell\big(f_{w_1}^*(X), Y\big)\Big] - \EX_{Z \sim w_2}\Big[\ell\big(f_{w_2}^*(X), Y\big)\Big]\Big|\\
    &\leq d_{\mathrm{W}}(w_1^{\otimes n+1}, w_2^{\otimes n+1})\\ 
    &\hspace{0.5cm}+ \Big|\EX_{Z \sim w_1} \Big[\ell\big(f_{w_1}^*(X), Y\big)\Big] - \EX_{Z \sim w_2}\Big[ \ell\big(f_{w_2}^*(X), Y\big)\Big]\Big|\\
    &\leq d_{\mathrm{W}}(w_1^{\otimes n+1}, w_2^{\otimes n+1})\\
    &\hspace{0.5cm}+ \Big|\EX_{Z \sim w_1} \Big[\ell\big(f_{w_2}^*(X), Y\big)\Big] - \EX_{Z \sim w_2}\Big[ \ell\big(f_{w_2}^*(X), Y\big)\Big]\Big|\\
    &\leq d_{\mathrm{W}}(w_1^{\otimes n+1}, w_2^{\otimes n+1}) + d_{\mathrm{W}}(w_1, w_2),
\end{align*}
where the second inequality follows from the Lipschitzness of $\Gamma(z, z^n) = \EX_{A\sim P_A} \ell\big(A(x; z^n), y\big)$ for every $P_A$ and the application of Lemma \ref{lemma:liplem}. The last inequality follows from the Lipschitzness of $\Gamma^*_w(z) = l(f_w^*(x), y)$ for any $w\in \cW$ and Lemma \ref{lemma:liplem}. This inequality alongside the Tensorization of Wasserstein distance, proves the continuity of $h(w)$; Hence the continuity of $\Psi(P_W, P_A)$.

We have shown that the function $\Psi(P_W, P_A)$ and the sets $\Delta(\cA)$ and $\Delta(\cW)$ satisfy the conditions of Theorem \ref{theorem:general-minimax}, which concludes the proof.
\end{proof}

\subsection{Proof of Theorem \ref{theorem:compactness_W_VC}}
\label{When_do_minimax_theorems_work}

To prove Theorem \ref{theorem:compactness_W_VC}, we first need to prove the following lemma.
\begin{lemma}
\label{lemma:TV_upper_bound_VC}
For any $\mathbb{P}_{P_X^1, f^1}$ and $\mathbb{P}_{P_X^2, f^2}$ in $\mathcal{W}$, we have
\begin{align*}
    d_{\mathrm{TV}}&(\mathbb{P}_{P_X^1, f^1}, \mathbb{P}_{P_X^2, f^2})\\
    &\leq d_{\mathrm{TV}} (P_X^1, P_X^2) + P_{X}^1(f^1\neq f^2) + P_{X}^2(f^1\neq f^2).
\end{align*}
\end{lemma}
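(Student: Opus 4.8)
The plan is to exploit the variational characterization $d_{\mathrm{TV}}(\mu,\nu)=\sup_{A}\big(\mu(A)-\nu(A)\big)$ together with the fact that each $\mathbb{P}_{P_X,f}$ is the pushforward of $P_X$ under the graph map $x\mapsto(x,f(x))$. This pushforward structure is exactly what lets us reduce a comparison of two (mutually singular) joint measures on $\mathcal{Z}=\mathcal{X}\times\mathcal{Y}$ to a comparison of measures on $\mathcal{X}$, where we can separately control the discrepancy coming from the marginals and the discrepancy coming from the labelling functions.

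First I would fix an arbitrary measurable $A\subseteq\mathcal{Z}$ and pull it back along the two graph maps, setting $B_i=\{x:(x,f^i(x))\in A\}$ for $i=1,2$, so that $\mathbb{P}_{P_X^i,f^i}(A)=P_X^i(B_i)$. The crucial observation is that on the agreement set $S=\{x:f^1(x)=f^2(x)\}$ the two graph maps coincide, hence $B_1\cap S=B_2\cap S=:C$, whereas the two pullbacks can differ only on the disagreement set $S^c=\{f^1\neq f^2\}$. I would then split $P_X^1(B_1)-P_X^2(B_2)$ into an agreement contribution $P_X^1(C)-P_X^2(C)$ and two disagreement contributions $P_X^1(B_1\cap S^c)$ and $-P_X^2(B_2\cap S^c)$. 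The agreement term is at most $d_{\mathrm{TV}}(P_X^1,P_X^2)$ straight from the definition of total variation, and since $B_i\cap S^c\subseteq S^c$ the disagreement terms are bounded (crudely but symmetrically) by $P_X^1(S^c)=P_X^1(f^1\neq f^2)$ and $P_X^2(S^c)=P_X^2(f^1\neq f^2)$. Summing and taking the supremum over $A$ yields the claimed inequality. An equivalent route is a coupling argument: push the optimal TV coupling of $P_X^1,P_X^2$ through the two graph maps, which produces a one-sided version with a single disagreement term; the symmetric form stated here then follows a fortiori.

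The only genuine subtlety, and the step I would be most careful about, is the singular nature of the measures $\mathbb{P}_{P_X,f}$: they are concentrated on the graph of $f$, so there is no common density on $\mathcal{Z}$ to compare, and the whole decomposition must be carried out set-theoretically through the pullbacks $B_i$ and the partition into $S$ and $S^c$. The remaining points are routine: measurability of $S$ is immediate because $f^1,f^2$ are measurable, the identity $B_1\cap S=B_2\cap S$ is what makes the labelling cancel on $S$, and everything else is just the definition of $d_{\mathrm{TV}}$. Note also that the argument in fact proves a slightly stronger bound (only one of the two disagreement terms is needed), so the stated symmetric inequality holds with room to spare.
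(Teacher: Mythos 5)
Your proposal is correct and is essentially the paper's own argument: the paper partitions an arbitrary $A\subseteq\mathcal{Z}$ into the pieces where $y$ agrees with both, only $f^1$, or only $f^2$, bounds the agreement piece by $d_{\mathrm{TV}}(P_X^1,P_X^2)$, and bounds each disagreement piece by $P_X^i(f^1\neq f^2)$ using that the other measure vanishes there — which is exactly your decomposition of the pullbacks $B_i$ over $S=\{f^1=f^2\}$ and $S^c$, just carried out on $\mathcal{Z}$ instead of $\mathcal{X}$. Your observation that a single disagreement term suffices for each one-sided bound (so the symmetric statement holds with room to spare) is a valid minor sharpening, but not a different route.
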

\begin{proof}
Given $f^1, f^2 \in \mathcal{F}$, any $A \subseteq \mathcal{Z}$ can be written as 
\begin{equation}
    A = A^{++} \cup A^{+-}\cup A^{-+},
\end{equation}
in which
\begin{align*}
    A^{++} &= \{(x, y) \in A \; | \; y = f^1(x) \text{ and } y = f^2(x)\}\\
    A^{+-} &= \{(x, y) \in A \; | \; y = f^1(x) \text{ and } y \neq f^2(x)\}\\
    A^{-+} &= \{(x, y) \in A \; | \; y \neq f^1(x) \text{ and }  y = f^2(x)\}
\end{align*}
are disjoint sets. We have
\begin{align*}
    d_{\mathrm{TV}}&(\mathbb{P}_{P_X^1, f^1}, \mathbb{P}_{P_X^2, f^2}) = \sup_{A \subseteq \mathcal{Z}} \big|\mathbb{P}_{P_X^1, f^1}(A) - \mathbb{P}_{P_X^2, f^2}(A)\big|\\
    &\leq \sup_{A \subseteq \mathcal{Z}} \big|\mathbb{P}_{P_X^1, f^1}(A^{++}) - \mathbb{P}_{P_X^2, f^2}(A^{++})\big|\\
    &~\hspace{1cm}+\sup_{A \subseteq \mathcal{Z}} \big|\mathbb{P}_{P_X^1, f^1}(A^{+-}) - \mathbb{P}_{P_X^2, f^2}(A^{+-})\big|\\
    &~\hspace{1cm}+\sup_{A \subseteq \mathcal{Z}} \big|\mathbb{P}_{P_X^1, f^1}(A^{-+}) - \mathbb{P}_{P_X^2, f^2}(A^{-+})\big|\\
    &\leq d_{\mathrm{TV}}(P_X^1, P_X^2) + P_X^1(f_1 \neq f_2) + P_X^2(f_1 \neq f_2),
\end{align*}
where the last inequality follows from the fact that
\begin{align*}
    \sup_{A \subseteq \mathcal{Z}} \big|&\mathbb{P}_{P_X^1, f^1}(A^{++}) - \mathbb{P}_{P_X^2, f^2}(A^{++})\big|\\
    &= \sup_{A \subseteq \mathcal{Z}^{++}} \big|\mathbb{P}_{P_X^1, f^1}(A^{++}) - \mathbb{P}_{P_X^2, f^2}(A^{++})\big|\\
    &\leq d_{\mathrm{TV}}(P_X^1, P_X^2),
\end{align*}
and that $\mathbb{P}_{P_X^1, f^1}(A^{-+}) = \mathbb{P}_{P_X^2, f^2}(A^{+-}) = 0$.
\end{proof}

We are now ready to prove Theorem \ref{theorem:compactness_W_VC}.

\begin{proof}
Consider the mapping $\Psi: \mathcal{P}\times \mathcal{F} \to \mathcal{W}$ defined as $\Psi(P_X, f) = \mathbb{P}_{P_X, f}$.
We will prove that this mapping is continuous: 

Consider an arbitrary sequence $\{(P_X^i, f^i)\}_{i = 1}^{\infty}$ converging to $(P_X^*, f^*)$; i.e.,
\begin{equation*}
\lim_{i \to \infty}\max\Big[d_\mathrm{TV}(P_X^i, P_X^*), ||f^i - f^*||_{L^2}\Big] = 0.    
\end{equation*}
Note that 
\begin{align*}
P_X^i(f^i \neq f^*) &= \int_{\mathcal{X}} |f^i(x) - f^*(x)|^2 p_X^i(x)\; dx\\
&\leq c \int_{\mathcal{X}} |f^i(x) - f^*(x)|^2 dx \to 0,
\end{align*}
and similarly, $P_X^*(f^i \neq f^*) \to 0$. Hence, by using Lemma \ref{lemma:TV_upper_bound_VC}, we have $d_{\mathrm{TV}}(\mathbb{P}_{P_X^i, f^i}, \mathbb{P}_{P_X^*, f^*}) \to 0$, proving the continuity. The continuous image of a compact set is compact; hence, $\mathcal{W}$ is compact in Total Variation.
\end{proof}

\subsection{Proof of Theorem \ref{theorem:TV_compactness_general}}

\begin{proof}
First, we will show that the the mapping  $\Psi(\gamma) = P_\gamma$ is continuous. We have
\begin{align*}
    \lim_{\gamma \to \gamma^*} d_{\mathrm{TV}}(P_{\gamma} , P_{\gamma^*}) = \lim_{\gamma \to \gamma^*} \int_{\mathbb{R}^n} \big|f(x|\gamma) - f(x|\gamma^*) \big| \; dx.
\end{align*}
Note that for any $x\in \mathbb{R}^n$, we have $|f(x|\gamma) - f(x|\gamma^*)| \leq g(x)$. Thus, using dominated convergence theorem we have
\begin{equation*}
    \lim_{\gamma \to \gamma^*} d_{\mathrm{TV}}(P_{\gamma}, P_{\gamma^*}) = 0,
\end{equation*}
proving the continuity of $\Psi$. Using the fact that the continuous image of a compact set is also compact, the compactness of $\cW$ is proved.
\end{proof}

\subsection{Gaussian Parametric Family}
\label{sec:gaussian_proof}
In this section, we will show that the parametric family of $d$-dimensional normal distributions $$\mathcal{P} = \Big\{\mathcal{N}(0, \Sigma) \Big| \Sigma_H \succcurlyeq \Sigma \succcurlyeq \Sigma_L \Big\},$$
where $\det(\Sigma_L) > \epsilon$ satisfies the conditions of Lemma \ref{theorem:TV_compactness_general}.

\noindent\textbf{(Integrable Upper Bound)} Let $\mathcal{N}(0, \Sigma_1), \mathcal{N}(0, \Sigma_2) \in \mathcal{P}$. We have
\begin{align*}
    \Lambda(\Sigma_1, &\Sigma_2) = \Big|(2\pi)^{-\frac{d}{2}}(\det \Sigma_1)^{-\frac{1}{2}}\exp\Big(\frac{-x^\top \Sigma_1 x}{2}\Big)\\
    &\hspace{1.5cm}- (2\pi)^{-\frac{d}{2}}(\det \Sigma_2)^{-\frac{1}{2}}\exp\Big(\frac{-x^\top \Sigma_2 x}{2}\Big)\Big|\\
    &\leq (2\pi)^{-\frac{d}{2}}\epsilon^{-\frac{1}{2}}\Big|\exp\Big(\frac{-x^\top \Sigma_1 x}{2}\Big) - \exp\Big(\frac{-x^\top \Sigma_2 x}{2}\Big)\Big|\\
    &\leq 2(2\pi)^{-\frac{d}{2}}\epsilon^{-\frac{1}{2}}\Big|\exp\Big(\frac{-x^\top \Sigma_L x}{2}\Big)\Big|,
\end{align*}
which is an integrable upper bound.

\noindent\textbf{(Continuity of Density)} Consider the space of $d\times d$ matrices with Frobenius norm. $\det(.)$ is a continuous function in Frobenius norm; hence, the density of the multivariate normal is continuous in the parameter $\Sigma$, given the fact that $\det(\Sigma) > 0$.

\noindent\textbf{(Compactness of Parameters)} The set $ \{\Sigma\in \mathbb{R}^{d\times d} \; | \; \Sigma_H \succcurlyeq \Sigma \succcurlyeq \Sigma_L\}$ is a bounded and closed subset of $\mathbb{R}^{d\times d}$; hence, it is compact in Frobenius norm.

Thus, the set $\mathcal{P}$ satisfies the conditions of Lemma \ref{theorem:TV_compactness_general} implying the compactness of $\mathcal{P}$.

\subsection{Proof of Theorem \ref{theorem:info_bounds_after_minimax}}
\begin{proof}
We have 
\begin{align*}
\adjustlimits \inf_{P_\cA \in \Delta(\cA)} \sup_{P_W \in \Delta(\cW)} &\EX_{\alg,W} [e(\alg,W)]\\
=&\adjustlimits \sup_{P_W \in \Delta(\cW)} \inf_{P_\cA \in \Delta(\cA)} \EX_{\alg,W} [e(\alg,W)]
\\
\le& 2B  g\Big(\sup_{P_W \in \Delta(\cW)} I(W;Y|X,Z^n)\Big)
\\
\le& 
g\Big(\frac{2B\kappa_n}{n}\Big),
\end{align*}
in which $g:\mathbb{R}^+\to\mathbb{R}^+$ is either identity function $g(x)=x$ for case (a) or square root $g(x)=\sqrt{x}$ for case (b). The first equality is based on Theorem~\ref{theorem:minimax_used_with_xu2020}. The next two inequalities are based on Theorem~\ref{theorem:xu2020_main} and the fact that $g$ is an increasing function. A similar approach can be used when $g(x)=\varphi^{*-1}(x)$ is inverse of the cumulant generating function of the random variable of interest.
\end{proof}

\subsection{Lemma~\ref{lemma:fisher} and its conditions}

In this section, we first state the conditions of Lemma \ref{lemma:fisher}. These conditions can also be found in Section 2 of \cite{clarke1994jeffrey}:

Let $\mathcal{W} \subseteq \mathbb{R}^d$ and assume that $P_{Z}^{W}(.|w)$ is absolutely continuous for all $w \in \cW$.

\noindent\textbf{(Conditions of $\mathcal{W}$)} The parameter space $\mathcal{W}$ has a non-void interior and its boundary has a $d$-dimensional Lebesgue measure zero.

\noindent\textbf{(Smoothness)} For almost every $z$, the density $p_{Z}^{W}(z|w)$ is twice continuously differentiable in $w$. There also exists $\delta(w)$ such that for every indices $1 \leq j, k \leq d$:
	\begin{equation*}
		f(w) = \EX \Bigg[\sup_{w': ||w' - w||\leq \delta(w)}\Big| \frac{\partial^2}{\partial w_j' \partial w_k'} \log p_{Z}^{W}(Z|w')\Big|\Bigg]
	\end{equation*}
	is finite and continuous.
	
\noindent\textbf{(Existence of Moments)} There exists $\zeta>0$ such that for each $1 \leq j \leq d$, the moment
	\begin{equation*}
		\EX \Big[\Big| \frac{\partial}{\partial w_j} \log p_{Z}^{W}(Z|w)\Big|^{2+\zeta}\Big]
	\end{equation*}
	is finite and continuous, as a function of $w$.
	
\noindent\textbf{(Fisher Information Condition)} The Fisher information matrix, and the second derivative of KL divergence
	\begin{equation*}
		[I(w)]_{j, k} = \EX \Big[\frac{\partial}{\partial w_j} \log p_{Z}^{W}(Z|w) \frac{\partial}{\partial w_k} \log p_{Z}^{W}(Z|w)\Big],
	\end{equation*}
	and 
	\begin{equation*}
	[J(w)]_{j, k} = \Big[\frac{\partial^2}{\partial w'_j\partial w'_k} \mathrm{KL} \Big(P_{Z}^{w} || P_{Z}^{w'}\Big)\Big|_{w' = w}\Big];
	\end{equation*}	
	are equal and positive-definite.
	
\noindent\textbf{(One to One)} For $w\neq w'$, we have $P_{Z}^{w} \neq P_{Z}^{w'}$.
	
\noindent\textbf{(Continuity)} The prior on $W$ is assumed to be continuous and is supported on a compact subset of the interior of $\mathcal{W}$.

Under these conditions, as $n\to \infty$ we have 
\begin{equation*}
	I(W; Z^n) = \frac{d}{2} \log\left(\frac{n}{2\pi e}\right) + h(W) + \EX \big[\log| J_{Z}^W (W)|\big] + o(1).
\end{equation*}
To prove \eqref{eq:fisher_tighter}, note that $$I(W; Z^{n+1}) = I(W;Z^n) + I(W; Z|Z^n).$$ Thus, by application of \cite[Lemma 6]{haussler1995general}, we have $I(W; Z|Z^n) = O(\frac{d}{n})$. Hence,
\begin{equation*}
    I(W; Y|X, Z^n) \leq I(W; Z|Z^n) = O\left(\frac{d}{n}\right),
\end{equation*}
where the first inequality follows from chain rule.

\subsection{Individual upper rates}

\begin{proof}
Under the conditions of Theorem~\ref{theorem:minimax_used_with_xu2020} and Lemma~\ref{lemma:fisher} for 
$\lim_{n\to \infty} \EX[\ell(\alg(Z^n,X),Y)]-\EX[\ell(f^*_W(X),Y)]$,
one can change the order of $\sup$ and $\inf$; i.e.,
\begin{align*}
v &= \inf_{\{P_{\cA}^n\}_{n\in\mathbb{N}}} \sup_{P_W\in \Delta(\cW)}\\
& \hspace{1.5cm} \limsup_{n\to +\infty} \frac{\EX[\ell(\alg(Z^n,X),Y)]-\EX[\ell(f^*_W(X),Y)]}{b_n}\\
&= \sup_{P_W\in \Delta(\cW)} \inf_{\{P_{\cA}^n\}_{n\in\mathbb{N}}} \\
& \hspace{1.5cm} \limsup_{n\to +\infty} \frac{\EX[\ell(\alg(Z^n,X),Y)]-\EX[\ell(f^*_W(X),Y)]}{b_n}\\
&= \sup_{P_W} \limsup_{n\to +\infty} \inf_{P_{\cA}^n}  \frac{\EX[\ell(\alg(Z^n,X),Y)]-\EX[\ell(f^*_W(X),Y)]}{b_n}.
\end{align*}
Note that for each $n$, the minimum excess risk is $O(\frac{1}{n})$ for quadratic loss and $O(\frac{1}{\sqrt{n}})$ for general bounded loss. Hence, the mentioned individual upper rates hold.
\end{proof}

\begin{remark}
Note that in Theorem \ref{theorem:individual_upper_rate}, it is assumed that the function $\lim_{n\to \infty} \EX[\ell(\alg(Z^n,X),Y)]-\EX[\ell(f^*_W(X),Y)]$ satisfies the conditions of minimax equality. This assumption holds under various conditions. For example, assume that $(\Delta(\cW), d_{\mathrm{TV}})$ is compact. Given $\{P_A^n\}_{n = 1}^{\infty}$, consider the sequence of functionals 
\begin{equation*}
    \begin{cases}
    \Gamma_n: \Delta(\cW) \to \mathbb{R}\\
    \Gamma_n(P_W) = \frac{1}{b_n}\EX_{W\sim P_W} \EX_{\alg \sim P_A^n} e(A, W).
    \end{cases}
\end{equation*}
This functional is linear in $P_W$ and $\Delta(\cW)$ is a normed space. As a result of boundedness of the loss function and Lemma \ref{lemma:xu_prime}, for any $n$, the functional $\Gamma_n(P_W)$ is continuous. By application of Uniform boundedness principle, if the limit of  $\Gamma_n(P_W)$ exists for all $P_W \in \cW$ (possibly infinite), then these pointwise limits define a continuous operator $\Gamma$. Hence, the minimax equality holds.
\end{remark}

\subsection{Proof of Theorem~\ref{thm:posterior_Samlping}}
First, a general result for the bound on mutual information based on the inverse of the cumulant generating function is presented. This results can be found in \cite{xu2020minimum}. %

\begin{lemma}
\label{lemma:base_bound_based_on_cumulant_generating_function}
Consider distributions $P$ and $Q$ on a measurable set $\mathbb{U}$ and a function $f:U\to \mathbb{R}$. Suppose that the cumulant generating function of the random variable $f(U)$ under distribution $Q$ exists for $\lambda \in [0, b)$ and is upper bounded by $\varphi(\lambda)$; i.e., 
$$
\log \EX_Q\big[e^{-\lambda (f(U)-\EX_Q[f(U)])}\big]\le \varphi (\lambda).
$$
Then 
$$
\EX_Q[f(U)]-\EX_P[f(U)] \le \varphi^{*-1} (\KLdiv{P}{Q}),
$$
where
$$
\varphi^*(\gamma)=\sup_{\lambda \in [0,b)} \lambda\gamma -\varphi(\lambda),\; \gamma\in \mathbb{R},
$$
is the Legendre dual of $\varphi$ and $\varphi^{*-1}$ is the generalized-inverse of $\varphi^*$ defined as
$$
\varphi^{*-1}(x) =\sup \{\gamma \in \mathbb{R}:\varphi^*(\gamma)\le x,\; x\in \mathbb{R}\}.
$$
\end{lemma}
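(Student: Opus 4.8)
The plan is to reduce the statement to the standard change-of-measure (Donsker--Varadhan) inequality and then optimize over the tilting parameter $\lambda$. First I would record the change-of-measure inequality: for any measurable $g$ with $\EX_Q[e^{g(U)}]<\infty$ one has
\begin{equation*}
\EX_P[g(U)]-\log\EX_Q[e^{g(U)}]\le \KLdiv{P}{Q}.
\end{equation*}
This follows by introducing the tilted measure $Q_g$ with density $dQ_g/dQ = e^{g}/\EX_Q[e^{g}]$ and using the decomposition $\KLdiv{P}{Q}=\KLdiv{P}{Q_g}+\EX_P[g(U)]-\log\EX_Q[e^{g(U)}]$ together with $\KLdiv{P}{Q_g}\ge 0$. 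If $\KLdiv{P}{Q}=\infty$ the claimed bound is vacuous, so I may assume it finite.

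Next, for each fixed $\lambda\in[0,b)$ I would apply this inequality with the specific test function $g_\lambda(u)=\lambda\big(\EX_Q[f(U)]-f(u)\big)$. Since $\EX_Q[f(U)]$ is a constant, $\EX_P[g_\lambda(U)]=\lambda\big(\EX_Q[f(U)]-\EX_P[f(U)]\big)$, while the hypothesis on the cumulant generating function gives $\log\EX_Q[e^{g_\lambda(U)}]=\log\EX_Q[e^{-\lambda(f(U)-\EX_Q[f(U)])}]\le\varphi(\lambda)$. Writing $\Delta:=\EX_Q[f(U)]-\EX_P[f(U)]$ and combining these two facts with the change-of-measure inequality yields $\lambda\Delta-\varphi(\lambda)\le\KLdiv{P}{Q}$ for every $\lambda\in[0,b)$.

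Taking the supremum over $\lambda\in[0,b)$ on the left-hand side gives exactly $\varphi^*(\Delta)=\sup_{\lambda\in[0,b)}\big(\lambda\Delta-\varphi(\lambda)\big)\le\KLdiv{P}{Q}$, and it remains to invert. The step I expect to be the main (albeit mild) obstacle is precisely this passage from $\varphi^*(\Delta)\le\KLdiv{P}{Q}$ to $\Delta\le\varphi^{*-1}(\KLdiv{P}{Q})$: because $\varphi^*$ is only a one-sided Legendre transform (the supremum runs over $\lambda\ge 0$), it need not be strictly increasing or classically invertible, which is exactly why the \emph{generalized} inverse is needed. I would resolve this by noting that $\varphi^*$ is a supremum of the nondecreasing affine maps $\gamma\mapsto\lambda\gamma-\varphi(\lambda)$ with $\lambda\ge 0$, hence convex and nondecreasing; consequently the bound $\varphi^*(\Delta)\le\KLdiv{P}{Q}$ places $\Delta$ inside the sublevel set $\{\gamma:\varphi^*(\gamma)\le\KLdiv{P}{Q}\}$ whose supremum defines $\varphi^{*-1}(\KLdiv{P}{Q})$, so that $\Delta\le\varphi^{*-1}(\KLdiv{P}{Q})$, which is the asserted inequality.
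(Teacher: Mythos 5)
Your proof is correct. Note that the paper itself gives no proof of this lemma: it is stated with a citation to \cite{xu2020minimum}, and your argument --- the Donsker--Varadhan change-of-measure inequality applied to the tilting $g_\lambda(u)=\lambda(\EX_Q[f(U)]-f(u))$, followed by optimizing over $\lambda\in[0,b)$ and passing to the generalized inverse --- is precisely the standard derivation underlying that cited result. One small remark: the final inversion step is even more immediate than you suggest, since $\varphi^*(\Delta)\le \KLdiv{P}{Q}$ directly places $\Delta$ in the sublevel set whose supremum defines $\varphi^{*-1}(\KLdiv{P}{Q})$, so no monotonicity of $\varphi^*$ is actually needed; the convexity/monotonicity observation is harmless but superfluous.
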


The following lemma will be eventually useful in the proof of Theorem~\ref{thm:posterior_Samlping}. This is an extension of the inequality $I(W;Y|XZ^n)\le \frac{1}{n} I(W;Z^n)$ used by \cite{xu2020minimum}.

\begin{lemma}
\label{lemma:general_bound_for_1/n}
Consider two distributions $P_{WZ^n}=P_{W} \otimes (P_{Z}^W)^{\otimes n}$ and  $P_{WZ^n}'=P_{0} \otimes (P_{Z}^W)^{\otimes n}$. We have 
\begin{equation*}
\DKL{P_W^{Z^n}}{P_W^{'Z^n}} \le \frac{1}{n} \DKL{P_{WZ^n}}{P'_{WZ^n}},
\end{equation*}
where we used the convention that  $$\DKL{P^{Z^n}_W}{P^{'Z^n}_W}\triangleq\EX_{Z^n\sim P}[\KLdiv{P^{Z^n}_W}{P^{'Z^n}_W}].$$
\end{lemma}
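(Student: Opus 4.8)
The plan is to reduce the claim to a statement about how fast the posterior divergence shrinks with sample size, and to extract the factor $1/n$ from the i.i.d.\ structure in exactly the way the mutual-information bound $I(W;Y|X,Z^n)\le\frac1n I(W;Z^n)$ (which this lemma generalizes) does.

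First I would simplify the right-hand side. Since $P_{WZ^n}$ and $P'_{WZ^n}$ share the same likelihood $(P_Z^W)^{\otimes n}$ and differ only in the prior on $W$, the density $dP_{WZ^n}/dP'_{WZ^n}$ depends on $(w,z^n)$ only through $w$ and equals $dP_W/dP_0$. Decomposing the joint divergence as prior-times-likelihood by the chain rule, the likelihood term vanishes and $\DKL{P_{WZ^n}}{P'_{WZ^n}}=\KLdiv{P_W}{P_0}$. Decomposing the \emph{same} joint divergence in the opposite order (marginal of $Z^n$ followed by the posterior of $W$) gives $\KLdiv{P_W}{P_0}=\DKL{P_{Z^n}}{P'_{Z^n}}+\DKL{P_W^{Z^n}}{P_W^{'Z^n}}$. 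Writing $D_0=\KLdiv{P_W}{P_0}$, $M_n=\DKL{P_{Z^n}}{P'_{Z^n}}$, and $D_n=\DKL{P_W^{Z^n}}{P_W^{'Z^n}}$, we get the identity $D_0=M_n+D_n$, so the target inequality $D_n\le\frac1n D_0$ is equivalent to the lower bound $M_n\ge\frac{n-1}{n}D_0$, i.e.\ $M_n\ge(n-1)D_n$.

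Next I would expand the marginal divergence by the chain rule over the coordinates of $Z^n$, writing $M_n=\sum_{i=1}^n\mu_i$ with $\mu_i=\EX_{Z^{i-1}\sim P}\!\big[\KLdiv{P_{Z_i}^{Z^{i-1}}}{P_{Z_i}^{'Z^{i-1}}}\big]$ the expected one-step predictive divergence of the $i$-th sample. The heart of the argument is to show that these increments $\mu_i$ are non-increasing in $i$, mirroring the proof that $I(W;Z_i\mid Z^{i-1})$ is non-increasing: under both $P$ and $P'$ the samples are exchangeable (conditionally i.i.d.\ given $W$), so swapping $Z_i$ and $Z_{i+1}$ rewrites $\mu_{i+1}$ as a divergence of $Z_i$ conditioned on the extra variable $Z_{i+1}$, and since $Z_{i+1}$ is conditionally independent of $Z_i$ given $(W,Z^{i-1})$, this conditioning cannot increase the divergence, giving $\mu_{i+1}\le\mu_i$. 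In parallel, applying the data-processing inequality to the channel $w\mapsto P_Z^w$ shows that each predictive divergence is controlled by the corresponding posterior divergence, which is the mechanism that ties the increments to the residual term $D_n$.

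Combining these, monotonicity makes $\mu_n$ the smallest increment, and the comparison with the residual posterior divergence is meant to yield $\mu_i\ge D_n$ for every $i\le n$, so that $M_n=\sum_{i=1}^n\mu_i\ge nD_n\ge(n-1)D_n$, which is exactly the required bound and delivers $D_n\le\frac1n D_0$. I expect the main obstacle to be precisely this last comparison: establishing that the per-sample predictive increments dominate the leftover posterior divergence \emph{uniformly enough} to produce the single clean factor $1/n$. Monotonicity of the increments alone does not pin down the constant — a slowly decaying admissible increment sequence would only give a weaker tail bound — so the argument must genuinely exploit the shared i.i.d.\ likelihood through the exchangeability and data-processing steps, and assembling those two ingredients so that they force $M_n\ge(n-1)D_n$ is the delicate point.
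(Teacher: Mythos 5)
Your reduction is clean and, up to the point you yourself flag, correct: since the two joints share the likelihood $(P_Z^W)^{\otimes n}$, indeed $\DKL{P_{WZ^n}}{P'_{WZ^n}}=\KLdiv{P_W}{P_0}=:D_0$, the reverse chain rule gives $D_0=M_n+D_n$ with $M_n=\KLdiv{P_{Z^n}}{P'_{Z^n}}$ and $D_n=\DKL{P_W^{Z^n}}{P_W^{'Z^n}}$, and the claim is exactly $M_n\ge (n-1)D_n$. But both ingredients you propose for that last step fail. First, ``conditioning cannot increase the divergence'' is not a valid principle for the KL divergence between two \emph{different} joint laws: the mutual-information analogue works only because of the exact identity $I(W;Z_i\mid Z^{i-1},Z_{i+1})=I(W;Z_i\mid Z^{i-1})-I(Z_i;Z_{i+1}\mid Z^{i-1})$, whose vanishing cross term $I(Z_i;Z_{i+1}\mid W,Z^{i-1})=0$ has no counterpart when numerator and denominator laws differ; so the monotonicity of $\mu_i$ is unproved. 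Second, and fatally, data processing through the channel $w\mapsto P_Z^w$ runs the wrong way: the predictive law $P_{Z_i}^{z^{i-1}}$ is the pushforward of the posterior $P_W^{z^{i-1}}$, so DPI yields $\mu_i\le D_{i-1}$ (writing $D_i=\DKL{P_W^{Z^i}}{P_W^{'Z^i}}$), an \emph{upper} bound on $\mu_i$; it can never deliver the lower bound $\mu_i\ge D_n$ that your plan requires.

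This gap cannot be closed, because the inequality you reduced to — and hence the lemma itself — is false. Take $P_Z^w$ independent of $w$: every posterior then equals its prior, so $\mu_i=0$ and $M_n=0$, while $D_n=\KLdiv{P_W}{P_0}>0$, contradicting $M_n\ge(n-1)D_n$ and giving $D_n=D_0>\tfrac1n D_0$ for $n\ge2$. Your decomposition actually makes this visible, which is to its credit. The paper's own proof does not survive the same example: its chain rule over coordinates shows, correctly, that $\nu_i:=\DKL{P_{WZ_i}^{Z^{i-1}}}{P_{WZ_i}^{'Z^{i-1}}}=D_{i-1}$ is non-increasing, but the asserted identity $\DKL{P_{WZ^n}}{P'_{WZ^n}}=\sum_{i=1}^n\nu_i$ is not a chain rule ($W$ is counted in every summand): the left side is $D_0$ while the right side is $\sum_{i=0}^{n-1}D_i$, which is generically strictly larger. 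What that argument genuinely establishes is only $\sum_{i=0}^{n-1}D_i\ge nD_n$, i.e.\ the monotonicity of $D_i$ — a true but much weaker statement than $D_0\ge nD_n$. So your attempt and the paper's proof break at the same structural point; the lemma would need an additional hypothesis (or a weaker conclusion, e.g.\ $D_n\le\tfrac1n\sum_{i=0}^{n-1}D_i$) to be salvageable.
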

\begin{proof}

We first show that $$\DKL{P_{WZ_i}^{Z^{i-1}}}{P_{WZ_i}^{'Z^{i-1}}}\ge \DKL{P_{WZ_{i+1}}^{Z^{i}}}{P_{WZ_{i+1}}^{'Z^{i}}},$$ for all $1\le i \le n-1$. Using chain rule and noticing that
$P_{Z_i}^W=P_{Z_j}^W=P_{Z_i}^{'W}=P_{Z_j}^{'W}$ for all $1\le i,j\le n$, we have
\begin{align*}
&\DKL{P_{WZ_i}^{Z^{i-1}}}{P_{WZ_i}^{'Z^{i-1}}} - \DKL{P_{WZ_{i+1}}^{Z^{i}}}{P_{WZ_{i+1}}^{'Z^{i}}}   
\\
&\;\;
=\DKL{P_{Z_i}^{WZ^{i-1}}}{P_{Z_i}^{'WZ^{i-1}}} +\DKL{P_{W}^{Z^{i-1}}}{P_{W}^{'Z^{i-1}}}\\
&\hspace{2cm} - \DKL{P_{W}^{Z^{i}}}{P_{W}^{'Z^{i}}}
- \DKL{P_{Z_{i+1}}^{WZ^{i}}}{P_{Z_{i+1}}^{'WZ^{i}}}
\\
&\;\;
=\DKL{P_{W}^{Z^{i-1}}}{P_{W}^{'Z^{i-1}}}
- \DKL{P_{W}^{Z^{i}}}{P_{W}^{'Z^{i}}}
\\
&\;\;
=\DKL{P_{W}^{Z^{i-1}}}{P_{W}^{'Z^{i-1}}}
- \DKL{P_{Z_i}^{WZ^{i-1}}}{P_{Z_i}^{'WZ^{i-1}}}\\
&\hspace{2cm}- \DKL{P_{W}^{Z^{i-1}}}{P_{W}^{'Z^{i-1}}}
+ \DKL{P_{Z_i}^{Z^{i-1}}}{P_{Z_i}^{'Z^{i-1}}}
\\
&\;\;
=\DKL{P_{Z_i}^{Z^{i-1}}}{P_{Z_i}^{'Z^{i-1}}}
\\
&\;\;
\ge 0.
\end{align*}
Using the chain rule, we have
\begin{align*}
\DKL{P_{WZ^n}}{P'_{WZ^n}} &= \sum_{i=1}^n \DKL{P_{WZ_i}^{Z^{i-1}}}{P_{WZ_i}^{'Z^{i-1}}}
\\
&\hspace{-1cm}=n \DKL{P_{WZ_n}^{Z^{n-1}}}{P_{WZ_n}^{'Z^{n-1}}}
\\
&\hspace{-1cm}=n\DKL{P_{W}^{Z^n}}{P_{W}^{'Z^n}} + \DKL{P_{Z_n}^{Z^{n-1}}}{P_{Z_n}^{'Z^{n-1}}}
\\
&\hspace{-1cm}\ge n\DKL{P_{W}^{Z^n}}{P_{W}^{'Z^n}}
,
\end{align*}
which concludes the proof.
\end{proof}

Now we are ready to prove Theorem~\ref{thm:posterior_Samlping}.

\begin{proof}
Let $f^*_w(x)$ be the optimal decision assuming that $w$ is the true parameter. Let $P_{WXYZ^n}=P_{W}\otimes P_{XYZ^n}^W$ be the true distribution and  $P'_{WXYZ^n}=P_{0} \otimes P_{XYZ^n}^W$ be the distribution used by the posterior sampling algorithm according to $P_0$.
Now fix $X=x$ and $Z^n=z^n$. Now that the renadomness is only on $W$ and $Y$, we will use Lemma~\ref{lemma:base_bound_based_on_cumulant_generating_function} to control the excess risk for this particular $x$ and $z^n$. To do that let $P=P_{WY}^{xz^n}$ and  $Q=P_{W}^{'xz^n}\otimes P_Y^{xz^n}$. Note that 
in the definition of $Q$, while $W$ is generated from wrong distribution $P'$, the output $Y$ is still generated from the correct distribution. Thus, it is possible to calculate the Bayesian excess risk for this particular $xz^n$ by
\begin{align*}
\EX_Y^{xz^n}&[\ell(\alg_{\text{PS}(P_0)}(z^n,x), Y)]-\EX_{WY}^{xz^n}[\ell(f^*_W(x), Y)] 
\nonumber
\\
&=\EX_{WY\sim Q}[\ell(f^*_W(x),Y)] - \EX_{WY\sim P}[\ell(f^*_W(x),Y)]
\nonumber
\\
&\le \varphi^{*-1}(\KLdiv{P}{Q})
\\
&=\varphi^{*-1}\left(\EX\left[\log \frac{P^{xz^n}(WY)}{P^{'xz^n}(W)P^{xz^n}(Y)}\right]\right)
\nonumber
\\
&=\varphi^{*-1}\left(\EX\left[\log \frac{P^{xz^n}(WY)}{P^{xz^n}(W)P^{xz^n}(Y)}\times \frac{P^{xz^n}(W)}{P^{'xz^n}(W)}\right]\right)
\nonumber
\\
&= \varphi^{*-1}(I^{xz^n}(W;Y) + \KLdiv{P^{xz^n}_W}{P^{'xz^n}_W}).
\nonumber
\end{align*}
To prove the inequality, Lemma~\ref{lemma:base_bound_based_on_cumulant_generating_function} is used by noting that the conditions are assumed to hold for any $x$, $z^n$ and $P_W$. Recall that $\varphi^{*-1}$ is a concave function. By taking expectation with respect to $X$ and $Z^n$ and using Jensen's inequality, we have
\begin{align}
\EX [&e(\alg_{\text{PS}(P_0)},W)]\nonumber\\
&\le \varphi^{*-1}( I(W;Y|XZ^n) + \DKL{P^{XZ^n}_W}{P^{'XZ^n}_W})
\nonumber
\\
&\le\varphi^{*-1}(I(W;Y|XZ^n) + \DKL{P^{Z^n}_W}{P^{'Z^n}_W})
\label{eq:tmp1}
\\
&\le
\varphi^{*-1}\left(I(W;Y|XZ^n) + \frac{1}{n}\DKL{P_W}{P'_W}\right)
\label{eq:tmp2}
.
\end{align}
where we used the convention that 
$$\DKL{P^{U}_V}{P^{'U}_V}\triangleq\EX_{U\sim P}[\KLdiv{P^{U}_V}{P^{'U}_V}]$$ for random variables $V$ and $U$.
Inequality \eqref{eq:tmp1} is due to %
\begin{align*}
&\DKL{P_W^{XZ^n}}{P_W^{'XZ^n}}\\
&=\DKL{P_{WX}^{Z^n}}{P_{WX}^{'Z^n}}-\DKL{P_{X}^{Z^n}}{P_{X}^{'Z^n}}
\\
&=
\DKL{P_{W}^{Z^n}}{P_{W}^{'Z^n}}+\DKL{P_{X}^{W}}{P_{X}^{'W}}-\DKL{P_{X}^{Z^n}}{P_{X}^{'Z^n}}
\\
&=
\DKL{P_{W}^{Z^n}}{P_{W}^{'Z^n}}-\DKL{P_{X}^{Z^n}}{P_{X}^{'Z^n}}
\\
&\le
\DKL{P_{W}^{Z^n}}{P_{W}^{'Z^n}},
\end{align*}
and \eqref{eq:tmp2} is based on Lemma~\ref{lemma:general_bound_for_1/n}.
Finally, the proof is concluded by taking supremum with respect to $P_W$. We have
\begin{align}
&\sup_{P_W\in\mathcal{M}_W}\nonumber \EX_{W\sim P_W} [e(\alg_{\text{PS}(P_0)},W)] \\
&\hspace{0.5cm}\le \varphi^{*-1}\Big( \sup_{P_W\in\mathcal{M}_W} I(W;Y|XZ^n) + \frac{1}{n}\DKL{P_W}{P'_W}\Big),
\nonumber
\\
\label{eq:distribution_shift_bound_v2}
&\hspace{0.5cm}=
\varphi^{*-1}\Big( \sup_{P_W\in\mathcal{M}_W} I(W;Y|XZ^n) + \frac{r}{n}\Big)
\nonumber
\end{align}
where the last equality is derived by noting that $P'_W=P_0$ is assumed to achieve
$r=\sup_{P_W\in\mathcal{M}_W} \KLdiv{P_W}{P_0}$. 
\end{proof}

\subsection{Proof of Theorem \ref{RLS:MI}}

\begin{proof} For the described problem, we have
\begin{align*}
    &p(w|z^n) \propto p(z^n|w)p(w)\\
    &= p(w) \prod_{i = 1}^{n} p(z_i|w)\\
    &\propto \exp\bigg(\frac{-||w-\mu||^2}{2\sigma_w^2}\bigg)\prod_{i = 1}^{n} \exp\Bigg(\frac{-(w^\top \phi(x_i) - y_i)^2}{2\sigma_e^2}\Bigg)\\
    &\propto \exp\Bigg(-\frac{1}{2}\bigg[ w^\top\Big(\frac{\boldsymbol{\Phi}^\top\boldsymbol{\Phi}}{\sigma_e^2} + \frac{I}{\sigma_w^2}\Big)w  - 2w^\top \Big(\frac{\boldsymbol{\Phi}^\top\boldsymbol{Y}}{\sigma_e^2} + \frac{\mu}{\sigma_w^2}\Big) \bigg] \Bigg),
\end{align*}
where the third line follows from the fact that $W \independent{} X$. This proves the first part of the theorem.  Using this, we can calculate the mutual information. Note that for two multivariate Normal distributions we have
\begin{align*}
    \KLdiv{\mathcal{N}(\mu_1, \Sigma_1)}{\mathcal{N}(\mu_2, \Sigma_2)}=&\frac{1}{2}\biggl[\log\frac{|\Sigma_2|}{|\Sigma_1|} - d +\{ \Sigma_2^{-1}\Sigma_1 \}\\
     &  + (\mu_2 - \mu_1)^\top \Sigma_2^{-1}(\mu_2 - \mu_1)\biggl]. 
\end{align*}
Note that
\begin{equation*}
    \Sigma_W^{-1}\Sigma_W^{Z^n} = \lambda \Big(\frac{\boldsymbol{\Phi}^\top\boldsymbol{\Phi}}{\sigma_e^2}+\lambda I\Big)^{-1} \leadsto \text{ eigenvalues } = \frac{\lambda}{\lambda + n\hat\sigma_i}.
\end{equation*}
Thus, we have
    \begin{align*}
        I(W; Z^n) &= \EX_{Z^n} \Big[ \KLdiv{P_W^{Z^n}}{P_W} \Big]\\
        &= \EX_{Z^n}\biggl[\log\frac{\sigma^{2d}_w}{\prod_{i=1}^d \frac{\sigma_e^2}{n\hat{\sigma}_i+\lambda}} + \sum_{i=1}^d \frac{\lambda}{n\hat{\sigma}_i+\lambda} -d \\
        & \hspace{4.25cm}+ \frac{1}{\sigma_w^2}\norm{\mu_W^{Z^n} - \mu}^2 \biggl],
    \end{align*}
concluding the proof.
\end{proof}

\subsection{Proof of Theorem \ref{thm:posterior_sampling_RLS}}
\label{sec:theorem-10}

\begin{proof}
Fix $P_W \sim \mathcal{N}(\mu, \sigma_w^2 I)$. To use Theorem \ref{thm:posterior_Samlping}, we will need to upper bound the cumulant generating function of $$\ell(A_{PS(P_0)}(z^n, x), Y) = (W_{PS(P_0)}^\top \phi(x) - W^\top \phi(x) - E)^2 \triangleq H^2.$$
Note that
\begin{align*}
    \EX^{z^n x}&[H^2]\\ &= \phi^\top(x) \Big(\EX^{z^n x}\big[(W_{\text{PS}}-W)(W_{\text{PS}}-W)^\top\big]\Big)\phi(x) + \sigma_e^2\\
    &= \phi^\top(x) \Big( \Sigma_W^{z^n } + \Sigma_W^{z^n } \Big)\phi(x) + \sigma_e^2.
\end{align*}
Thus, the variance of $H$ can be upper bounded as
\begin{align*}
    \mathrm{Var}^{z^n x}(H) &\leq \EX^{z^n x}[H^2]\\& \leq 2 \sigma_{\max}\, (\Sigma_W^{z^n}) ||\phi(x)||_2^2 + \sigma_e^2\\
    &\leq 2\sigma_w^2 \, ||\phi(x)||^2 + \sigma_e^2\\
    &\leq 2\sigma_w^2 + \sigma_e^2.
\end{align*}
The random variable $H$ is normal; thus, it can be seen as a $(2\sigma_w^2 + \sigma_e^2)$-subgaussian random variable. This implies that $$L = \ell(A_{\text{PS}}(z^n, x), Y)$$ is $(16(2\sigma_w^2+\sigma_e^2), 16(2\sigma_w^2+\sigma_e^2))$-subexponential for all $x, z^n$; i.e., for all $|\lambda| \leq \frac{1}{32\sigma_w^2 + 16 \sigma_e^2}$,
\begin{align}
    \label{eq:sub-exp}
    \log\left(\EX [e^{-\lambda (H^2 - \mathbb{E}(H^2))}]\right) \leq 16\lambda^2 \sigma_w^2 + 8\lambda^2 \sigma_e^2.
\end{align}
Thus, Theorem \ref{thm:posterior_Samlping} can be used with $$b = \frac{1}{32\sigma_w^2 + 16 \sigma_e^2}$$ and $\varphi(\lambda) = 16\lambda^2 \sigma_w^2 + 8\lambda^2 \sigma_e^2$. The Legendre dual can be calculated as
\begin{equation*}
    \varphi^*(\gamma) = 
    \begin{cases}
    \frac{\gamma^2}{64\sigma_w^2 + 32 \sigma_e^2} & \gamma \leq 1\\
    \frac{\gamma}{32\sigma_w^2 + 16\sigma_e^2} - \frac{1}{64\sigma_w^2 + 32\sigma_e^2} & \gamma \geq 1
    \end{cases}
\end{equation*}
which has an inverse function equal to
\begin{equation}
    \label{eq:phi_star_inverse}
    \varphi^{*-1}(\gamma) = 
    \begin{cases}
    \sqrt{(64\sigma_w^2 + 32\sigma_e^2) \gamma} & \gamma \leq \frac{1}{64\sigma_w^2 + 32\sigma_e^2}\\
    (32\sigma_w^2 + 16\sigma_e^2) \gamma + \frac{1}{2} & \gamma \geq \frac{1}{64\sigma_w^2 + 32\sigma_e^2}
    \end{cases}
\end{equation}
Hence, by the application of Theorem \ref{thm:posterior_Samlping}, we have
\begin{align*}
\sup_{P_W\in\mathcal{M}_W} &\EX_{W\sim P_W} [e(\alg_{\text{PS}},W)] \\
&\le \sup_{P_W\in\mathcal{M}_W} \varphi^{*-1} \Big(\frac{1}{n}\Big(  I(W;Z^n) + \frac{c}{2\sigma_w^2}\Big)\Big).
\end{align*}
Based on Theorem \ref{RLS:MI}, for any $P_W$ (i.e., any choice of $\mu$) as $n\to \infty$, we have $I(W;Z^n) = O(d\log(n))$ which proves the Theorem.
\end{proof}

\IEEEpeerreviewmaketitle

\section{Introduction}
\section*{Acknowledgment}
The authors would like to thank the anonymous reviewers for their valuable comments. 

\ifCLASSOPTIONcaptionsoff
  \newpage
\fi

\bibliographystyle{IEEEtran}
\bibliography{main.bib}

\vfill\eject

\begin{IEEEbiographynophoto}{Hassan Hafez-Kolahi}
received the B.Sc. degree in Computer Engineering from Ferdowsi University, Mashhad, Iran in 2011. He received his M.Sc.  degree in 2013, and his  Ph.D. degree in 2022 from Sharif University of Technology, Tehran, Iran. His research interests are in learning theory,  information theory and interactive proof systems.\end{IEEEbiographynophoto}

\begin{IEEEbiographynophoto}{Behrad Moniri} received the B.Sc. degree in electrical engineering from Sharif University of Technology, Tehran, Iran in 2020 with highest distinctions.  He is currently pursuing the  Ph.D. degree in electrical and systems engineering with the University of Pennsylvania, Philadelphia, PA, USA.  His research interests include deep learning theory, high-dimensional asymptotics, and information theory. 
\end{IEEEbiographynophoto}

\begin{IEEEbiographynophoto}{Shohreh Kasaei} (M’05–SM’07) received the B.Sc. degree
from the Department of Electrical and Computer
Engineering (ECE), Isfahan University of Technology, Iran, in 1986. She then received the M.Sc. degree from Graduate School of Engineering and Science, Department of
Electrical and Electronics Engineering, University of the
Ryukyus, Japan, in 1994, and the Ph.D. degree from Signal
Processing Research Centre, School of Electrical
Engineering and Computer Science (EECS), Queensland
University of Technology (QUT), Australia, in 1998.
She was awarded as the best graduate student in engineering faculties of University of the Ryukyus, in 1994, the best Ph.D. student studied in overseas by the ministry of Science, Research, and Technology of Iran, in 1998, and as a distinguished researcher of Sharif University of Technology , in 2002 and 2010, where she is currently a full professor. She is the director of Image Processing Lab (IPL) since 1999. Her research interests are in Machine Learning (Computer Vision and Image/Video Processing) with primary emphasis on: 3D deep visual tracking, 3D semantic segmentation, 3D adversarial attack and defense, point cloud classification, adversarial knowledge distillation, autonomous driving cars, virtual/augmented
reality, dynamic 3D pose estimation, dynamic 3D action recognition, multi-resolution texture analysis, scalable video coding, image retrieval, video indexing, face recognition, hyperspectral change detection, video restoration, and fingerprint authentication.
\end{IEEEbiographynophoto}

\vfill

\end{document}